\documentclass[dvipsnames,letterpaper,11pt]{article}

\usepackage[margin=1in]{geometry}
\usepackage[utf8]{inputenc}
\usepackage[T1]{fontenc}
\usepackage{microtype}

\usepackage{amsmath}
\usepackage{amsfonts}
\usepackage{amssymb}
\usepackage{amsthm}
\usepackage{mathtools}
\usepackage{nicefrac}
\usepackage{dsfont}

\usepackage{booktabs}
\usepackage{graphicx}
\usepackage{caption}
\usepackage{subcaption}
\usepackage{float}

\usepackage[ruled]{algorithm2e}

\SetAlFnt{\small}
\SetAlCapFnt{\small}
\SetAlCapNameFnt{\small}
\SetAlCapHSkip{0pt}
\IncMargin{-\parindent}

\usepackage{tikz}
\usepackage{tikz-network}
\usetikzlibrary{patterns,positioning}
\usetikzlibrary{arrows}
\usetikzlibrary{arrows.meta,decorations.markings}
\usetikzlibrary{patterns.meta}

\usepackage{pgfplots}
\usepackage{pgfplotstable}
\usepgfplotslibrary{external}
\pgfplotsset{compat=1.18}

\usepackage[inline]{enumitem}
\usepackage{comment}
\usepackage{cancel}
\usepackage{xcolor}
\usepackage{soul}
\usepackage{xfp}
\usepackage{xspace}
\usepackage{todonotes}

\definecolor{blue}{HTML}{34495E}
\definecolor{red}{HTML}{C0392B}
\definecolor{green}{HTML}{F1C40F}
\definecolor{grey}{HTML}{BDC3C7}

\usepackage[longnamesfirst,sort,numbers]{natbib}
\setcitestyle{numbers,comma,open={[},close={]}}

\definecolor{darkred}{RGB}{150, 0, 0}
\definecolor{darkgreen}{RGB}{0, 150, 0}
\definecolor{darkblue}{RGB}{0, 0, 150}
\usepackage[breaklinks=true,colorlinks,citecolor=darkgreen,linkcolor=darkred,urlcolor=darkblue,bookmarks=false]{hyperref}

\usepackage{cleveref}

\newcommand{\email}[1]{\protect\href{mailto:#1}{#1}}
\newcommand{\doi}[1]{%
  \href{https://doi.org/\detokenize{#1}}%
  {\url{https://doi.org/\detokenize{#1}}}%
}

\usepackage{thmtools}
\usepackage{thm-restate}

\newcommand{\reals}{\mathbb{R}}
\newcommand{\integers}{\mathbb{Z}}

\newcommand{\etal}{et~al.\@\xspace}

\newcommand{\id}{\ensuremath{\mathrm{id}}}

\newcommand{\bigtheta}{\ensuremath{\Theta}}

\newcommand{\prob}{\probability}

\newcommand{\transpose}{\intercal}

\newcommand{\symmetricdifference}{\mathrel{\triangle}}

\newcommand{\R}{\reals}
\newcommand{\tr}{\operatorname{tr}}
\newcommand{\norm}[1]{\lVert #1\rVert}
\newcommand{\eps}{\varepsilon}

\newcommand{\N}{\mathcal{N}}
\newcommand{\cP}{\mathcal{P}}
\newcommand{\cF}{\mathcal{F}}

\newcommand{\cS}{\mathcal{S}}
\newcommand{\cG}{\mathcal{G}}
\newcommand{\cA}{\mathcal{A}}
\newcommand{\cB}{\mathcal{B}}
\newcommand{\cuben}{\{-1,1\}^n}
\newcommand{\cube}{\{-1,1\}}

\DeclareMathOperator*{\argmin}{arg\,min}

\DeclareMathOperator{\expectation}{\mathbb{E}}
\DeclareMathOperator{\probability}{\mathbb{P}}
\DeclareMathOperator{\indicator}{\mathds{1}}

\DeclareMathOperator{\Unif}{Unif}

\DeclareMathOperator{\supp}{supp}

\numberwithin{equation}{section}

\theoremstyle{plain}
\newtheorem{theorem}{Theorem}[section]

\newtheorem{lemma}[theorem]{Lemma}
\newtheorem{proposition}[theorem]{Proposition}

\theoremstyle{definition}

\theoremstyle{remark}

\crefname{claim}{Claim}{Claims}
\crefname{obs}{Observation}{Observations}
\crefname{prop}{Proposition}{Propositions}
\crefname{thm}{Theorem}{Theorems}

\begin{document}

\title{Tight~$L_\infty$ Sample Complexity for Low-Degree and Sparse\\ Boolean Polynomials}

\author{
Jasper van Doornmalen\thanks{
Institute for Mathematical and Computational Engineering,
Pontificia Universidad Católica de Chile, Chile;
\email{mvandoornmalen@uc.cl}.
}
\and
Mathieu Molina\thanks{
Blavatnik School of Computer Science and AI,
Tel Aviv University, Israel;
\email{mathieum@tauex.tau.ac.il}.
}
\and
Victor Verdugo\thanks{
Institute for Mathematical and Computational Engineering, and Department of Industrial and Systems Engineering,
Pontificia Universidad Católica de Chile, Chile;
\email{victor.verdugo@uc.cl}.
}
\and
José Verschae\thanks{
Institute for Mathematical and Computational Engineering, 
Pontificia Universidad Católica de Chile, Chile;
\email{jverscha@uc.cl}.
}
}

\date{}


\maketitle

\begin{abstract}
\noindent
Motivated by the optimization of bounded binary black-box functions, we study the problem of learning polynomial surrogates over the Boolean hypercube. To ensure that optimizing the surrogate yields good solutions for the underlying objective, we require uniform $L_\infty$-error guarantees rather than the usual $L_2$-type guarantees. We characterize the minimax sample complexity of uniform estimation under subgaussian noise for two classes of bounded polynomials. First, for polynomials of degree at most $d$ on $n$ variables, the sample complexity scales as $n^{d+1}$. Second, for $s$-sparse Fourier--Walsh polynomials with $s \leq n$, it scales as $ns^2$. These rates differ structurally from the noiseless setting, where uniform exact recovery scales as $n^d$ and $ns$, respectively. Our lower bounds hold even for arbitrary adaptive learners, showing that the additional factors are intrinsic to the noisy cases. Standard Fourier-analysis tools for the $L_2$-norm do not naturally extend to the $L_\infty$-setting in a way that yields uniform guarantees. 
Our proofs overcome this difficulty by relying on suitably chosen auxiliary norms that serve as proxies for controlling the $L_\infty$-error.
Together, our results provide a tight characterization of the sample complexity of learning optimization-safe polynomial surrogates.
\end{abstract}

\section{Introduction}


Many real-world binary decision problems can be modeled as optimizing some black-box objective function~$f \colon \{-1, 1\}^n \to [-1, 1]$.
Evaluating this function might be noisy and costly, meaning that direct optimization by repeated evaluations is a practically intractable strategy.
This is particularly the case when the objective function can be evaluated only via noisy simulations, which may require substantial time. Examples include modeling the uncertain structural defects in materials~\cite{dadkhahi2020combinatorial}, seismic retrofitting and infrastructure planning under earthquake risks~\cite{jimenez2023earthquakes}, or planning of antenna drone missions to restore wireless communications in emergency situations~\cite{pijnappel2024drones}.

Any Boolean function~$f \colon \{-1, 1\}^n \to \reals$ can be expressed as a polynomial of degree~$n$ via the Fourier--Walsh expansion. If we do not impose restrictions on the class of functions, then approximately optimizing $f$ is impossible with fewer than~$2^n$ samples: Since the value at every point of the function can be arbitrary, every point needs to be observed to rule out that it is the optimum. For this reason, we study two common restricted cases: polynomials of maximum degree~$d$, and sparse polynomials, that is, those with at most~$s$ terms in the Fourier--Walsh expansion.
A natural way to optimize over such systems is to first learn a polynomial surrogate~$\smash[t]{\widehat f} \colon \{-1, 1\}^n \to \reals$ that approximates~$f$, and then to optimize over~$\smash[t]{\widehat f}$ to obtain a solution using the extensive toolbox for optimizing pseudo-Boolean functions, cf.\@~\citet[Section~13.4]{crama2011bookbooleanfunctions}. 

Most literature on statistical learning focuses on the setting where this surrogate is good in expectation, i.e., where~$\| f - \smash[t]{\widehat f} \|_{L_2} = \mathbb{E}_{X \sim \{-1, 1\}^n}[|f(X)-\smash[t]{\widehat f}(X)|^2]^{\smash{\nicefrac{1}{2}}}
$ is bounded 
with high probability.
Recently, in a breakthrough result, Eskenazis and Ivanisvili~\cite{eskenazis2022lowdegree} showed that, for constant maximal degree, the number of samples required for a high-probability bound is logarithmic in the number of variables.
However, optimization on surrogates~$\smash[t]{\widehat f}$ with low~$L_2$ error may exploit larger errors in specific regions of the domain, rendering the approach useless.%
\footnote{
For a constant~$\varepsilon > 0$, let~$f(x) = (1/n) \sum_{i=1}^{n} x_i$  and~$\smash[t]{\widehat f}(x) = f(x) - (\varepsilon/\sqrt{n}) \sum_{i=1}^{n} x_i$ where~$\| f - \smash[t]{\widehat f}\|_{L_2} = \varepsilon$.
For large $n$, we have that $\arg\max_{x\in \{-1,1\}^n} {\widehat f}(x)=\smash[t]{\widehat x} = (-1, \dots, -1)$, which yields the worst-possible outcome~$f(\widehat x) = -1$.
}
We thus require that the surrogate error is uniformly bounded over the entire hypercube, $\|f-\widehat f\|_{L_\infty} \leq \varepsilon$, so that the value $f(\widehat x)$ of any optimal solution $\widehat x$ for $\smash[t]{\widehat f}$ is no more than $2\varepsilon$ away from the optimal value for~$f$. Hence, we focus on high-probability bounds estimating~$f$ within an~$\varepsilon$ tolerance with respect to the $L_\infty$ norm. This is particularly desirable when~$f$ must be optimized multiple times subject to different constraints. 

\paragraph{Our contributions.} \label{par:contributions}
We study two classes of binary functions~$f \colon \{-1, 1\}^n \to [-1, 1]$:
\begin{enumerate*}[label={\itshape(\roman*)}, ref={\itshape(\roman*)}]
    \item $\mathcal{P}_{n, d}$, the class of low-degree polynomials of maximal degree~$d$, and
    \item $\mathcal{S}_{n, s}$, the class of~$s$-sparse polynomials, i.e.\@ where the functions have at most~$s$ terms.
\end{enumerate*}
These functions can be queried under additive independent and identically distributed $\sigma$-subgaussian noise. 
We characterize the sample complexity of recovering \(f\) uniformly over the Boolean hypercube, up to additive error \(\varepsilon\) at every point, with probability at least \(1-\delta\).
Crucially, our bounds are tight for constant degree~$d$ and noise bounded from below. 
Altogether, our results characterize the sample complexity of learning optimization-safe surrogates from noisy observations. We also characterize the $L_\infty$ sample complexity of those two classes up to polylog factors in the noiseless setting.

More precisely, in the case with noise ($\sigma \geq  \sigma_0>0$), we show for the low-degree setting~$\mathcal{P}_{n, d}$ for constant~$d$, and for the highly sparse setting~$\mathcal{S}_{n, s}$ for~$1 \leq s \leq n$, that the sample complexity is
\begin{align*}
m^*(\mathcal{P}_{n, d}, \varepsilon, \delta) = \bigtheta_d \big( \frac{\sigma^2}{\eps^2} n^d(n + \log(\frac{1}{\delta}))\big)
\ \text{and}\ 
m^*(\mathcal{S}_{n, s}, \varepsilon, \delta) = \bigtheta\big( \frac{\sigma^2}{\eps^2}(n s^2 + s \log (\frac{1}{\delta}))\big).
\end{align*}
In the noiseless cases ($\sigma = 0$), these figures are respectively~$\bigtheta_d(n^{d})$ and~$\tilde{\bigtheta}(sn)$.
For the low-degree case, we prove a probabilistic lower bound on the injective norm associated with the $\Vert \cdot \Vert_\infty$ norm of a Gaussian tensor, which yields a Bayes risk lower bound combined with the Bohnenblust--Hille inequality~\cite{defant2018fourier,bohnenblust1931bound}. For the sparse case, we obtain a sharp dependency on the high confidence term $\log(1/\delta)$ for the upper bound, controlling the $\Vert \cdot \Vert_1$ coefficient norm via the top-$s$ norm, which obtains a better rate than a direct application of classical Lasso results for the $\Vert \cdot \Vert_2$ norm (see e.g. \cite[Chapter 7]{Wainwright_2019}). For the lower bound, we exhibit a hard family of polynomials for which the $L_\infty$ norm is equivalent to the Hamming distance on the hypercube, which, together with Fano's inequality, achieves the claimed rate.

\paragraph{Related work.}
The Fourier--Walsh representation for functions over Boolean domains is a commonly used approach. This technique is used in the papers by Linial \etal~\cite{linial1993fourier} and Eskenazis and Ivanisvili~\cite{eskenazis2022lowdegree}. 
The latter show that, for constant degree, a number of uniformly random samples logarithmic in~$n$ suffices to learn low-degree functions on the hypercube with high probability in the~$L_2$ norm, and this is later proven to be tight~\cite{eskenazis2024metricentropy}. In spirit, both works rely on identifying (some) terms of the degree-truncated Fourier--Walsh representation via sampling. Eskenazis and Ivanisvili refine this approach by distinguishing between large and small coefficients, and applying Bohnenblust--Hille-type hypercontractive inequalities~\cite{bohnenblust1931bound,defant2018fourier} to the large coefficients, which enables their logarithmic sample complexity bound. 
Various authors studied learning in the low-degree setting~\cite{andoni2014lowdegree}, and in the sparse setting~\cite{andoni2014sparse,negahban20212,stobbe2012,amrollahi2019} on the hypercube, even under a noise model.
Our work is also related to high-dimensional minimax estimation and sparse regression~\cite{tsybakov2008introduction,Wainwright_2019}. In the Walsh basis, the sparse case is a special case of a sparse linear model in dimension \(2^n\), connecting it to the Lasso, Dantzig selector, and compressed sensing literature~\cite{tibshirani1996,Candes2007,bickel2009}. The connection is not direct, however: Walsh--Hadamard rows are not generic random designs, and our target loss is uniform \(L_\infty\) recovery rather than prediction or coefficient error.

Regarding optimization, once a polynomial representation of a Boolean function is present, one can apply the toolbox of pseudo-Boolean optimization and mixed-integer nonlinear optimization. Among others, this includes linearization, convexification and branch-and-bound methods. We refer to Crama and Hammer~\cite[Sec.\@~13.4]{crama2011bookbooleanfunctions} for an overview of pseudoboolean optimization techniques, and Lee and Leyffer~\cite{lee2012minlp} for wider mixed-integer nonlinear optimization methods.
Dadkhahi \etal~\cite{dadkhahi2020combinatorial}, Dadkhahi \etal~\cite{dadkhahi2022blackbox} and Derbel \etal~\cite{derbel2023pseudobooleansurrogate} show promising empirical results in applying polynomial surrogates for optimization: The first two papers learn a surrogate first and then optimize (which is closer to our goal), and the third paper heuristically reinforces Fourier--Walsh-based surrogates within the optimization. In all cases, these authors provide promising empirical results, but theoretical guarantees on optimality remain unanswered, which we address in this paper.

Bayesian Optimization (BO)~\cite{shahriari2016bayesianoptimizationsurvey,eriksson2021bayesian} is another surrogate-based method for black-box optimization that aims to identify a maximizer by minimizing regret. Typically, BO is designed to use a small number of evaluations and guarantees sublinear bounds on cumulative regret, and therefore does not require the surrogate to be uniformly accurate across the entire domain. Bayesian Optimization of Combinatorial Structures (BOCS), surveyed by Baptista~\cite{baptista2018bocs}, and in particular the COMBO framework~\cite{oh2019bayesiancombo} extend BO to binary domains. In particular, BOCS models the objective using low-degree (typically quadratic) polynomial surrogates, reflecting the common focus on fixed, small degrees in the literature. BOCS iteratively selects the next evaluation points by solving a combinatorial optimization problem. 
In contrast, we study conditions under which the surrogate itself is uniformly accurate, enabling the use of the same surrogate across multiple optimization tasks or under initially unknown constraints. 
No prior work provides tight sample complexity guarantees for $L_\infty$-accurate learning of these function classes under noise.


\vspace{-0.2cm}

\section{Model}\label{sec:model}

\vspace{-0.3cm}

We now fix the notation and the precise query model used in the low-degree and sparse analyses.  The model is deliberately stated for arbitrary adaptive learners: lower bounds therefore rule out any strategy that chooses queries based on previous noisy evaluations, while the upper bounds we give later are nonadaptive.

\vspace{-0.2cm}

\paragraph{Fourier--Walsh representation.} We consider bounded functions $f:\cuben\to[-1,1]$ on the $n$-dimensional Boolean hypercube. For $1\le p<\infty$, we use the normalized Boolean $L_p$ norm $\norm{f}_{L_p}\coloneqq\left(\expectation[\vert f(X)\vert^p]\right)^{1/p}$, where $X\sim\mathrm{Unif}(\cuben)$. We also define the $L_\infty$ norm as $\norm{f}_{L_\infty}\coloneqq\max_{x\in\cuben}\vert f(x)\vert$. 
For $S\subseteq[n]$, we denote the associated Walsh function as $\chi_S(x)\coloneqq\prod_{i\in S}x_i$ for all $x \in \cuben$, with $\chi_\emptyset=1$. The family $\chi=(\chi_S)_{S\subseteq[n]}$ is an orthonormal basis of $L_2(\cuben)$ under the uniform measure, as $\expectation[\chi_S(X)\chi_T(X)]=\mathds{1}\{S=T\}$. Hence, every function $f:\cuben\to [-1,1]$ admits a unique Fourier--Walsh expansion:
$f(x)={\sum\nolimits_{S\subseteq[n]}\theta_S(f)\chi_S(x)}$ for~$x\in\cuben$,
with~$\theta_S(f)\coloneqq\expectation[f(X)\chi_S(X)]$ as the Fourier coefficients~\citep{odonnell2014analysisbooleanfunctions}. We denote the coefficients vector by~$\theta(f)\coloneqq(\theta_S(f))_{S\subseteq[n]}$. 

Conversely, for a vector~$\theta\in\R^{2^n}$, we write $f_\theta$ for the function which has coefficients $\theta$ on the Walsh basis. When the meaning is clear from the context, we suppress this dependence and simply write $f=f_\theta$ and $\theta=\theta(f)$. It will be useful to view evaluations on a point $x \in \cuben$ as linear measurements in coefficient space $f_\theta(x)= \langle \theta , \chi(x) \rangle$, where $\chi(x)=(\chi_S(x))_{S \subseteq [n]}$ is the evaluation vector on each basis function. Thus, learning a general function on the cube can be written as a special case of a linear regression problem in dimension $2^n$. However, one crucial distinction is that the available design vectors are not arbitrary elements of $\cube^{2^n}$, but Walsh--Hadamard rows $\chi(x)$ induced by points $x\in\cuben$.

\vspace{-0.2cm}

\paragraph{Structured class of Boolean functions.} 
For $d\in \{0,1,\ldots,n\}$, we define~$\mathcal{P}_{n, d}$ as the class of low-degree polynomials with degree at most $d$, and for any non-negative~$s \in \integers_{\geq 0}$, we define~$\mathcal{S}_{n, s}$ as the class of sparse polynomials with at most $s$ non-zero Fourier coefficients. More precisely, we have
\begin{align}
 \cP_{n,d} &\coloneqq 
 \Big\{ 
    \smash{f= \sum\nolimits_{S\subseteq [n]:\vert S \vert \leq d} \theta_S \chi_S : \norm f_{L_\infty} \leq 1}
 \Big\}, \quad \text{and}
 \\
\cS_{n,s} &\coloneqq 
\Big\{ 
    \smash{
    f= \sum\nolimits_{S \subseteq [n]} \theta_S \chi_S 
    :
    \big|\{ S \subseteq [n] : \theta_S \neq 0 \}\big|
    \leq s, \norm f_{L_\infty} \leq 1
    }
    \Big\}
    .
\end{align}
In the low-degree setting, given the boundedness on the degree~$d$ of the polynomial, there are at most~$D \coloneqq \sum_{k=0}^d \binom{n}{k} \approx n^d/d!$ non-zero coefficients. 

%

\paragraph{Adaptive noisy query model.} 

We aim to recover an unknown target function $f$ from $m$ noisy query evaluations, where each query point may be chosen adaptively. Formally, at each round $t\in[m]$, the learner chooses a query point $X_t\in\cuben$, possibly depending on $\mathcal{H}_t=(X_1,Y_1,\ldots,X_{t-1},Y_{t-1})$ the previous observations up to time $t$, and observes~%
$
Y_t=f(X_t)+\xi_t
$,
where the $\xi_t$ are independent centered $\sigma$-subgaussian random variables, i.e. $\expectation[\xi_t]=0$ and $\prob(\vert \xi_t \vert \geq u) \leq 2 \exp(-u^2/2\sigma^2)$. Thus, after $m$ queries, the learner's output $\smash[t]{\widehat f}$ is a function of the full transcript $\mathcal H_m$, while the transcript itself is generated by adaptive query rules. We denote by $\smash[t]{\widehat \theta}=\theta(\smash[t]{\widehat f})$.

For an approximation error $\eps>0$ and a failure probability $\delta\in(0,1)$,
an adaptive learning algorithm succeeds on a class $\mathcal F$ with $m$ queries
if, for every target function $f\in\mathcal F$, it outputs a hypothesis
$\smash[t]{\widehat f}$ satisfying $\prob_f(\Vert f - \smash[t]{\widehat f} \Vert_{L_\infty}>\eps)\le \delta$, where $\prob_f$ highlights that the data $Y_t$ is generated via $f$. We define the adaptive sample complexity of
$\mathcal F$ by
\begin{equation*}
m^*(\mathcal F,\eps,\delta)
\coloneqq
\inf\big\{
m\ge 0:
\exists\ \text{$m$-query adaptive learner,}\ \sup\nolimits_{f\in\mathcal F, \xi \, \sigma\text{-SG}}
\prob\nolimits_f(
\Vert f-\smash[t]{\widehat f}\Vert_{L_\infty} \! >\eps
)
\le \delta
\big\}.
\end{equation*}

\paragraph{On the specificity of the $L_\infty$ norm.} 
\label{par:specificity}
Most existing guarantees for learning low-degree or sparse Boolean functions are formulated in $L_2$, often in the noiseless or random-query setting; see, e.g., \citep{odonnell2014analysisbooleanfunctions,linial1993fourier,eskenazis2022lowdegree,andoni2014lowdegree,andoni2014sparse}. In the $L_2$ norm case, due to Parseval's identity, $\Vert f - \smash[t]{\widehat f} \Vert_{L_2}= \Vert \theta - \smash[t]{\widehat \theta} \Vert_2$. Thus, controlling prediction error in $L_2$ is exactly the same as controlling coefficient error in $\Vert \cdot \Vert_2$. Moreover, for degree-at-most-$d$ polynomials, Bonami--Beckner hypercontractivity~\cite{Bonami70,Beckner75,odonnell2014analysisbooleanfunctions} implies that $\norm{f}_{L_p}\le (p-1)^{d/2}\norm{f}_{L_2}$ for every finite $p\ge 2$. Consequently, if both $f$ and $\smash[t]{\widehat f}$ have degree at most $d$, then an $L_2$ guarantee for $f- \smash[t]{\widehat f}$ automatically yields an $L_p$ guarantee for every finite $p$, with a loss independent of the ambient dimension $n$. This route fundamentally breaks at $p=\infty$: there is no such dimension-free inequality for the $L_\infty$ norm, even for degree-one functions. Similarly, the norm inequality $\Vert f - \smash[t]{\widehat f} \Vert_{L_\infty}= \sup_{x \in \cuben} \vert \langle \theta - \smash[t]{\widehat \theta}, \chi(x) \rangle \vert \leq \Vert  \theta - \smash[t]{\widehat \theta}\Vert_1$, shows that an upper bound on the $\Vert \cdot \Vert_1$ Fourier coefficient error yields an upper bound in the $L_\infty$ norm. However, equality does not hold in general. This inequality should not be confused with the usual duality between $\Vert \cdot \Vert_1$ and $\| \cdot \|_\infty$ in~$\R^{2^n}$. Indeed, in the Walsh setting, the supremum is taken only over the Walsh--Hadamard vectors $\{\chi(x):x\in\cuben\}$, rather than over all sign vectors in $\{-1,1\}^{2^{[n]}}$. Equivalently, the $L_\infty$ loss is the support function of a highly structured subset of the full sign cube. In particular, for $s$-sparse functions, we have $\Vert \theta - \smash[t]{\widehat \theta} \Vert_1 \leq \sqrt{2s} \Vert f - \smash[t]{\widehat f} \Vert_{L_\infty}$ using Cauchy--Schwarz and Parseval,\footnote{Let $\mathcal{A} \subseteq 2^{[n]}$ be the sparse support, $\Vert \theta \Vert_1=\sum_{S \in \mathcal{A}} \vert \theta_S\vert \leq \sqrt{\sum_{S \in \mathcal{A}} 1} \sqrt{\sum_{S \in \mathcal{A}} \theta_S^2}=\sqrt{\vert \mathcal{A} \vert} \Vert f \Vert_{L_2} \leq \sqrt{s} \Vert f \Vert_{L_{\infty}}$.} and this inequality is tight for all error functions which are bent polynomials~\cite[Sec.\@~6.3]{odonnell2014analysisbooleanfunctions}. These observations explain why the $L_\infty$ problem requires significantly different techniques relative to the existing $L_2$ theory. Sharp sample-complexity rates require auxiliary norms and constructions tailored to the specific $L_\infty$ geometry, particularly for lower bounds, rather than black-box applications of standard $L_2$ or generic sparse linear-regression guarantees.

\paragraph{Noiseless recovery.}
Before turning to noisy observations, let us first provide the sample complexity when $\sigma=0$ as a comparison point. We write $m^{(0)}(\mathcal F,\eps,\delta)$ for the corresponding noiseless sample complexity. This benchmark already separates the $L_\infty$ problem from the usual exact $L_2$ learning setting: for bounded low-degree functions, prior work shows that logarithmically many random queries can suffice for approximate recovery in $L_2$ \citep{eskenazis2022lowdegree}, whereas uniform recovery requires interpolation of the whole degree-$d$ space. More precisely, for deterministic algorithms we have a sample complexity of exactly $D$, and for randomized algorithms  $D/2\leq m^{(0)}(\cP_{n,d},\eps,\delta)\leq D =\Theta_d(n^d)$, which is a strictly smaller order compared to $n^{d+1}$. See \Cref{app:noiseless-recovery-low} for the proof. For the noiseless sparse case, there exist universal constants
$c,C>0$ such that $c\,sn\, \smash{\log^{-1}(2s)}
\le
m^{\smash{(0)}}(\cS_{n,s},\eps,\delta)
\le
C\,sn\, \smash{\log^2(2s)}$.
See \Cref{app:noiseless-recovery-sparse} for the proof. Thus, noiseless sparse recovery has complexity ${\widetilde\Theta(sn)}$. 

The rest of the paper studies the noisy regime. We prove that additive noise genuinely changes the scaling of uniform recovery. For fixed degree $d$, the low-degree stochastic term grows from the noiseless scale $n^d$ to order $n^{d+1}$. For sparse polynomials, the corresponding term grows from the noiseless scale $\smash{\widetilde\Theta(sn)}$ to $\Theta(ns^2)$. The lower bounds show that these additional factors arise from a statistical barrier; no algorithm can remove them, adaptive or not.

\section{Low-degree Walsh polynomials}\label{sec:low-degree}

In this section, we focus on the sample-complexity bounds for low-degree polynomials evaluated with noise. Let $B_d$ denote the degree-$d$ Boolean Bohnenblust--Hille constant~\cite{defant2018fourier,bohnenblust1931bound}, namely the
smallest constant such that every degree-at-most-$d$ Walsh polynomial $f$
satisfies~%
$
\norm{\theta(f)}_{\frac{2d}{d+1}}\le B_d\norm{f}_{L_\infty}
$.
This constant shows that the norm of the coefficients can be controlled by the $L_\infty$ norm up to a constant that depends only on $d$, and not on the dimension. While the exact dependence of $B_d$ on $d$ is unknown, the best current upper bound is $B_d \leq \exp(O(\sqrt{d\log(d)})$ \cite{defant2018fourier}. 
Recalling that~$D = \bigtheta(n^d)$ is the size of the monomial basis in $\mathcal{P}_{n,d}$, we prove the following tight sample complexity bounds.

\begin{theorem}[Noisy low-degree sample complexity]\label{thm:low-degree-main}
There exist universal constants $c,C>0$, such that for every $n\ge d\ge1$,\ \  $0<\delta\le1/4$,\ \ $0<\eps\le c/(B_d\sqrt d)$, and centered~$\sigma$-subgaussian noise:
\begin{equation}\label{eq:low-degree-upper}
\frac{c \sigma^2}{d \eps^2 (2e)^d B_d^2}
D\bigl(n+\log\big(\frac{1}{\delta}\big)\bigr)
\le
m^*(\cP_{n,d},\eps,\delta)
\le
C\left[
D\log(2D)
+
\frac{\sigma^2}{\eps^2}D\bigl(n+\log\big(\frac{1}{\delta}\big)\bigr)
\right].
\end{equation}
For fixed $d$, this implies that $m^*(\cP_{n,d},\eps,\delta)=\Theta_d(\sigma^2n^d(n+\log(1/\delta)) / \eps^2)$.
\end{theorem}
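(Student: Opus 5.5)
The proof has two independent halves: a nonadaptive least-squares upper bound, and a Bayesian lower bound that holds for adaptive learners and uses Gaussian noise. Gaussian noise is admissible because it is $\sigma$-subgaussian. Throughout, write $\chi_{\le d}(x)=(\chi_S(x))_{|S|\le d}\in\{-1,1\}^D$. Every $f\in\cP_{n,d}$ satisfies $f(x)=\langle\theta,\chi_{\le d}(x)\rangle$ with $\theta\in\R^D$.

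\emph{Upper bound.} Draw $m$ query points i.i.d.\ uniformly from $\cuben$ and stack the rows $\chi_{\le d}(X_t)$ into a design matrix $\Phi\in\R^{m\times D}$. Since $\expectation[\chi_{\le d}(X)\chi_{\le d}(X)^\transpose]=I_D$ and each row satisfies $\norm{\chi_{\le d}(X)}_2^2=D$, matrix Chernoff should give $\Phi^\transpose\Phi\succeq\frac m2 I_D$ with probability $1-\delta/2$ once $m\ge C D\log(2D)$. If the $\log(1/\delta)$ dependence here is a problem, I will instead boost the constant, or use a fixed design: the full $2^n$-point cube repeated $\lceil m/2^n\rceil$ times is not affordable, so the fallback is a deterministic $D$-point interpolation set of the kind used in \Cref{app:noiseless-recovery-low}, repeated. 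The estimator is ordinary least squares, $\widehat\theta=(\Phi^\transpose\Phi)^{-1}\Phi^\transpose Y$, so that $\widehat\theta-\theta=(\Phi^\transpose\Phi)^{-1}\Phi^\transpose\xi$. Conditionally on the design, for each fixed $x$ the quantity $(f-\widehat f)(x)=\langle\chi_{\le d}(x),\widehat\theta-\theta\rangle$ is subgaussian with variance proxy
\[
\sigma^2\,\chi_{\le d}(x)^\transpose(\Phi^\transpose\Phi)^{-1}\chi_{\le d}(x)\le \frac{2\sigma^2 D}{m}.
\]
A union bound over the $2^n$ points gives $\norm{f-\widehat f}_{L_\infty}\le\eps$ with probability $1-\delta/2$ as soon as $m\ge C\sigma^2D(n+\log(1/\delta))/\eps^2$. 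Combining the two requirements yields the right-hand side of \eqref{eq:low-degree-upper}.

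\emph{Lower bound, the $Dn$ term.} Use a Gaussian prior $\theta_S\sim N(0,\tau^2)$ i.i.d.\ over the $\binom nd$ sets with $|S|=d$, and Gaussian noise $N(0,\sigma^2)$. Even for adaptive query rules, the posterior of $\theta$ given $\mathcal H_m$ is Gaussian with covariance
\[
\Sigma=\bigl(\tau^{-2}I+\sigma^{-2}\Phi^\transpose\Phi\bigr)^{-1},
\]
because the likelihood is Gaussian in $\theta$ and the query choices do not depend on $\theta$ except through past observations. Homogeneous degree $d$ gives $\tr(\Phi^\transpose\Phi)=m\binom nd$. By Markov's inequality, at least half of the coordinates therefore have posterior variance at least $v:=(\tau^{-2}+2m/\sigma^2)^{-1}$. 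By Anderson's lemma, the posterior-centered error $g=f-\widehat f$ is stochastically at least as large in $L_\infty$ as a centered Gaussian polynomial, and restricting to a subfamily $\mathcal A$ of $\ge\binom nd/2$ monomials, that polynomial has i.i.d.\ $N(0,v)$ coefficients.

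The main obstacle is the probabilistic injective-norm lower bound:
\[
\sup_{x\in\cuben}\Bigl|\sum_{S\in\mathcal A}g_S\chi_S(x)\Bigr|\ge c_d\sqrt{v\,Dn}\quad\text{with probability at least }1-\delta,
\]
with $c_d$ of order $((2e)^dd)^{-1/2}$. I would try Sudakov minoration first. It needs many points $x$ whose feature vectors $\chi_{\mathcal A}(x)$ are pairwise far apart in $\ell_2$, specifically $\gtrsim\sqrt{|\mathcal A|}$. I would take $2^{cn}$ points forming a code with pairwise Hamming distance $\asymp n$. For such $x,y$, the distance $\norm{\chi_{\mathcal A}(x)-\chi_{\mathcal A}(y)}_2^2$ counts the $S\in\mathcal A$ meeting the disagreement set an odd number of times, which is a constant fraction of $\binom nd$ up to $(2e)^{-d}$-type losses. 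The statement needs probability $1-\delta$ rather than merely in expectation, so I would convert the expectation bound using Gaussian concentration of the supremum.

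It remains to choose $\tau$ so that the prior lies in $\cP_{n,d}$, using an upper bound on the same supremum (Dudley, or a union bound): $\norm{f}_{L_\infty}\lesssim\tau\sqrt{Dn}\le 1$ with high probability, conditioning on this event at a small cost. Bohnenblust--Hille with constant $B_d$ relates the coefficient scale to $\eps$, which is where the hypothesis $\eps\le c/(B_d\sqrt d)$ enters. With this choice the failure event is forced unless $\sqrt{vDn}\lesssim\eps$, which gives $m\gtrsim\sigma^2Dn/\eps^2$.

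\emph{Lower bound, the $D\log(1/\delta)$ term.} I would use a two-point or few-point argument per monomial. Taking $f_\pm=\pm2\eps\chi_S$, the two hypotheses are separated by $4\eps$ in $L_\infty$, and the KL divergence between their transcripts is $\lesssim\eps^2 m_S/\sigma^2$, where $m_S$ is the expected effective number of queries aimed at $S$. Le Cam's bound then forces $\eps^2m_S/\sigma^2\gtrsim\log(1/\delta)$. The remaining difficulty is that a single query can inform many monomials at once. To charge queries to monomials, I would embed the construction in the Gaussian prior argument: replace the typical-fluctuation bound by a large-deviation bound, using that the supremum exceeds its mean by $\sqrt{2v\log(1/\delta)}$ with probability $\ge\delta$. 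This reuses the same posterior covariance, so summing over the $D$ directions gives $m\gtrsim\sigma^2D\log(1/\delta)/\eps^2$.
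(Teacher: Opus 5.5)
\textbf{Main gap: the $Dn$ lower bound.} Your Anderson's-lemma reduction to the posterior-centered Gaussian is fine; it is even slightly cleaner than the paper's two-draw diameter argument. The next step fails, and the whole $Dn$ lower bound rests on it.

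After restricting to $\mathcal A$, the posterior polynomial does \emph{not} have i.i.d.\ $\N(0,v)$ coefficients, because the posterior covariance $\Sigma=(\tau^{-2}I+\sigma^{-2}\Phi^\top\Phi)^{-1}$ is not diagonal. You do get $\Sigma_{SS}\ge 1/(\Sigma^{-1})_{SS}=(\tau^{-2}+m/\sigma^2)^{-1}$ for every $S$, but diagonal bounds say nothing about the spectrum. If the learner spends all its queries at one point $x_0$, the posterior variance in the direction $\chi(x_0)$ is $(\tau^{-2}+mN/\sigma^2)^{-1}$ with $N=\binom nd$. This is far below $v$, so $\Sigma\not\succeq vP_{\mathcal A}$ for any coordinate set with $|\mathcal A|\ge N/2$.

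Anderson's lemma only lets you pass to Gaussians that are smaller in the psd order. Conditioning on the coefficients outside $\mathcal A$ gives the Schur complement $((\Sigma^{-1})_{\mathcal A\mathcal A})^{-1}$, which is again non-diagonal. So independence never appears. The canonical metric of the posterior process is $\norm{\Sigma^{1/2}(\chi(x)-\chi(y))}_2$, which depends on the adaptive design. Your odd-intersection count controls it only when $\Sigma=vI$.

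What is missing is a sup-norm lower bound for a \emph{correlated} Gaussian polynomial, using only facts valid for every transcript. This is \Cref{lem:width-main}:
\begin{itemize}
\item With the prior normalized to $\N(0,I_b)$, one has $\Sigma\preceq I$, and $\tr(\Sigma)\tr(\Sigma^{-1})\ge b^2$ gives $\tr(\Sigma)\ge b/2$ below the sample threshold.
\item A constant fraction of coordinates then have variance $\ge 1/4$, so H\"older gives $\expectation\norm{W}_{2d/(d+1)}\gtrsim b^{(d+1)/(2d)}$.
\item This norm concentrates because it is Lipschitz in the whitened Gaussian, with constant controlled by $\Sigma\preceq 2I$.
\item Bohnenblust--Hille lifts it to $\norm{f_W}_{L_\infty}\gtrsim q^{(d+1)/2}/B_d$.
\end{itemize}
This is exactly where $B_d$ enters the prior scale and the hypothesis on $\eps$. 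In your sketch Bohnenblust--Hille has no concrete role.

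\textbf{The $D\log(1/\delta)$ term.} A deviation $\sqrt{2v\log(1/\delta)}$ is at the single-coefficient scale and only gives $m\gtrsim\sigma^2\log(1/\delta)/\eps^2$. ``Summing over the $D$ directions'' has no mechanism behind it: every query has $|\chi_S(x)|=1$ and informs all monomials at once. That is also why your two-point attempt stalls.

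The factor $D$ comes from a point value. By orthonormality, $\expectation_X[\chi(X)^\top\Sigma\chi(X)]=\tr(\Sigma)\ge N/(\tau^{-2}+m/\sigma^2)$. So some $x^\star$ has at least this posterior variance for $f(x^\star)$, and \Cref{lem:gauss-log-tail} gives error probability $\ge 2\delta$.

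You also cannot reuse $\tau\asymp(Nn)^{-1/2}$. With that scale, $\prob(\norm f_{L_\infty}>1)\ge e^{-O(n)}$, and for $\log(1/\delta)\gg n$ conditioning on boundedness costs more than the risk $2\delta$. The scale must shrink to order $(N(n+\log(1/\delta)))^{-1/2}$; the paper builds $\sqrt{q+\log(1/\delta)}$ into $\mu$ for this reason.

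\textbf{The upper bound.} A random design needs $m\gtrsim D\log(2D/\delta)$. The extra $D\log(1/\delta)$ is not covered by the stated bound when $\sigma/\eps$ is small. Your interpolation-set fallback does not fix this, because such sets can be badly conditioned. For $d=1$ with the points $\mathbf 1$ and $\mathbf 1-2e_i$, the Lagrange weights at $-\mathbf 1$ are $(1-n,1,\dots,1)$, which inflates the variance proxy by a factor of order $n^2$.

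The fix is \Cref{lem:walsh-design}. The matrix-Bernstein event has positive probability once $m\ge CD\log(2D)$, so you fix one realization in advance. With that design, your least-squares analysis coincides with the paper's.
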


We prove the upper and lower bounds separately. The upper bound is quite standard: it is a fixed-design least-squares argument using a well-conditioned set of truncated Walsh rows. The main novelty is the lower bound that uses a Bayes-risk argument \citep{Chen2014OnBR} on a block-multilinear subclass of~$\cP_{n,d}$. The $n^{d+1}$ and $n\log(1/\delta)$ regimes are treated separately.

\subsection{Low degree upper bound}


Given the bound $d$ on the degree, for any $f \in \cP_{n,d}$, we have $\theta_S=0$ for any $\vert S \vert >d$. For $x\in\cuben$, we write 
$\chi_{\le d}(x)\coloneqq(\chi_S(x))_{\vert S \vert \leq d}$ be the $D$ dimensional vector containing the evaluation of the Walsh functions associated to the non-zero coefficients. In particular, for an $f$ represented by $\theta \in \R^D$, we have $f(x)=\langle \theta, \chi_{\leq d} (x)\rangle$, hence this is a special case of a $D$-dimensional linear regression problem. The upper bound is a fixed-design linear-regression argument with a uniform prediction guarantee over the cube. The only point that is specific to the Boolean setting is the design: the available feature vectors are the Walsh--Hadamard rows $\chi_{\le d}(x)$, not arbitrary vectors in~$\R^D$.  Nevertheless, the Walsh design behaves here like a standard bounded orthonormal system. Under the uniform measure, as $\expectation[\chi_{\le d}(X)\chi_{\le d}(X)^\top]=I_D$ is the identity matrix and $\norm{\chi_{\le d}(X)}_2^2=D$, matrix Chernoff yields a deterministic set of \(O(D\log D)\) cube points whose empirical Gram matrix is well conditioned. Once such a design is fixed, the problem reduces to ordinary least squares in dimension \(D\).
For symmetric matrices $A,B$, we write $A \preceq B$ if $B-A$ is positive semidefinite.

\begin{lemma}[Walsh design]\label{lem:walsh-design} 
There exists a universal constant $C$ and $m\ge CD\log(2D)$ points $x_1,\ldots,x_m\in\cuben$  such that
\begin{equation*}
  \frac12 I_D
  \preceq
  \frac1m\sum\nolimits_{t=1}^m
  \chi_{\le d}(x_t)\chi_{\le d}(x_t)^\top.
\end{equation*}
\end{lemma}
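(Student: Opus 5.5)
We will use the probabilistic method with the matrix Chernoff inequality, as the discussion preceding the statement suggests. Draw $X_1,\dots,X_m$ independently and uniformly from $\cuben$, and set $Z_t \coloneqq \chi_{\le d}(X_t)\chi_{\le d}(X_t)^\top$. These are independent random positive semidefinite $D\times D$ matrices. Orthonormality of the Walsh system gives $\expectation[Z_t]=I_D$. Since every entry of $\chi_{\le d}(x)$ is $\pm1$, we also have the deterministic bound
\begin{equation*}
\lambda_{\max}(Z_t)=\norm{\chi_{\le d}(X_t)}_2^2=D .
\end{equation*}
Let $\mu_{\min}\coloneqq\lambda_{\min}\bigl(\sum_t\expectation[Z_t]\bigr)=m$.

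The lower tail of the matrix Chernoff inequality (Tropp) states that for independent PSD matrices with $\lambda_{\max}(Z_t)\le R$ and $\eta\in[0,1)$,
\begin{equation*}
\prob\Bigl(\lambda_{\min}\bigl(\textstyle\sum_t Z_t\bigr)\le(1-\eta)\mu_{\min}\Bigr)\le D\Bigl[\frac{e^{-\eta}}{(1-\eta)^{1-\eta}}\Bigr]^{\mu_{\min}/R}\le D\exp\Bigl(-\frac{\eta^2\mu_{\min}}{2R}\Bigr).
\end{equation*}
We apply it with $\eta=1/2$ and $R=D$. The failure probability is then at most $D\exp(-m/(8D))$. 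This is strictly less than $1$ as soon as $m>8D\log D$, and in particular whenever $m\ge CD\log(2D)$ with, say, $C=16$. The choice $\log(2D)$ rather than $\log D$ handles the degenerate case $D=1$ ($d=0$); there the claim is trivial, since the Gram matrix is identically $1$.

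Since the failure probability is below $1$, with positive probability $\lambda_{\min}\bigl(\frac1m\sum_t Z_t\bigr)>\frac12$. Hence some realization $x_1,\dots,x_m\in\cuben$ satisfies $\frac12 I_D\preceq\frac1m\sum_t\chi_{\le d}(x_t)\chi_{\le d}(x_t)^\top$, which is the claim. The statement is phrased as ``$m\ge CD\log(2D)$ points exist'', and this covers every such $m$, because the Chernoff bound holds for each $m$ above the threshold.

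None of the steps is a real obstacle. The only points needing care are that matrix Chernoff requires a uniform almost-sure bound on $\lambda_{\max}(Z_t)$, which here is exactly $D$ by the $\pm1$ entries, and that the dimensional prefactor $D$ in the tail bound is what forces the $\log D$ factor. Repetitions among the sampled points are allowed, since the lemma does not require the points to be distinct.
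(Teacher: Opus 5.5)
Your proof is correct and follows the paper's route: draw a uniformly random design, use $\expectation[\chi_{\le d}(X)\chi_{\le d}(X)^\top]=I_D$ and $\norm{\chi_{\le d}(X)}_2^2=D$ to get a matrix concentration tail of the form $D\exp(-cm/D)$, and conclude by the probabilistic method. The only difference is the concentration tool: the paper applies matrix Bernstein to the centered matrices $\chi_{\le d}(X_t)\chi_{\le d}(X_t)^\top-I_D$, which also gives the upper bound $\frac32 I_D$, whereas you use the one-sided matrix Chernoff lower tail, which gives explicit constants such as $C=16$.
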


Consider now the upper bound in~\Cref{thm:low-degree-main}. We remark that, in contrast to the lower bound, this bound is valid for any $\varepsilon>0$ and $\delta \in (0,1)$. Using this well-conditioned design, standard linear regression analysis yields the claimed result. See \Cref{app:walsh-design} for the full proof.

We highlight that while the design is fixed, it is non-adaptive. If we had used a uniform random design instead, we would have paid $D n\log(1/\delta)$ instead of $D \log(2D)$ for the design term, which would correspond to an asymptotic growth of~$\bigtheta_d(n^{d+1} \log(1/\delta))$ in the design term in~\eqref{eq:low-degree-upper}. 

\subsection{Low degree lower bound} 

The lower bound is the main point of the section.  It shows that the additional factor $n$ in the noisy $L_\infty$ rate cannot be avoided: even an adaptive learner must leave enough posterior uncertainty to create a large error at some point of the cube. We suppose for this whole section that $\xi_t \sim \mathcal{N}(0,\sigma^2)$.

The proof uses a Bayes-risk argument. We place a Gaussian
prior on a block-multilinear degree-\(d\) homogeneous polynomial subclass. Equivalently, we draw a Gaussian \(d\)-tensor \(G\) and use it to define a suitably scaled polynomial \(f_{\mu G}\), which belongs to \(\cP_{n,d}\) with high probability. We then analyse the posterior distribution of \(G\) after an adaptive sequence of noisy queries, which itself is Gaussian by conjugacy. We then show that the covariance matrix of the Gaussian posterior is sufficiently large in two ways: for small $m$, it has a large Boolean injective norm, as well as a large trace.  This residual uncertainty is converted into an \(L_\infty\)-error for the constant confidence error through a posterior-diameter argument, using the Boolean-injective-norm lower bound, and through one-point posterior-variance in the high-confidence regime, using the trace lower bound. 

Let us now formally define the prior we will use for the lower bound. We will often go back and forth between the tensor formalism and the vectorized version. Let $q=\lfloor n/d\rfloor$. We use only the first $dq$ coordinates, split into $d$ blocks $I_1,\ldots,I_d$ each of size $q$.  Write a point in the active cube as $x=(x^{(1)},\ldots,x^{(d)})\in(\cube^q)^d$.  The hard prior lives on the block-multilinear family $\cB\coloneqq  \left\{\{i_1,\ldots,i_d\}: i_r\in I_r\text{ for each }r\right\}$, such that  $b \coloneqq |\cB|=q^d$. Equivalently, after indexing each element of $\cB$ by $(i_1,\ldots,i_d)\in[q]^d$, the corresponding Walsh feature vector is
\[
  \chi_{\cB}(x)=
  \Big(\prod\nolimits_{r=1}^d x^{(r)}_{i_r}\Big)_{(i_1,\ldots,i_d)\in[q]^d}
  \in\cube^{q^d}.
\]
Thus, $\chi_{\cB}(x)$ is the tensor product
$x^{(1)}\otimes\cdots\otimes x^{(d)}$ written in Walsh notation.  Let
$G\sim\N(0,I_{q^d})$ and define the random polynomial
\begin{equation}\label{eq:gaussian-prior}
  f_{\mu G}(x)
  =
  \mu\langle G,\chi_{\cB}(x)\rangle,
  \quad \mbox{where} \
  \mu
  = C_0B_d \eps\big(q^{d/2}\sqrt{q+\log(1/\delta)}\,\big)^{\smash{-1}}
\end{equation}
is a normalization parameter, for $C_0$ a large universal constant.  The prior is calibrated to satisfy two competing requirements.  On the one hand, after conditioning, the sampled functions must belong to the bounded class $\cP_{n,d}$.  On the other hand, if too few noisy queries are made, the posterior must still fluctuate sufficiently, of order $\eps$ in uniform norm, to make the true posterior indistinguishable from other candidates.  The Gaussian prior is not automatically in $\cP_{n,d}$ because its supremum norm is random. Nevertheless, this only happens with small probability. 
\begin{lemma} \label{lem:bounded-prior}

There exists a universal constant \(c_b>0\) such that, if
\(\eps \le c_b/(B_d\sqrt{d})\), then the Gaussian prior satisfies the
boundedness constraint with high probability, namely $\prob\left(\Vert f_{\mu G}\Vert_{L_\infty}\le 1\right)
  \ge 1-\frac{\delta}{16}$.

\end{lemma}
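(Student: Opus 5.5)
The plan is to bound the supremum of the Gaussian process $x\mapsto \langle G,\chi_{\cB}(x)\rangle$ over the active cube $(\cube^q)^d$, which has $2^{dq}$ points. Each value $\langle G,\chi_{\cB}(x)\rangle$ is a centered Gaussian with variance $\norm{\chi_{\cB}(x)}_2^2=q^d$, since all entries of $\chi_{\cB}(x)$ are $\pm1$. A union bound over the $2^{dq}$ points gives
\[
\prob\Big(\max_x |\langle G,\chi_{\cB}(x)\rangle| > t\Big)\le 2\cdot 2^{dq}\exp\big(-t^2/(2q^d)\big).
\]
We will choose $t=q^{d/2}\sqrt{2(dq\log 2+\log(32/\delta))}$, so the right-hand side is at most $\delta/16$. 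Since $d\ge1$ and $\delta\le 1/4$, we have $t\le C_1 q^{d/2}\sqrt{d}\,\sqrt{q+\log(1/\delta)}$ for a universal constant $C_1$. The factor $\sqrt d$ comes from the $dq$ bits of the active cube.

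On this event, $\norm{f_{\mu G}}_{L_\infty}\le \mu t$. Substituting the value of $\mu$ from \eqref{eq:gaussian-prior}, the factors $q^{d/2}\sqrt{q+\log(1/\delta)}$ cancel, leaving
\[
\norm{f_{\mu G}}_{L_\infty}\le C_0 C_1 B_d\sqrt d\,\eps .
\]
This is at most $1$ as soon as $\eps\le c_b/(B_d\sqrt d)$ with $c_b\coloneqq 1/(C_0C_1)$, which proves the lemma. Note that $f_{\mu G}$ depends only on the first $dq$ coordinates, so its sup over $\cuben$ equals its sup over the active cube. Note also that the constant $c_b$ depends on $C_0$, which is fixed in advance as a universal constant, so it is still universal.

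The only delicate step is matching the scales: checking that the union-bound level $q^{d/2}\sqrt{dq+\log(1/\delta)}$ is at most a constant times $\sqrt d\, q^{d/2}\sqrt{q+\log(1/\delta)}$. This holds because $dq+\log(1/\delta)\le d\,(q+\log(1/\delta))$. The Bohnenblust--Hille constant $B_d$ plays no role here beyond appearing in $\mu$; it matters only in the later Bayes-risk step. If a sharper dependence on $d$ were needed, Dudley's entropy bound or a chaining argument for the injective norm of $G$ could replace the union bound. For the stated threshold, however, the crude union bound already suffices, so I expect no real obstacle.
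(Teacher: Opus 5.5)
Your proof is correct and takes essentially the same route as the paper. Both arguments bound each of the $2^{dq}$ active-cube values $\langle G,\chi_{\cB}(x)\rangle\sim\N(0,q^d)$ by a Gaussian tail, take a union bound, and cancel $q^{d/2}\sqrt{q+\log(1/\delta)}$ against $\mu$. The only difference is cosmetic: the paper fixes the threshold $t=4\sqrt{d+1}\,q^{d/2}\sqrt{q+\log(1/\delta)}$ directly, whereas you calibrate $t$ to give exactly $\delta/16$ and then bound it by a constant multiple of $\sqrt d\,q^{d/2}\sqrt{q+\log(1/\delta)}$ using $\delta\le 1/4$.
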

This is immediate due to the concentration of $\langle G, \chi_\cB(x)\rangle$ for all $x \in \cuben$ and by the union bound over all points of the Boolean cube; see \Cref{app:bounded-prior} for details. We first prove a Bayes lower bound for the unconditioned Gaussian prior and then
condition on the bounded event.


We now want to show that if $m$ is too small, then different norms and measures of the posterior distribution are large.
The next lemma is the adaptive core of the argument.  It says that below the claimed sample size, no adaptive choice of query points can remove a constant fraction of the posterior trace.   Once we condition on the
realized transcript, the adaptively chosen query points are fixed vectors, and the calculation reduces to a deterministic statement about the rank-one information matrix accumulated by the learner.

\begin{lemma}[Adaptive posterior lower bound]\label{lem:posterior-trace-main}
The posterior law of $G$ given  $\mathcal{H}_m$ is Gaussian with covariance $\Sigma_m
  =
  \big(
  I+
  \frac{\mu^2}{\sigma^2}
  \sum_{t=1}^m
  \chi_{\cB}(X_t)\chi_{\cB}(X_t)^\top
  \big)^{-1}$. If $m\le \sigma^2q^d(q+\log(1/\delta))/(C_0^2B_d^2\eps^2)$, then, for every realized transcript, we have $ 0\preceq \Sigma_m\preceq I$ and $ \tr(\Sigma_m)\ge q^d/2$. 
\end{lemma}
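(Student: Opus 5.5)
The plan has two parts: derive the posterior by Gaussian conjugacy, adapting to adaptive queries, and then bound the trace with a deterministic Jensen-type inequality on the eigenvalues of the information matrix.

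\emph{Posterior.} Write the likelihood of the transcript $\mathcal H_m$ as a product over rounds. Each factor is the query-selection kernel $\prob(X_t\mid\mathcal H_{t})$ times the Gaussian density of $Y_t$ given $X_t$ and $G$. The selection kernels do not depend on $G$, since the learner sees only past data. So as a function of $G$ the likelihood is proportional to
\[
\exp\Big(-\frac{1}{2\sigma^2}\sum\nolimits_{t=1}^m\big(Y_t-\mu\langle G,\chi_{\cB}(X_t)\rangle\big)^2\Big).
\]
Multiplying by the prior density $\exp(-\norm{G}_2^2/2)$ and completing the square gives a Gaussian posterior with precision
\[
I+\frac{\mu^2}{\sigma^2}\sum\nolimits_t\chi_{\cB}(X_t)\chi_{\cB}(X_t)^\top .
\]
This precision depends only on the realized query points. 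Since it is $\succeq I$, its inverse satisfies $0\preceq\Sigma_m\preceq I$.

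\emph{Trace bound.} Fix the realized points and set $M=\frac{\mu^2}{\sigma^2}\sum_t\chi_{\cB}(X_t)\chi_{\cB}(X_t)^\top$. Each $\chi_{\cB}(X_t)\in\cube^{q^d}$ has squared norm $q^d$, so
\[
\tr(M)=\frac{\mu^2}{\sigma^2}\,m\,q^d.
\]
Let $\lambda_1,\dots,\lambda_{q^d}\ge0$ be the eigenvalues of $M$. Then $\tr(\Sigma_m)=\sum_i(1+\lambda_i)^{-1}$. The map $\lambda\mapsto(1+\lambda)^{-1}$ is convex, so Jensen gives
\[
\tr(\Sigma_m)\ge \frac{q^d}{1+\tr(M)/q^d}=\frac{q^d}{1+\mu^2m/\sigma^2}.
\]
It therefore suffices to show $\mu^2m/\sigma^2\le1$. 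Plugging in $\mu^2=C_0^2B_d^2\eps^2/\big(q^d(q+\log(1/\delta))\big)$ and the hypothesis on $m$ yields exactly $\mu^2 m/\sigma^2\le 1$. Hence $\tr(\Sigma_m)\ge q^d/2$.

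\emph{Where care is needed.} The main step needing attention is the adaptivity in the posterior computation. One must check that the query-selection factors cancel in Bayes' rule, including when the learner is randomized. Any internal randomness can be absorbed into the transcript, or conditioned on, because it is independent of $G$. Once this is done, everything is deterministic given the transcript, and the bound holds for every realization. The trace step only uses that each Walsh row has fixed norm $\sqrt{q^d}$, so no structure of the design matters.
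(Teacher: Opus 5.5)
Your proof is correct and matches the paper's argument. Conjugacy gives the posterior precision $I+\frac{\mu^2}{\sigma^2}\sum_t\chi_{\cB}(X_t)\chi_{\cB}(X_t)^\top\succeq I$. The fixed row norm gives $\tr(\Sigma_m^{-1})=q^d(1+\mu^2m/\sigma^2)\le 2q^d$. Your Jensen step on $\lambda\mapsto(1+\lambda)^{-1}$ is the same AM--HM inequality that the paper writes as Cauchy--Schwarz, $\tr(\Sigma_m)\tr(\Sigma_m^{-1})\ge (q^d)^2$. Your explicit remark that the query-selection kernels and any internal randomness do not depend on $G$, and so cancel in Bayes' rule, is careful; the paper's conjugacy lemma leaves this implicit.
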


\begin{proof}[Proof sketch]
Conditioning on the realized transcript makes the adaptive design fixed. The
posterior covariance then follows from standard Gaussian conjugacy. The only
estimate needed is that each Walsh feature vector in the hard subspace has norm
$q^{d/2}$, so below the stated sample size the posterior precision has trace at
most a constant multiple of its prior trace. Cauchy--Schwarz on the eigenvalues
then converts this into the claimed lower bound on $\operatorname{tr}(\Sigma_m)$. See the full proof in \Cref{app:posterior-trace-main}.
\end{proof}

For a $d$-tensor $W=(W_{i_1,\ldots,i_d})$ of shape $q\times\cdots\times q$, we 
define its injective norm $\Vert W \Vert_{\mathrm{inj}}\coloneqq \sup_{\Vert x_i \Vert_\infty \leq 1, i \in [d]}\vert \sum_{i_1,\dots,i_d}  
  W_{i_1,\dots,i_d}x_{1,i_1}\cdots x_{d,i_d} \vert$. Since the supremum is attained on the Boolean cube (which follows by simply optimizing the expression one variable at a time), it is equal to the $L_\infty$ norm of the $q$-homogeneous polynomial associated to it, $ \Vert W \Vert_{\mathrm{inj}} =\Vert f_W \Vert_{L_\infty}$. For Gaussian $W$, the expectation $\expectation[\Vert W \Vert_{\mathrm{inj}}]$ can also be seen as the Gaussian width  of the stretched Boolean cube.

To prove the next lower bound on $\Vert W \Vert_{\mathrm{inj}}$, we use the Bohnenblust--Hille inequality  to convert a lower bound on a coefficient norm into a lower bound on the uniform norm of the associated homogeneous Walsh polynomial. The following lemma is the main geometric tool for the posterior-diameter
argument. Crucially, the proof does not assume that $\Sigma$ is diagonal, which means that it is valid even with correlated Gaussians, which is essential to handle adaptive queries. 

\begin{lemma}[Injective norm lower bound]\label{lem:width-main}
Let $W\sim\N(0,\Sigma)$ be a centered Gaussian $d$-tensor of shape
$q\times\cdots\times q$.  If $0\preceq\Sigma\preceq2I$ and
$\tr(\Sigma)\ge q^d$, then $\prob(
  \norm{W}_{\mathrm{inj}}
  \ge
  \frac{c}{B_d}q^{(d+1)/2}
  )
  \ge
  \frac34$ for some universal constant $c>0$.
\end{lemma}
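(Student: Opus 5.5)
We will combine the Bohnenblust--Hille inequality with a lower bound on the $\ell_1$ norm of the Gaussian coefficient tensor. The tensor $W$, indexed by $[q]^d$, is exactly the coefficient vector of the homogeneous block-multilinear Walsh polynomial $f_W(x)=\langle W,x^{(1)}\otimes\cdots\otimes x^{(d)}\rangle$, and $\norm{W}_{\mathrm{inj}}=\norm{f_W}_{L_\infty}$. Applying the defining inequality of $B_d$ to $f_W$, which has degree $d$, gives $\norm{W}_{\mathrm{inj}}\ge B_d^{-1}\norm{W}_{p}$ with $p=\frac{2d}{d+1}$. Set $N\coloneqq q^d$. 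Hölder's inequality gives $\norm{W}_1\le N^{1-1/p}\norm{W}_p$, and since $N^{1-1/p}=N^{(d-1)/(2d)}=q^{(d-1)/2}$, we get
\[
\norm{W}_{\mathrm{inj}}\ \ge\ \frac{1}{B_d}\,q^{-(d-1)/2}\,\norm{W}_1 .
\]
It therefore suffices to show $\prob(\norm{W}_1\ge c'N)\ge 3/4$, because $q^{-(d-1)/2}\,N=q^{(d+1)/2}$. This route needs no diagonal structure of $\Sigma$: only the marginal variances $\Sigma_{ii}$ and the operator norm of $\Sigma$ enter.

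\emph{Expectation.} Each entry $W_i$ is distributed as $\N(0,\Sigma_{ii})$ with $\Sigma_{ii}\le 2$, so
\[
\expectation|W_i|=\sqrt{2/\pi}\,\sqrt{\Sigma_{ii}}\ \ge\ \sqrt{2/\pi}\,\Sigma_{ii}/\sqrt2 .
\]
Summing over $i$ and using $\tr(\Sigma)\ge N$ yields $\expectation\norm{W}_1\ge c_1N$ with $c_1=1/\sqrt\pi$.

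\emph{Concentration.} Write $W=\Sigma^{1/2}Z$ with $Z\sim\N(0,I_N)$. The map $z\mapsto\norm{\Sigma^{1/2}z}_1$ is Lipschitz with constant at most $\sqrt N\,\norm{\Sigma^{1/2}}_{\mathrm{op}}\le\sqrt{2N}$. The Gaussian Poincaré inequality then gives $\operatorname{Var}\norm{W}_1\le 2N$, and Chebyshev's inequality yields
\[
\prob\bigl(\norm{W}_1<c_1N/2\bigr)\ \le\ \frac{8N}{c_1^2N^2}.
\]
This is at most $1/4$ as soon as $N\ge N_0\coloneqq 32/c_1^2$, a universal constant.

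\emph{Small $N$.} For $N<N_0$ we instead exploit that the target $c\,q^{(d+1)/2}/B_d$ can be made tiny by choosing $c$ small. We also use $B_d\ge1$ and $q^{(d+1)/2}\le N\cdot q^{(1-d)/2}\le N<N_0$ (for $d\ge1$). Since the average of the diagonal of $\Sigma$ is at least $1$, some index $i^*$ has $\Sigma_{i^*i^*}\ge1$. The injective norm dominates every single entry: take each $x_r$ to be the standard basis vector $e_{i^*_r}$, which has $\norm{x_r}_\infty\le 1$. Hence $\norm{W}_{\mathrm{inj}}\ge|W_{i^*}|$. Moreover
\[
\prob\bigl(|W_{i^*}|<t\bigr)\ \le\ t\sqrt{2/\pi},
\]
which is at most $1/4$ for $t=1/4$. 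So choosing $c\le 1/(4N_0)$ handles this case, and taking $c=\min\{c_1/2,\,1/(4N_0)\}$ covers both regimes.

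The step I expect to need the most care is the concentration of $\norm{W}_1$ under a correlated covariance. This is where the hypothesis $\Sigma\preceq2I$ is genuinely used: it bounds the Lipschitz constant, whereas off-diagonal correlations could otherwise make $\norm{W}_1$ fluctuate on the scale of its mean. The Poincaré bound resolves this cleanly. The only other subtlety is checking that the Hölder exponent bookkeeping gives exactly $q^{(d+1)/2}$, which it does since $N^{1/p}=q^{(d+1)/2}$.
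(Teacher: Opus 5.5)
Your proof is correct and follows the same skeleton as the paper's: Bohnenblust--Hille takes you from $\norm{W}_{\mathrm{inj}}$ to $\norm{W}_{2d/(d+1)}$, H\"older reduces to an $\ell_1$-type quantity whose mean is bounded below through $\tr(\Sigma)$, and a Lipschitz concentration step uses $\Sigma\preceq 2I$. The differences are that you apply H\"older deterministically and concentrate $\norm{W}_1$ via Poincar\'e and Chebyshev, while the paper bounds $\expectation\norm{W}_{2d/(d+1)}$ using the coordinates of variance at least $1/4$ and then applies exponential Gaussian concentration to $\norm{W}_{2d/(d+1)}$. Your small-$N$ argument (a coordinate with $\Sigma_{i^*i^*}\ge 1$, $\norm{W}_{\mathrm{inj}}\ge |W_{i^*}|$, and $B_d\ge 1$) is more careful than the paper, which only says the constant can be decreased to cover finitely many small $b$, without explaining why this holds uniformly in $\Sigma$.
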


\begin{proof}[Proof sketch]
The trace assumption implies that a constant fraction of the tensor coordinates has variance bounded below. Hence,  H\"older's inequality for \(r=2d/(d+1)\) implies that the expected
$\| \cdot \|_r$-norm of \(W\) is of order \(q^{(d+1)/2}\). Since \(W\) is a Gaussian
vector with eigenvalues bounded by $2$, and~$\Sigma \preceq 2I_b$, we can apply the concentration results for Lipschitz functions of Gaussian random variables for $\Vert W \Vert_r$, which thus concentrates around its mean at the relevant scale. Therefore,
with constant probability, \(\|W\|_r \gtrsim q^{(d+1)/2}\). The Boolean
Bohnenblust--Hille inequality applied to the associated homogeneous Boolean polynomial $f_W$ can then be used to lift the lower bound on $\Vert W \Vert_r$ to $\Vert W \Vert_{\mathrm{inj}}=\Vert f_W \Vert_{L_\infty}$,  yielding the claimed lower bound. See \Cref{app:width-main} for the proof. 
\end{proof}

\vspace{-0.2cm}


We can now prove the lower bound corresponding to the $n^{d+1}$ term. The main idea is that if two plausible posterior parameters are typically far apart, then no single estimator value can be close to both. Recall the definition of the prior \Cref{eq:gaussian-prior}. 


\begin{proposition}[Posterior diameter]\label{prop:diameter-main}
Let \(\log(1/\delta)\le q\), and $m\le
  (c\sigma^2 q^d(q+\log(1/\delta)))/(B_d^2\eps^2)$.
Then, conditional on \(\mathcal H_m\), every estimator
\(\smash[t]{\widehat f} \) has
posterior error $\prob(
    \Vert f_{\mu G}-\smash[t]{\widehat f} \Vert_{L_\infty}>\eps
    \mid \mathcal H_m
  )
  \ge 3/8$
under the Gaussian posterior, for \(C_0>0\) large enough.
\end{proposition}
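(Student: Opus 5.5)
Conditional on $\mathcal H_m$, the posterior of $G$ is $\N(\bar G,\Sigma_m)$ by \Cref{lem:posterior-trace-main}, where $\bar G$ is the posterior mean. The plan is a two-sample (diameter) argument. Let $G,G'$ be independent draws from this posterior. Their difference $W\coloneqq G-G'$ is a centered Gaussian tensor with covariance $2\Sigma_m$.

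First, check the hypotheses of \Cref{lem:width-main} for $W$. Take the constant $c$ in the sample bound no larger than $1/C_0^2$, so that $m\le \sigma^2q^d(q+\log(1/\delta))/(C_0^2B_d^2\eps^2)$. Then \Cref{lem:posterior-trace-main} gives $0\preceq\Sigma_m\preceq I$ and $\tr(\Sigma_m)\ge q^d/2$. Hence $0\preceq 2\Sigma_m\preceq 2I$ and $\tr(2\Sigma_m)\ge q^d$, which are exactly the assumptions of \Cref{lem:width-main}. Applying it yields
\[
\prob\Big(\norm{G-G'}_{\mathrm{inj}}\ge \tfrac{c_1}{B_d}q^{(d+1)/2}\ \Big|\ \mathcal H_m\Big)\ge \tfrac34 .
\]
Since $\norm{f_{\mu W}}_{L_\infty}=\mu\norm{W}_{\mathrm{inj}}$ (the supremum of the block-multilinear form is attained on the cube), we get on this event
\[
\norm{f_{\mu G}-f_{\mu G'}}_{L_\infty}\ge \mu\,\tfrac{c_1}{B_d}q^{(d+1)/2}=c_1C_0\,\eps\,\frac{\sqrt q}{\sqrt{q+\log(1/\delta)}}\ge \frac{c_1C_0}{\sqrt2}\,\eps ,
\]
using $\log(1/\delta)\le q$. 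Choosing $C_0\ge 2\sqrt2/c_1$ makes the right-hand side at least $2\eps$.

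Second, convert this separation into an estimation lower bound. Fix any estimator $\widehat f$, which is $\mathcal H_m$-measurable and hence deterministic under the posterior. Let $p\coloneqq\prob(\norm{f_{\mu G}-\widehat f}_{L_\infty}\le\eps\mid\mathcal H_m)$. Because $G,G'$ are i.i.d. given $\mathcal H_m$, both are within $\eps$ of $\widehat f$ with probability $p^2$. On that event the triangle inequality forces $\norm{f_{\mu G}-f_{\mu G'}}_{L_\infty}\le 2\eps$, which happens with probability at most $1/4$ by the previous step. So $p^2\le 1/4$, i.e. $p\le 1/2$, and the error probability is at least $1/2\ge 3/8$.

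The union-bound slack in the constant $3/8$ is left for the later step that conditions on the bounded event of \Cref{lem:bounded-prior}. The only real work is the constant bookkeeping: $C_0$ must be large relative to $1/c_1$, while the constant $c$ in the sample size must be at most $C_0^{-2}$ so that \Cref{lem:posterior-trace-main} applies. The main obstacle, the injective-norm lower bound under a correlated, adaptively generated covariance, is already handled by \Cref{lem:width-main}. What remains is to verify that the constant $c_1$ there is independent of $C_0$, so that $C_0$ can be chosen afterwards.
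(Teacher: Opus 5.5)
Your proposal is correct and follows the paper's proof. Both arguments take two independent posterior draws, apply \Cref{lem:posterior-trace-main} and \Cref{lem:width-main} to $G-G'\sim\N(0,2\Sigma_m)$, use $\mu q^{(d+1)/2}\ge C_0B_d\eps/\sqrt2$ under $\log(1/\delta)\le q$, and finish with the triangle inequality. You also correctly make explicit that $c\le C_0^{-2}$ is needed for \Cref{lem:posterior-trace-main} to apply.

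The only deviation is the last step. You use conditional independence ($p^2\le 1/4$, so the error is at least $1/2$), whereas the paper uses a union bound plus symmetry (error at least $3/8$). Both are valid, provided you take $C_0$ strictly larger than $2\sqrt2/c_1$, so that the separation event is $>2\eps$ rather than $\ge 2\eps$.
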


\begin{proof}[Proof sketch]
We condition on the realized transcript and draw two independent samples from the
posterior. By Lemma~4, the posterior covariance still has large trace, and
Lemma~5 converts this residual covariance into a large injective-norm separation
between the two posterior draws with constant probability. Under the chosen
scaling of the prior, this separation transfers to an $L_\infty$ separation
larger than $2\varepsilon$ between the corresponding Walsh polynomials. In this
event, no fixed estimator can be within $\varepsilon$ of both posterior draws.
By symmetry of the draws, the posterior error of any estimator is therefore
bounded below by a positive constant. See \Cref{app:diameter-main} for the proof. 
\end{proof}


\vspace{-0.2cm}

We now handle the $\log(1/\delta)$ term from the lower bound, using the same instance. 
When $\log(1/\delta)>q$, the diameter argument is no longer sharp after the scaling in
\Cref{eq:gaussian-prior}; it loses a factor $\sqrt{q/\log(1/\delta)}$.  The correct
high-confidence term comes from a simpler observation: large posterior trace
implies that at least one cube point still has posterior variance of order
$\mu^2q^d$.  A one-dimensional Gaussian tail then gives the desired
$\delta$-level lower bound.

\begin{proposition}[One-point posterior error]\label{prop:one-point-main}

Let $\log(1/\delta)>q$ and
$m\le c\sigma^2q^d(q+\log(1/\delta))/(B_d^2\eps^2)$. Then, conditional on
$\mathcal H_m$, every estimator ${\widehat f}$ has
posterior error~$\prob\big(
    \Vert f_{\mu G}-{\widehat f}\Vert_{L_\infty}>\eps
    \mid \mathcal H_m
  \big)
  \ge 2\delta$, for $C_0>0$ large enough. 
  
\end{proposition}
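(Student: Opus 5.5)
The plan is to reduce the $L_\infty$ error to the error at a single well-chosen cube point and then apply a one-dimensional Gaussian tail bound. Condition on the realized transcript $\mathcal H_m$. By \Cref{lem:posterior-trace-main}, $G\mid\mathcal H_m\sim\N(\bar g,\Sigma_m)$ with $\tr(\Sigma_m)\ge q^d/2$. Since $m\le c\sigma^2q^d(q+\log(1/\delta))/(B_d^2\eps^2)$ and $c\le 1/C_0^2$ for $C_0$ large, that lemma applies.

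First, find a point $x^*$ in the active cube with large posterior variance. Averaging over $X\sim\Unif(\{-1,1\}^{dq})$ and using $\expectation[\chi_\cB(X)\chi_\cB(X)^\top]=I_{q^d}$, which holds because the block-multilinear Walsh functions are orthonormal, gives
\[
\expectation_X\big[\chi_\cB(X)^\top\Sigma_m\chi_\cB(X)\big]=\tr(\Sigma_m)\ge q^d/2 .
\]
Hence some $x^*$ satisfies $v\coloneqq\chi_\cB(x^*)^\top\Sigma_m\chi_\cB(x^*)\ge q^d/2$. Under the posterior, $f_{\mu G}(x^*)\sim\N(\mu\langle\bar g,\chi_\cB(x^*)\rangle,\mu^2 v)$, so its standard deviation is at least $\mu q^{d/2}/\sqrt2$.

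Next, lower-bound the error at that point. For any estimator $\widehat f$ (measurable w.r.t.\ $\mathcal H_m$, hence fixed after conditioning), we have $\|f_{\mu G}-\widehat f\|_{L_\infty}\ge|f_{\mu G}(x^*)-\widehat f(x^*)|$. For a Gaussian $Z\sim\N(a,s^2)$ and any constant $y$, Anderson's inequality (or direct symmetry/unimodality) gives $\prob(|Z-y|>\eps)\ge\prob(|Z-a|>\eps)=2\bar\Phi(\eps/s)$. With $s\ge\mu q^{d/2}/\sqrt2$ and the definition of $\mu$ in \eqref{eq:gaussian-prior},
\[
\frac{\eps}{s}\le\frac{\sqrt2\,\eps}{\mu q^{d/2}}=\frac{\sqrt{2}\sqrt{q+\log(1/\delta)}}{C_0B_d}\le\frac{2\sqrt{\log(1/\delta)}}{C_0B_d}
\]
in the regime $\log(1/\delta)>q$. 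Writing $t=2\sqrt{\log(1/\delta)}/(C_0B_d)$, it remains to show $2\bar\Phi(t)\ge2\delta$.

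The main obstacle is the constant bookkeeping in this last Gaussian tail comparison; the probabilistic structure itself is routine. I would use the Mills-ratio lower bound $\bar\Phi(t)\ge\frac{t}{1+t^2}\frac{e^{-t^2/2}}{\sqrt{2\pi}}$ for $t\ge 1$, and $\bar\Phi(t)\ge\bar\Phi(1)$ otherwise. Since $B_d\ge1$, we have $t^2/2\le 2\log(1/\delta)/C_0^2$, so $e^{-t^2/2}\ge\delta^{2/C_0^2}$, which is at least $\delta^{1/2}$ once $C_0\ge2$. The polynomial prefactor $t/(1+t^2)\gtrsim 1/\sqrt{\log(1/\delta)}$ is then absorbed because $\delta^{1/2}/\sqrt{\log(1/\delta)}\ge C\delta$ for $\delta\le1/4$, after possibly enlarging $C_0$. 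The case $t<1$ gives a constant probability $2\bar\Phi(1)\ge 2\delta$ since $\delta\le1/4$ is small. Combining the two cases yields the posterior error of at least $2\delta$ for every transcript.
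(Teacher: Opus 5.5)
Your argument follows the paper's proof step for step: the trace bound from \Cref{lem:posterior-trace-main}, averaging $\chi_{\cB}(X)^\top\Sigma_m\chi_{\cB}(X)$ over the active cube to find $x^\star$, Anderson/unimodality to center at the posterior mean, and then a one-dimensional tail bound at threshold $t=2\sqrt{\log(1/\delta)}/(C_0B_d)$. The only divergence is that you do the final tail comparison by hand rather than invoking \Cref{lem:gauss-log-tail}, and that is where the proof breaks.

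\textbf{The failing step.} In the case $t<1$ you claim $2\bar\Phi(1)\ge 2\delta$ ``since $\delta\le 1/4$.'' But $\bar\Phi(1)\approx 0.159<1/4$, so this fails for every $\delta\in(\bar\Phi(1),1/4]$. That range is compatible with $\log(1/\delta)>q\ge 1$, because $1/4<1/e$. For such $\delta$ we have $\log(1/\delta)<2$, so $t<1$ as soon as $C_0\ge 3$. Your bound then only gives posterior error at least $2\bar\Phi(1)\approx 0.317$, which is smaller than $2\delta$. As written, the proof does not cover $\delta$ close to $1/4$.

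\textbf{The fix.} Split the small-$t$ case further; this is exactly what the continuity step in the proof of \Cref{lem:gauss-log-tail} does (applied with $K=2$, $L_0=\log 4$).
\begin{itemize}
\item If $\delta\le\bar\Phi(1)$, your monotonicity bound $\bar\Phi(t)\ge\bar\Phi(1)\ge\delta$ suffices.
\item If $\delta>\bar\Phi(1)$, then $\log(1/\delta)<2$, hence $t\le 2\sqrt2/C_0$. Taking $C_0\ge 5$ gives $t<0.6$, which is below the upper quartile $\approx 0.674$ of $\N(0,1)$. So $\bar\Phi(t)>1/4\ge\delta$.
\end{itemize}

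Your $t\ge 1$ case is correct. There the prefactor satisfies $t/(1+t^2)\ge 1/(2t)=C_0B_d/(4\sqrt{\log(1/\delta)})$, which grows with $C_0$, and that is what makes the absorption work.

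A minor point on wording: the constant $c$ must be chosen \emph{after} $C_0$, namely $c\le C_0^{-2}$, for \Cref{lem:posterior-trace-main} to apply. ``$c\le 1/C_0^2$ for $C_0$ large'' reads backwards. The paper uses this same ordering of constants.
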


The core of the argument relies on applying the probabilistic method using the condition on the trace of $\Sigma_m$ to find a point $x^\star \in \cuben$ which has high enough variance, and then conclude using a Gaussian tail inequality. See \Cref{app:one-point-main} for the full proof.

We now assemble the two regimes.  This final step is a Bayes-to-minimax
reduction: if the average error under a prior supported on the class is large,
then the worst-case error over the class is large.  The only small technical
point is that the Gaussian prior is first conditioned on the event
$\norm{f_{G}}_{L_\infty}\le1$. We can put everything together to prove the lower bound of \Cref{thm:low-degree-main}; see \Cref{app:low-degree-main} for details.

\vspace{-0.2cm}

\section{Sparse Walsh polynomials}\label{sec:sparse}

\vspace{-0.2cm}

We now turn to sparse Walsh polynomials.  In contrast with the low-degree case, the ambient coefficient space is the full Walsh basis of size $2^n$, and the learner must identify a small unknown support.  The main obstruction is therefore combinatorial rather than dimensional: the noisy problem contains a support-search component that is absent from the low-degree least-squares upper bound, where the set of Fourier coefficients that must be zero is always fixed.  This support-search component is also where the gap with the noiseless sparse recovery benchmark appears.  In the exact-query model, sparse
Walsh recovery has complexity $\smash{\widetilde\Theta(sn)}$, while in the noisy $L_\infty$ problem the stochastic support-search term is of order $\sigma^2ns^2/\eps^2$.

\begin{theorem}[Noisy sparse sample complexity]\label{thm:sparse-noisy}
There exist universal constants $c,C,\eps_0>0$ such that, for every $1\le s\le n$, $0<\eps\le\eps_0$, $0<\delta\le 1/4$, and centered $\sigma$-subgaussian noise, we have
\begin{equation}
c \sigma^2 \bigl(ns^2+s\log(1/\delta)\bigr) / \eps^2 \leq m^*(\cS_{n,s},\eps,\delta)   \leq C (\sigma^2 + 1) \bigl(ns^2+s\log(1/\delta)\bigr) / \eps^2.
\end{equation}
Thus, if $\sigma$ is bounded below by a positive universal constant, and $s \leq n$, the above inequalities imply that $m^*(\cS_{n,s},\eps,\delta) = \Theta( (\sigma/\eps)^2 (ns^2+s\log(1/\delta)))$.
\end{theorem}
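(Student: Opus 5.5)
We prove the upper and lower bounds separately, following the structure of \Cref{thm:low-degree-main}.

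\emph{Upper bound.} We use a nonadaptive two-stage estimator: support search, then refitting on the recovered support.

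\textbf{Stage 1 (support identification).} First we locate every coefficient with $|\theta_S|\gtrsim \eps/s$. We do this with a random design and an $\ell_1$-type estimator, e.g.\ a Lasso or Dantzig selector over all $2^n$ Walsh features with uniformly random queries $X_t$. Walsh features form a bounded orthonormal system, so a restricted-isometry or restricted-eigenvalue property holds for $O(s\,n\log^2 s)$ rows; this is the noiseless $\widetilde\Theta(sn)$ term, which is absorbed into the main rate. The coefficient error is not measured in $\ell_2$. Instead we bound the $\|\cdot\|_1$ error through the top-$s$ norm (sum of the $s$ largest entries in absolute value), as announced in the introduction. Its dual norm is $\max(\|u\|_\infty,\|u\|_1/s)$, so the stochastic term is a supremum of $\langle \xi, A u\rangle/m$ over vectors with $\|u\|_\infty\le 1$ and $\|u\|_1\le s$. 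A union bound over the $2^n$ columns, combined with subgaussian concentration at level $\log(2^n/\delta)$, gives
\[
\|\widehat\theta-\theta\|_1\lesssim s\,\sigma\sqrt{(n+\log(1/\delta)/s)/m}.
\]
Since $\|f-\widehat f\|_{L_\infty}\le\|\theta-\widehat\theta\|_1$, requiring this to be at most $\eps$ yields $m\gtrsim\sigma^2(ns^2+s\log(1/\delta))/\eps^2$. The $\delta$-dependence enters as $s\log(1/\delta)$ rather than $s^2\log(1/\delta)$ because the tail is controlled coordinatewise on the support, not through the $\ell_2$ norm. Getting this sharp $\delta$-term is the delicate point of the upper bound. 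If a single-stage analysis is too lossy, I would split the sample: use half to estimate the support $\widehat{\mathcal A}$ (thresholding), and the other half for least squares on $\widehat{\mathcal A}$ with a well-conditioned design, as in \Cref{lem:walsh-design}.

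\emph{Lower bound, $\sigma^2 s\log(1/\delta)/\eps^2$ term.} Use two hypotheses, $f_\pm=\pm(\eps'/1)\chi_\emptyset$ plus a fixed $(s-1)$-sparse part, or more simply $s$ functions differing by $2\eps$ on a single coefficient that the learner must split queries across. A two-point Le Cam / Bretagnolle--Huber argument with Gaussian noise gives that distinguishing requires KL divergence of order $\log(1/\delta)$, i.e.\ $m\eps^2/\sigma^2\gtrsim\log(1/\delta)$. To gain the factor $s$, embed the difference in a direction whose amplitude per query is only $\eps/\sqrt s$ on average. For instance, take $g=(\eps/s)\sum_{S\in\mathcal A}\pm\chi_S$ with the $\mathcal A$ chosen so that $\mathbb{E}$ over any query point of $g(x)^2$ is about $\eps^2/s$ while $\|g\|_{L_\infty}=\eps$.

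\emph{Lower bound, $\sigma^2 ns^2/\eps^2$ term (main obstacle).} This is the hardest part and follows the plan announced in the introduction: a Fano argument over a hard family in which $L_\infty$ distance equals Hamming distance. I would take $s$ disjoint blocks of $k\approx n/s$ coordinates. For each code $z\in\{0,1\}^{?}$, let
\[
f_z=\frac{\eps}{s}\sum_{j=1}^s\chi_{S_j(z)},
\]
with $S_j(z)$ a subset of block $j$ encoded by $z$. The sets are $s$-sparse with coefficients $\eps/s$, and because the blocks are disjoint, $\|f_z-f_{z'}\|_{L_\infty}=(2\eps/s)\cdot\#\{j:S_j(z)\neq S_j(z')\}$. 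This reduces to a Hamming distance over blocks. Choosing each $S_j$ from roughly $2^{n/s}$ options gives a packing of log-cardinality $\approx n$ with separation $>2\eps$ whenever a constant fraction of blocks differ, by Varshamov--Gilbert. For KL, $\sum_t(f_z-f_{z'})(X_t)^2/\sigma^2\le m(2\eps)^2/\sigma^2$ is too crude. Instead I would average over a random shift or sign symmetrization so that for a fixed query the values $\chi_{S_j(z)}(x)$ are mutually uncorrelated across the packing. The average KL per query to a mixture reference is then of order $\eps^2/(s^2\sigma^2)$ times the number of blocks, about $\eps^2/(s\sigma^2)$. Fano then demands $m\eps^2/(s\sigma^2)\gtrsim n$, which still gives only $ns$. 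The missing factor of $s$ must come from the amplitude scaling: with coefficients $\eps/s$ and $L_\infty$ separation $\eps$ requiring about $s/2$ blocks to differ, the per-block information is $\eps^2/s^2$. I expect balancing this, and verifying the adaptive KL bound via the chain rule with the mixture reference, to be the crux of the argument.
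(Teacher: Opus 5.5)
\textbf{Gap 1: the $\sigma^2ns^2/\eps^2$ lower bound.} This is the decisive gap, and you leave it open. It cannot be closed by sharper KL bookkeeping such as mixture references or symmetrization, because your disjoint-block family is too small. It carries only about $n$ bits ($s$ blocks with $2^{n/s}$ choices each). Two alternatives differing in one block are at KL distance at most of order $m\eps^2/(s^2\sigma^2)$, so Fano stalls at $\sigma^2ns/\eps^2$, exactly as you found. No argument on this family can do better: a learner that knows the block structure reads off every $S_j$ from empirical Walsh correlations at constant confidence with $O\bigl((1+\sigma^2/\eps^2)\,ns\log(2s)\bigr)$ random queries.

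The missing idea is to give each of $k\asymp s$ terms an alphabet of size $2^{n-k}$ instead of $2^{n/s}$. Reserve $k$ coordinates $u$ as tags and set $f_M(u,v)=a\sum_{j=1}^k u_j\chi_{S_j}(v)$ with $a=4\eps/k$, where every $S_j$ is an arbitrary subset of the \emph{same} $n-k$ coordinates $v$. The tags keep the $k$ monomials distinct, so $f_M$ is $k$-sparse with $\norm{f_M}_{L_\infty}\le 4\eps$. Because the $S_j$ now overlap, separation comes from averaging rather than disjointness. Some $v^\star$ has $\chi_{S_j}(v^\star)\neq\chi_{T_j}(v^\star)$ for at least half of the indices $j$ with $S_j\neq T_j$, and choosing $u^\star_j=\chi_{S_j}(v^\star)$ aligns all signs, giving $\norm{f_M-f_T}_{L_\infty}\ge a\,d_{\mathrm H}(M,T)$. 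After that the crude bound suffices. Conditional on $M_{-j}$, two alternatives differ pointwise by at most $2a$, so the adaptive KL is at most $2ma^2/\sigma^2$. Fano applied coordinate by coordinate (alphabet $2^{n-k}$) then forces expected Hamming error at least $3k/4$, hence $L_\infty$ error above $\eps$ with probability at least $1/2$, unless $m\gtrsim\sigma^2k^2(n-k)/\eps^2\asymp\sigma^2ns^2/\eps^2$.

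\textbf{Gap 2: the $\sigma^2s\log(1/\delta)/\eps^2$ term.} No two-point argument can produce the factor $s$, whatever $g$ you pick. The learner knows $f$ and $f+g$ and can place every query at a maximizer of $|g|$, so the two-point problem is itself solvable with $O(\sigma^2\log(1/\delta)/\eps^2)$ queries. Averaging $g(x)^2$ over $x$ is irrelevant when queries are chosen, and hypotheses differing in one Walsh coefficient differ by the full amount at every point. You need a prior spread over many directions, as in the paper: $\theta=\mu G$, $G\sim\N(0,I_s)$, on the linear functions $\sum_{j\le s}\theta_jx_j$, where the $L_\infty$ distance is the $\ell_1$ coefficient distance, with $\mu\asymp\eps/\sqrt{s\log(1/\delta)}$. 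Each query adds exactly $\mu^2s/\sigma^2$ to the trace of the posterior precision. So for $m\le c\sigma^2s\log(1/\delta)/\eps^2$ the posterior covariance keeps trace at least $s/2$, and some sign vector $\eta^\star$ retains posterior variance at least $\mu^2s/2$ for $f(\eta^\star)$. A Gaussian tail at that point then gives error above $\eps$ with probability at least $2\delta$; conditioning on $\norm{\mu G}_1\le1$ costs at most $\delta/2$.

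\textbf{Gap 3: the upper bound.} Your displayed $\ell_1$ rate is the right target but does not follow from the mechanism you state. A union bound over the $2^n$ columns at level $\log(2^n/\delta)$ controls only $\norm{A^\top\xi/m}_\infty\lesssim\sigma\sqrt{(n+\log(1/\delta))/m}$. The standard Lasso/Dantzig bound $\norm{\widehat\theta-\theta}_1\lesssim s\lambda$ with such $\lambda$ then costs $\sigma^2s^2(n+\log(1/\delta))/\eps^2$, precisely the $s^2\log(1/\delta)$ term you wanted to avoid. Coordinatewise thresholding at level $\eps/s$ in your fallback costs the same, since each coordinate must then be accurate to order $\eps/s$ at confidence $\delta$.

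The sharp term needs two things. First, a union bound over the $e^{O(sn)}$ signed $s$-sparse tests $g\in\cG_s$ (the extreme points of the dual ball you identified), each with variance $O(s/m)$, at level $sn+\log(1/\delta)$. Second, an estimator whose $\ell_1$ error is governed by the top-$s$ norm, so that missing small coefficients is tolerated. The paper takes raw empirical correlations $\widetilde\theta_S=\frac1m\sum_tY_t\chi_S(X_t)$, keeps the $s$ largest, and invokes $\norm{\theta-\widehat\theta}_1\le4\norm{\theta-\widetilde\theta}_{(s)}$. It controls the noise and the design fluctuation uniformly over $\cG_s$ by subgaussian and Bernstein bounds; the design term is where the $+1$ in $\sigma^2+1$ comes from.
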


The proof can be adapted for any $s \leq C n$ by adapting the constants. The upper bound is obtained via a random uniform design and proceeds through empirical Walsh coefficients followed by hard thresholding.  The lower bound has two components.  The first is a support-search lower bound of order $\sigma^2ns^2/\eps^2$, which already holds at constant confidence.  The second is a high-confidence lower bound of order $\sigma^2s\log(1/\delta)/\eps^2$, which already appears on a fixed known support.

\subsection{Sparse sample complexity upper bound}


The sparse case is where the design issue becomes most visible. A query at $x$ observes $\langle \theta,\chi(x)\rangle+\xi$, so the problem can be written as linear regression in the coefficient vector. However, the available design vectors are Walsh--Hadamard rows $\{\chi(x):x\in\cuben\}$, rather than generic independent subgaussian vectors in $\R^{2^n}$. Although these rows are isotropic, their coordinates are not jointly independent, since in particular $\chi_S\chi_T=\chi_{S\symmetricdifference T}$. Thus, standard sparse-regression results such as Lasso bounds do not directly yield the exact guarantee needed here. In particular, the sharp high-confidence term $s\log(1/\delta)$ comes from controlling a localized sparse coefficient norm directly, rather than passing through a coefficient $\Vert\cdot\Vert_2$ bound. The latter route would lose a factor $\sqrt{s}$ when converting to a coefficient $\Vert\cdot\Vert_1$ guarantee for $L_\infty$ error, leading to the weaker term $s^2\log(1/\delta)$. We therefore give a direct argument adapted to the Walsh design and the uniform loss.

The naive method for uniform error would be to control $\| \theta -\widehat \theta \|_1$.  But after thresholding to an $s$-sparse candidate, it is unnecessary to control all coordinates uniformly.  The relevant object is the sum of the largest $s$
coefficient errors.  For a coefficient vector $u=(u_S)_{S\subseteq[n]}$, its top-$s$ norm is defined as 
\begin{equation}\label{eq:Rs-def}
\Vert u \Vert_{(s)} 
\coloneqq
\sup_{|\mathcal{A}|\le s}\sum\nolimits_{S\in \mathcal{A}}|u_S|
=
\sup_{|\mathcal{A}|\le s, \;\eta_S\in\{-1,1\}}
\big|\sum\nolimits_{S\in \mathcal{A}}\eta_S u_S\big|.
\end{equation}
The second expression will be useful, as it identifies the top-$s$ norm as a supremum over signed sparse Walsh test functions. We next show that, when the true coefficient vector $\theta$ is $s$-sparse, hard thresholding converts control of the top-$s$ pre-thresholding error into control of the full coefficient $\Vert \cdot \Vert_1$ error, up to a universal constant; see \Cref{app:top-s-thresholding} for the proof. 
\begin{lemma}[Norm equivalence under sparsity]\label{lem:top-s-thresholding}
Let $\theta,\smash{\widetilde\theta}\in\reals^{2^n}$, with $\theta$ $s$-sparse, and let $\smash[t]{\widehat\theta}$ be obtained by keeping the $s$ largest coordinates of $\smash{\widetilde\theta}$ in absolute value and setting all other coordinates to zero. Then $\Vert \theta - \smash[t]{\widehat \theta} \Vert_{1} \leq 4 \Vert \theta - \smash{\widetilde \theta} \Vert_{(s)} $. Consequently, $\Vert f_\theta - f_{\smash[t]{\widehat\theta}} \Vert_{L_\infty} \leq 4 \Vert \theta - {\widetilde \theta} \Vert_{(s)}$.

\end{lemma}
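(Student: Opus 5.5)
Write $A=\supp(\theta)$ with $|A|\le s$, let $\widehat A$ with $|\widehat A|\le s$ be the set of indices kept by hard thresholding (so $\widehat\theta_S=\widetilde\theta_S$ on $\widehat A$ and $\widehat\theta_S=0$ off $\widehat A$), and set $e\coloneqq\widetilde\theta-\theta$. The plan is to split $\norm{\theta-\widehat\theta}_1$ over the three disjoint regions $\widehat A$, $A\setminus\widehat A$, and the complement of $A\cup\widehat A$, and to bound each piece by a sum of at most $s$ coordinates of $|e|$, hence by $\norm{e}_{(s)}$.

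On $\widehat A$ we have $\theta_S-\widehat\theta_S=-e_S$. Since $|\widehat A|\le s$, this gives $\sum_{S\in\widehat A}|\theta_S-\widehat\theta_S|\le\norm{e}_{(s)}$. Outside $A\cup\widehat A$ both $\theta$ and $\widehat\theta$ vanish, so that region contributes nothing. The only real work is the missed-support term $\sum_{S\in A\setminus\widehat A}|\theta_S|$, which I expect to be the main (though mild) obstacle.

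For this term I would use an exchange argument. Note that $|A\setminus\widehat A|\le|\widehat A\setminus A|$ whenever $|\widehat A|=s$; if $\widehat A$ has fewer than $s$ elements, it is because fewer than $s$ coordinates of $\widetilde\theta$ are nonzero, and then every $S\in A\setminus\widehat A$ has $\widetilde\theta_S=0$. In that case $|\theta_S|=|e_S|$ and the bound is immediate. Otherwise, fix an injection $\pi\colon A\setminus\widehat A\to\widehat A\setminus A$. For $S\in A\setminus\widehat A$, the thresholding rule gives $|\widetilde\theta_S|\le|\widetilde\theta_{\pi(S)}|=|e_{\pi(S)}|$, because $\theta_{\pi(S)}=0$. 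Hence
\[
|\theta_S|\le|e_S|+|\widetilde\theta_S|\le|e_S|+|e_{\pi(S)}|.
\]
Summing over $S\in A\setminus\widehat A$, both $A\setminus\widehat A$ and $\pi(A\setminus\widehat A)$ have at most $s$ elements, so the missed-support term is at most $2\norm{e}_{(s)}$.

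Adding the three pieces gives $\norm{\theta-\widehat\theta}_1\le 3\norm{e}_{(s)}\le4\norm{\theta-\widetilde\theta}_{(s)}$. Tie-breaking in the choice of the top $s$ coordinates does not matter, since the argument only uses the ordering inequality. The consequence then follows from the inequality $\norm{f_\theta-f_{\widehat\theta}}_{L_\infty}=\sup_x|\langle\theta-\widehat\theta,\chi(x)\rangle|\le\norm{\theta-\widehat\theta}_1$ recorded in the paragraph on the specificity of the $L_\infty$ norm.
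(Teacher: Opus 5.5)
Your proof is correct and follows essentially the same route as the paper. Both use the split into the kept set, the missed support and the rest, and both handle the missed-support term by injectively matching $A\setminus\widehat A$ into $\widehat A\setminus A$, where $\widetilde\theta=e$. Your bookkeeping is slightly tighter, since you bound all of $\widehat A$ by a single $\norm{e}_{(s)}$ rather than splitting it into $A\cap\widehat A$ and $\widehat A\setminus A$, which gives the constant $3$ instead of $4$; you also treat the case $|\widehat A|<s$ explicitly, which the paper glosses over by asserting $|T\setminus\widehat T|=|\widehat T\setminus T|$.
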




Due to this lemma and~\eqref{eq:Rs-def}, to bound $\Vert f_\theta - f_{\smash[t]{\widehat \theta}} \Vert_{L_\infty}$ it suffices to uniformly bound the quantity $
\big|\sum_{S\in \mathcal{A}}\eta_S (\theta_S-\tilde{\theta}_S)\big|$ for all $|\mathcal{A}|\le s$ and $\eta\in\{-1,1\}^{\mathcal{A}}$. We next show that the size of this test class is sufficiently small to union-bound over it. See proof in \Cref{app:count-gs}.   

\begin{lemma} \label{lem:count-gs}
There exists a universal constant $C>0$ such that for $s\leq n$, and $\cG_s
\coloneqq
\big\{
g(x)=\sum_{S\in \mathcal{A}}\eta_S\chi_S(x):
|\mathcal{A}|\le s,\ \eta_S\in\{-1,1\}
\big\}$, we have $\log|\cG_s| \leq Csn$.

\end{lemma}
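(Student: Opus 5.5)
Each element of $\cG_s$ is determined by a pair $(\mathcal{A},\eta)$, where $\mathcal{A}\subseteq 2^{[n]}$ has $|\mathcal{A}|\le s$ and $\eta\in\{-1,1\}^{\mathcal{A}}$. So I would not work with the functions themselves; I would only bound the number of such pairs. Distinct pairs may give the same $g$, but that only helps, since $|\cG_s|$ is at most the number of pairs.

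Choosing $\mathcal{A}$ means choosing at most $s$ subsets of $[n]$ from the $N\coloneqq 2^n$ available. Given $\mathcal{A}$ with $|\mathcal{A}|=k$, there are $2^k\le 2^s$ sign patterns. Hence
\[
|\cG_s|\le 2^s\sum\nolimits_{k=0}^{s}\binom{2^n}{k}.
\]
For the sum I would use the standard estimate $\sum_{k\le s}\binom{N}{k}\le (eN/s)^s$, valid for $1\le s\le N$. This gives
\[
\log|\cG_s|\le s\log 2+s\log(e 2^n/s)\le s\bigl(n\log 2+1+\log 2\bigr),
\]
using $\log(1/s)\le 0$. Cruder versions also suffice, for example $\sum_{k\le s}\binom{N}{k}\le (N+1)^s$, which gives $\log|\cG_s|\le s(\log 2+\log(2^n+1))\le s(n+1)\log 2+s$.

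Finally, since $n\ge s\ge 1$, we have $1\le n$. Every term above is therefore at most a constant multiple of $sn$, and one can take, say, $C=4$. Note that $s\le n$ is used only to absorb the additive $O(s)$ terms into $sn$ via $n\ge1$.

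The main (mild) obstacle is the binomial-sum estimate. Any bound of the form $N^{O(s)}$ suffices here, because $\log N=n\log2$ already has the right scale, so no sharper counting is needed.
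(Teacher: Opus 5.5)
Your proof is correct and takes the same route as the paper: count the pairs $(\mathcal{A},\eta)$ to get $|\cG_s|\le\sum_{k\le s}\binom{2^n}{k}2^k$, then apply a standard $(eN/k)^k$-type binomial estimate, using $s\le n$ only to absorb the lower-order terms into $sn$. The only cosmetic difference is that you use the cumulative bound $\sum_{k\le s}\binom{N}{k}\le (eN/s)^s$ with an outside factor $2^s$, whereas the paper bounds each term by $(2e\,2^n/s)^s$ and pays an extra factor $s+1$.
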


Thus, controlling the top-$s$ norm is a union bound over $\exp(O(sn))$ signed tests. The key point of the upper bound proof is that this test-class control gives the sharp confidence term $s\log(1/\delta)$, rather than the weaker $s^2\log(1/\delta)$.

We use a random uniform design $X_t \sim \mathrm{Unif}(\cuben)$ and estimate each Fourier coefficient by the empirical correlation with the corresponding Walsh
function: $\smash{ \widetilde\theta}_S = (1/m)\sum_{t=1}^m Y_t\chi_S(X_t)$. The estimator $\smash[t]{\widehat \theta}$ then keeps the $s$ largest coordinates of $\smash{\widetilde\theta}$ and
returns as a prediction the corresponding Walsh polynomial $f_{\widehat \theta}$. Observe that a similar approach was used by Eskenazis and Ivanisvili~\cite{eskenazis2022lowdegree}. This polynomial yields the sample complexity upper bound. We now give a sketch of proof, and defer the full proof to \Cref{app:sparse-noisy-upperbound}. 

\vspace{-0.2cm}

\begin{proof}[Proof sketch of the upper bound of \Cref{thm:sparse-noisy}]

Given \Cref{lem:top-s-thresholding}, it suffices to prove that
$\Vert \widetilde\theta-\theta \Vert_{(s)}$ is small with high probability.
Fix a signed sparse test $g=\sum_{S\in \mathcal{A}}\eta_S\chi_S\in\cG_s$. By orthogonality
of the Walsh basis, $\sum\nolimits_{S\in \mathcal{A}}\eta_S(\smash{\widetilde\theta}_S-\theta_S)
  =
  \frac1m\sum\nolimits_{t=1}^m Y_t g(X_t)-\expectation[f(X)g(X)]
  =
  R_{\mathrm{des}}(g)+R_{\mathrm{noise}}(g)$,
where
$R_{\mathrm{des}}(g) \coloneqq m^{-1}\sum_{t=1}^m
(f(X_t)g(X_t)-\expectation[f(X)g(X)])$ and
$R_{\mathrm{noise}}(g) \coloneqq m^{-1}\sum_{t=1}^m\xi_tg(X_t)$. We control these two
terms uniformly over $\cG_s$. For every $g\in\cG_s$, we have
$\norm g_{L_2}^2\le s$, $\norm g_{L_\infty}\le s$, and
$\expectation[g(X)^4]\le s^3$. Bernstein's inequality and a union bound over
$\cG_s$ control the design fluctuation, and the same argument controls the
empirical second moments $m^{-1}\sum_{t=1}^m g(X_t)^2$. Conditional on this
event, subgaussian concentration controls the noise fluctuation uniformly over
$\cG_s$. Since $\log|\cG_s|\le Csn$, this gives the sample size in
\Cref{thm:sparse-noisy}.
\end{proof}

\vspace{-0.3cm}

\subsection{Sparse sample complexity lower bound}\label{subsec:sparse-lower}

For the lower bound, the main challenge, as highlighted in \Cref{sec:model}, is that coefficient separation alone does not imply separation in the target loss: the Boolean $L_\infty$ norm tests only against Walsh--Hadamard rows, so it can be smaller by $\sqrt{s}$ compared to the Fourier coefficient $\| \cdot \|_1$-distance. We resolve this by constructing structured sparse subfamilies for which support disagreement is witnessed directly on the Boolean cube.  One block hides the Walsh frequencies,  while the other lets us choose signs so
that discrepancies between two candidate supports add rather than cancel. Consequently, on the resulting subfamilies, the Hamming distance between hidden supports and the \(L_\infty\) distance between functions are equivalent up to universal constants. Thus, any uniformly accurate learner must solve a genuine support-identification problem over the Walsh dictionary. We suppose for this section that $\xi_t \sim \mathcal{N}(0,\sigma^2)$.

Let $1\le k\le n/2$ and split the cube as
$\cube^k\times\cube^{n-k}$, with points written $x=(u,v) \in \cuben$. We now define the hard family of sparse Walsh polynomials. For a tuple
$M=(S_1,\ldots,S_k)$ with each $S_j\subseteq[n-k]$, define for some constant $a>0$, $F_M(u,v)=\sum\nolimits_{j=1}^k u_j\chi_{S_j}(v)$, and $
f_M=aF_M$.

The first block identifies which sparse coordinate is being queried; the second block hides one of~$2^{n-k}$ possible Walsh frequencies for each coordinate. Let \(M=(S_1,\ldots,S_k)\) and \(T=(T_1,\ldots,T_k)\). Define their Hamming
distance by $
  d_{\mathrm H}(M,T)
  \coloneqq
  \bigl|\{j\in[k]: S_j\ne T_j\}\bigr|$. We next show that this sparse family has the desired Hamming-distance separation
property; see \Cref{app:systematic-family-main} for the proof. 

\begin{lemma}[Structured sparse family]\label{lem:systematic-family-main}
For every $M$, the function $f_M$ is $k$-sparse and
$\norm{f_M}_{L_\infty}\le ak$.  Moreover, for all set tuples $M,T$, $\norm{f_M-f_T}_{L_\infty}
\ge
a\,d_{\mathrm H}(M,T)$.
\end{lemma}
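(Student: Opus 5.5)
The plan is to handle the three claims in turn, saving the separation bound for last. The first two are immediate.

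For sparsity, note that $u_j\chi_{S_j}(v)=\chi_{\{j\}\cup (k+S_j)}(x)$ is a single Walsh character on $\cuben$, where $k+S_j$ denotes $S_j$ shifted into the last $n-k$ coordinates. These characters are pairwise distinct because their $u$-parts $\{j\}$ differ. Hence $f_M$ has exactly $k$ nonzero coefficients, each equal to $a$. For the norm bound, the triangle inequality gives $|F_M(u,v)|\le\sum_j|u_j\chi_{S_j}(v)|=k$, so $\norm{f_M}_{L_\infty}\le ak$.

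For the separation bound, I would construct an explicit witness point.
\begin{itemize}
\item Write $f_M-f_T=a\sum_{j=1}^k u_j\bigl(\chi_{S_j}(v)-\chi_{T_j}(v)\bigr)$, and let $J\coloneqq\{j:S_j\ne T_j\}$, so that $|J|=d_{\mathrm H}(M,T)$.
\item For $j\in J$, the difference $\chi_{S_j}(v)-\chi_{T_j}(v)$ equals $\pm2$ when $\chi_{S_j}(v)\ne\chi_{T_j}(v)$, and $0$ otherwise.
\item The free choice of $u$ lets me align signs. Once $v$ is fixed, set $u_j=\operatorname{sign}\bigl(\chi_{S_j}(v)-\chi_{T_j}(v)\bigr)$ whenever the difference is nonzero, and $u_j$ arbitrary otherwise. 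Then
\[
  \max_u|f_M-f_T|(u,v)=2a\,N(v),\qquad N(v)\coloneqq\bigl|\{j\in J:\chi_{S_j}(v)\ne\chi_{T_j}(v)\}\bigr| .
\]
\end{itemize}
It therefore suffices to find some $v$ with $N(v)\ge |J|/2$.

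I would get such a $v$ by averaging with $V\sim\Unif(\cube^{n-k})$. For $j\in J$ we have $\chi_{S_j}\chi_{T_j}=\chi_{S_j\symmetricdifference T_j}$ with $S_j\symmetricdifference T_j\neq\emptyset$. This character has mean zero, so $\chi_{S_j}(V)\ne\chi_{T_j}(V)$ with probability exactly $1/2$. By linearity, $\expectation[N(V)]=|J|/2$, so some $v$ attains $N(v)\ge|J|/2$. That $v$ gives $\norm{f_M-f_T}_{L_\infty}\ge 2a\cdot|J|/2=a\,d_{\mathrm H}(M,T)$, as claimed.

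The only genuinely nontrivial step is this averaging. Cancellations across coordinates $j$ would ruin the bound, and they are prevented by decoupling the sign choice (in the $u$ block) from the frequency disagreement (in the $v$ block). The probabilistic method then replaces the need for any explicit construction of $v$.
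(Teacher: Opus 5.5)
Your proposal is correct and follows essentially the same route as the paper: distinct characters and the triangle inequality give sparsity and the norm bound, then averaging over $V\sim\Unif(\cube^{n-k})$ yields a $v^\star$ with at least $|J|/2$ disagreements, and the $u$-block aligns the signs. Your choice $u_j=\operatorname{sign}\bigl(\chi_{S_j}(v)-\chi_{T_j}(v)\bigr)$ coincides with the paper's $u_j^\star=\chi_{S_j}(v^\star)$ on every disagreeing coordinate, so the two witnesses are the same.
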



We now prove the bound via a Fano lower bound argument, assuming the noise is exactly Gaussian. 

\begin{proposition}[Support-search lower bound]\label{prop:support-search-main}
There are universal constants $c,\eps_0>0$ such that, for $1\le s\le n$,
$0<\eps\le\eps_0$, if $m \le c \sigma^2ns^2/\eps^2$ then every adaptive estimator has a worst-case error of at least $\sup_{f\in\cS_{n,s}}
\prob_f(\Vert f-{\widehat f}\Vert_{L_\infty}>\eps) \ge \frac12$.
\end{proposition}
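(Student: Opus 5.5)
We plan to run Fano's inequality over a packing of the structured family of \Cref{lem:systematic-family-main}, with $k=s$ and amplitude $a$ chosen so that $a\,d_{\mathrm H}(M,T)>2\eps$ whenever the tuples are far in Hamming distance. We first treat $s\le n/2$; for $n/2<s\le n$ we use $k=\lfloor n/2\rfloor$, which changes only constants since $ns^2$ and $nk^2$ are then comparable.

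\emph{Choice of parameters.} We set $a=C_1\eps/s$ with $C_1$ a large constant. Then $\norm{f_M}_{L_\infty}\le as=C_1\eps\le1$ provided $\eps\le\eps_0\coloneqq 1/C_1$, so every $f_M$ lies in $\cS_{n,s}$.

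\emph{Packing.} Each coordinate $S_j$ ranges over $2^{n-s}\ge 2^{n/2}$ subsets. By Gilbert--Varshamov over the alphabet $\{S\subseteq[n-s]\}$ of length $s$, there is a subfamily $\mathcal M$ with pairwise distance $d_{\mathrm H}\ge s/2$ and $\log|\mathcal M|\ge c'\,s(n-s)\ge c''sn$. On $\mathcal M$, \Cref{lem:systematic-family-main} gives $\norm{f_M-f_T}_{L_\infty}\ge as/2=C_1\eps/2>2\eps$ once $C_1>4$. Therefore any estimator with error at most $\eps$ identifies $M$ by choosing the nearest candidate, and it suffices to lower bound the probability of error in testing $M$ uniformly drawn from $\mathcal M$.

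\emph{KL bound under adaptivity.} With Gaussian noise, the chain rule for adaptive transcripts gives
\begin{equation*}
\mathrm{KL}(\prob_M\Vert\prob_T)=\frac{1}{2\sigma^2}\,\expectation_M\Bigl[\sum\nolimits_{t}(f_M(X_t)-f_T(X_t))^2\Bigr].
\end{equation*}
Pairwise worst-case bounds ($\norm{f_M-f_T}_{L_\infty}\le 2as$) would only give $m\,a^2s^2/\sigma^2\approx m\eps^2/\sigma^2$, losing the factor $s^2$ we need. So we instead bound the mutual information via the mixture: $I(M;\mathcal H_m)\le \frac{1}{2\sigma^2}\sum_t\expectation\bigl[\mathrm{Var}(f_M(X_t)\mid X_t)\bigr]$, where $M$ is now drawn uniformly over \emph{all} tuples (or we compare to the reference $f\equiv0$ and use that $\expectation_M[f_M(x)^2]$ is small). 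For $M$ uniform over all tuples and any fixed $x=(u,v)$, the terms $u_j\chi_{S_j}(v)$ are independent across $j$, each with mean $0$ unless $v=\mathbf 1$ (where it equals $u_j$, deterministic) and variance at most $1$. Hence $\mathrm{Var}(f_M(x))\le a^2 s$ for every $x$, which gives $I\le m a^2 s/(2\sigma^2)=mC_1^2\eps^2/(2\sigma^2 s)$. The main obstacle is reconciling this with the packing, since Fano needs the uniform prior on the separated subfamily $\mathcal M$ rather than on all tuples. We would instead apply Fano directly with the uniform prior on all tuples, together with a Hamming-loss (Assouad/``generalized Fano'') form: the probability that $d_{\mathrm H}(\widehat M,M)\le s/4$ is at most $(I+\log 2)/\log(|\text{all}|/|\text{Hamming ball of radius } s/4|)$, and the denominator is still $\gtrsim sn$. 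Combined with \Cref{lem:systematic-family-main}, which converts $L_\infty$ error at most $\eps$ into $d_{\mathrm H}<s/4$ after rounding to the nearest tuple (using the triangle inequality with $a s/4>2\eps$), this yields failure probability at least $1/2$ whenever $m\,\eps^2/(\sigma^2 s)\lesssim sn$, i.e.\ $m\le c\sigma^2ns^2/\eps^2$. Adaptivity is handled entirely by the mutual-information chain rule, because the per-round variance bound holds for every query point $x$.
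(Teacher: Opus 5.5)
There is a genuine gap at the step you yourself flag as the crux: adaptivity. Your inequality $I(M;\mathcal H_m)\le\frac{1}{2\sigma^2}\sum_t\expectation[\operatorname{Var}(f_M(X_t)\mid X_t)]$ is fine (center round $t$ at $\expectation[f_M(X_t)\mid X_t]$). However, $\operatorname{Var}(f_M(X_t)\mid X_t)$ is a conditional variance under the \emph{joint} law of $(M,X_t)$, and $X_t$ was chosen after seeing responses that depend on $M$.

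Your bound $\operatorname{Var}(f_M(x))\le a^2s$ concerns the prior at a \emph{fixed} $x$. The law of total variance gives $\expectation[\operatorname{Var}(f_M(x)\mid\mathcal H_t)]\le\operatorname{Var}(f_M(x))$ only for fixed $x$, not for $x=X_t(\mathcal H_t)$, because the learner may query where the current posterior variance is largest. Conditioning can genuinely inflate this variance. Suppose the learner picks $u$ to match its current guess of $(\chi_{S_j}(v))_j$ up to an unresolved global sign. Then on $\{X_t=(u,v)\}$ the summands $u_j\chi_{S_j}(v)$ are aligned, and $f_M(X_t)$ has variance of order $a^2s^2$, which is exactly the factor $s$ you are trying to save. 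Nothing in your argument excludes this.

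Your alternative, comparing to $f\equiv0$, fails for the same reason. In $\frac{1}{|\mathcal M|}\sum_M\operatorname{KL}(P_M\Vert P_0)=\frac{1}{2\sigma^2}\frac{1}{|\mathcal M|}\sum_M\expectation_{P_M}[\sum_t f_M(X_t)^2]$, the query law depends on $M$, so you cannot average $f_M(x)^2$ over $M$ at fixed $x$. That exchange is valid only for $\operatorname{KL}(P_0\Vert P_M)$, which is not the divergence controlling $I(M;\mathcal H_m)$ in Fano.

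The bound $I(M;\mathcal H_m)\le kma^2/(2\sigma^2)$ you want is nevertheless true, but it needs the coordinatewise idea the paper uses. The $S_j$ are independent under the uniform prior on all tuples, so
\[
I(M;\mathcal H_m)=\sum_j I(S_j;\mathcal H_m\mid S_{<j})\le\sum_j I(S_j;\mathcal H_m\mid S_{-j}).
\]
Once $S_{-j}$ is known, every candidate differs from the reference $a\sum_{l\ne j}u_l\chi_{S_l}(v)$ by at most $a$ at \emph{every} cube point. Hence \Cref{lem:adaptive-kl} bounds each term by $ma^2/(2\sigma^2)$ for any adaptive rule.

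The paper uses this mechanism even more directly. It conditions on $M_{-j}$ and runs Fano on $S_j$ alone, with the worst-case pairwise bound $\operatorname{KL}\le 2ma^2/\sigma^2$. This bound is immune to adaptivity because only one Walsh term changes, and it loses nothing because that term has amplitude only $a$. This yields $\prob(\widehat S_j\neq S_j)\ge 3/4$ for each $j$, and then $\expectation[d_{\mathrm H}(M,\widehat M)]\ge 3k/4$ gives $\prob(d_{\mathrm H}>k/2)\ge 1/2$. No packing and no Hamming-ball version of Fano is needed.

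With the coordinatewise information bound substituted for your variance argument, the rest of your plan goes through: distance-based Fano with the ball count $2^kQ^{k/4}$, projection onto the nearest $f_N$, and $C_1>8$ so that $ak/4>2\eps$.
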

\vspace{-1em}

\begin{proof}[Proof sketch]
Take $k$ of order $s$ and set $a=4\eps/k$, so that $ak\le1$ for small enough
$\eps$ and the family is contained in $\cS_{n,s}$.  Let $M$ be uniform over all
tuples $(S_1,\ldots,S_k)$.  By \Cref{lem:systematic-family-main}, an estimator
with $L_\infty$ error at most $\eps$ can be converted into an estimator of
$M$ with Hamming error at most a constant fraction of $k$. It remains to show that the observations do not contain enough
information to recover this tuple.  Condition on all coordinates of $M$ but
one. The remaining unknown has alphabet size $2^{n-k}$.  For two alternatives,
the pointwise difference between the corresponding regression functions is at
most $2a$, so an adaptive KL chain rule gives information at most
$O(ma^2/\sigma^2)$ for that coordinate.  Fano's inequality implies that
the expected Hamming error remains a constant fraction of $k$ unless
$m= \Omega(\sigma^2k^2(n-k)/\eps^2)$.  Since $k\asymp s$ and $s\le n$, this gives
the result.  For the full proof, see \Cref{app:support-search-main}.
\end{proof}

\vspace{-0.3cm}

The second component is a high-confidence lower bound on a fixed known support, which gives the term $\sigma^2s\log(1/\delta)/\eps^2$. Here no support search is needed: the obstruction is simply Gaussian estimation in an $s$-dimensional coefficient space equipped with the $\| \cdot \|_1$-geometry induced by the $L_\infty$ loss. Consider the class of linear functions $\{ f_\theta(x)=\sum_{j=1}^s\theta_jx_j: \norm{\theta}_1\le1 \} \subseteq \cS_{n,s}$. On this subclass, the cube realizes every sign pattern on the first $s$ coordinates, and therefore $\norm{f_\theta-f_{\theta'}}_{L_\infty} = \norm{\theta-\theta'}_1$. Thus, the sparse Walsh problem contains a~$\| \cdot \|_1$ Gaussian linear-estimation problem on a known support. Combining these two obstructions gives the claimed sparse lower bound. See \Cref{app:sparse-highconf-main} for a full proof. 

\begin{lemma}\label{prop:sparse-highconf-main}
There are universal constants $c,\eps_0>0$ such that, if $\log(1/\delta)\ge s$, $0<\eps\le\eps_0$, and $m \le c \sigma^2s \log(1/\delta)/\eps^2$, then every adaptive estimator satisfies $\sup_{f\in\cS_{n,s}}
\prob_f(\Vert f - {\widehat f} \Vert_{L_\infty}>\eps) \ge \delta$.
\end{lemma}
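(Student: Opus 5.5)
We would prove the high-confidence lower bound on the known-support linear subclass $\cF_{\mathrm{lin}}=\{f_\theta(x)=\sum_{j=1}^s\theta_jx_j:\norm{\theta}_1\le1\}\subseteq\cS_{n,s}$. On this subclass $\norm{f_\theta-f_{\theta'}}_{L_\infty}=\norm{\theta-\theta'}_1$, so it suffices to show that no adaptive estimator can recover $\theta$ within $\ell_1$-error $\eps$ with probability $1-\delta$. The route is a two-point (or small multiple-hypothesis) testing argument with a sharp Gaussian likelihood-ratio bound, rather than Fano. Fano only yields constant error probability and would miss the $\log(1/\delta)$ dependence.

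\textbf{Step 1: construct many well-separated hypotheses with small KL.} Fix $\theta^0=0$. For each sign pattern $\eta\in\{-1,1\}^s$, set $\theta^\eta=(3\eps/s)\,\eta$. Then $\norm{\theta^\eta}_1=3\eps\le1$ for $\eps\le\eps_0$, so $\theta^\eta\in\cF_{\mathrm{lin}}$, and $\norm{\theta^\eta-\theta^0}_1=3\eps>2\eps$. Hence any $\widehat f$ that is within $\eps$ of $f_{\theta^0}$ is more than $\eps$ away from every $f_{\theta^\eta}$.

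Under the adaptive Gaussian model, the chain rule gives
\[
\mathrm{KL}(\prob_{\theta^0}\Vert\prob_{\theta^\eta})=\frac{1}{2\sigma^2}\expectation_{\theta^0}\sum_{t=1}^m\langle\theta^\eta,X_t\rangle^2 .
\]
A single fixed $\eta$ can be aligned with $X_t$, giving $\langle\theta^\eta,X_t\rangle^2$ as large as $9\eps^2$, which is too large. Averaging over uniformly random $\eta$ fixes this: $\expectation_\eta\langle\theta^\eta,X_t\rangle^2=9\eps^2/s$. So for a random $\eta$ the average divergence is $9m\eps^2/(2\sigma^2 s)$, which is at most $c'\log(1/\delta)$ under the hypothesis $m\le c\sigma^2 s\log(1/\delta)/\eps^2$.

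\textbf{Step 2: convert small KL into error probability $\ge\delta$.} Suppose the learner succeeds with failure probability below $\delta$ on every $f\in\cF_{\mathrm{lin}}$. Let $E$ be the event $\norm{\widehat f-f_{\theta^0}}_{L_\infty}\le\eps$, so $\prob_{\theta^0}(E)\ge1-\delta\ge3/4$. By the separation in Step 1, $\prob_{\theta^\eta}(E)\le\prob_{\theta^\eta}(\text{failure})<\delta$ for every $\eta$. Apply the data-processing inequality to the binary event $E$:
\[
\mathrm{KL}(\prob_{\theta^0}\Vert\bar\prob)\ \ge\ \tfrac34\log\frac{3/4}{\bar\prob(E)}-\log 2,
\]
where $\bar\prob=\expectation_\eta\prob_{\theta^\eta}$ is the mixture, which satisfies $\bar\prob(E)<\delta$. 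Convexity gives $\mathrm{KL}(\prob_{\theta^0}\Vert\bar\prob)\le\expectation_\eta\mathrm{KL}(\prob_{\theta^0}\Vert\prob_{\theta^\eta})\le c'\log(1/\delta)$. Combining, $c'\log(1/\delta)\ge\tfrac34\log(1/\delta)-O(1)$. For small $c$ this is a contradiction, since $\log(1/\delta)\ge s\ge1$ and $\delta\le1/4$ let us absorb the constant. (Alternatively, the Bretagnolle–Huber inequality $\prob_0(E^c)+\bar\prob(E)\ge\tfrac12 e^{-\mathrm{KL}}$ yields the same conclusion.)

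\textbf{Main obstacle.} The delicate point is that the learner is adaptive and may align its queries $X_t$ with a particular sign pattern. This is exactly why the mixture over $\eta$, with KL taken against the null and the average computed under $\prob_{\theta^0}$, is essential: the query distribution under the null does not depend on $\eta$, so the averaging is legitimate. One also has to keep $\log(1/\delta)$ dominant over the additive constants, which the assumptions $\log(1/\delta)\ge s\ge1$ and $\delta\le 1/4$ guarantee. Note that this construction does not actually use $\log(1/\delta)\ge s$ beyond that; that assumption matters only for matching the full rate in \Cref{thm:sparse-noisy}.
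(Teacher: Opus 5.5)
Your proof is correct, and it takes a genuinely different route from the paper's.

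\emph{The paper's route.} The paper stays Bayesian and mirrors \Cref{prop:one-point-main}. It puts the Gaussian prior $\theta=\mu G$ with $\mu=C_0\eps/\sqrt{s\log(1/\delta)}$ on the same linear subclass. It handles adaptivity by conditioning on the realized transcript, reusing the trace argument of \Cref{lem:posterior-trace-main} to get $\tr(\Sigma_m)\ge s/2$. It then averages over query points, $\expectation_\eta[\eta^\top\Sigma_m\eta]=\tr(\Sigma_m)$, to find a point whose posterior prediction variance is at least $\mu^2 s/2$. The Gaussian tail bound of \Cref{lem:gauss-log-tail} gives conditional error at least $2\delta$. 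Finally it conditions on $\norm{\mu G}_1\le 1$, at a cost of $\delta/2$.

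\emph{Your route.} You run a Le Cam-type point-versus-mixture test. Adaptivity is handled by evaluating the KL divergence under the null, where the query law does not depend on $\eta$. Isotropy of sign vectors is used on the hypothesis side ($\expectation_\eta\langle\eta,x\rangle^2=s$ for every query $x$) rather than the query side. The $\log(1/\delta)$ factor enters through data processing on a binary event rather than through a Gaussian tail.

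\emph{What each buys.} Your finite family lies exactly in the class, so there is no conditioning step and no chi-square tail. As a result, the hypothesis $\log(1/\delta)\ge s$ is not needed beyond $\delta\le 1/4$. Your closing remark misattributes its role, though: in the paper it is used precisely to guarantee $\prob(\norm{\mu G}_1>1)\le\delta/2$. The theorem does not need it either, since for $\log(1/\delta)<s\le n$ the $ns^2$ term dominates. The paper's version buys a Bayes-risk statement and reuses the low-degree one-point argument verbatim, since the subclass lies in $\cP_{n,1}$.

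\emph{Two minor points.} First, absorbing constants via $\tfrac34\log\frac{3/4}{\bar P(E)}-\log 2$ uses $\delta\le 1/4$, which comes from the theorem rather than the lemma; the lemma's hypotheses only give $\delta\le 1/e$. In that wider range, use the exact value $\mathrm{kl}(1-\delta\,\Vert\,\delta)=(1-2\delta)\log\frac{1-\delta}{\delta}$, which is still at least an absolute constant times $\log(1/\delta)$. Second, the Bretagnolle--Huber variant only yields $4\delta^{1-c'}>1$. That is contradictory only for $\delta$ below an absolute constant smaller than $1/4$, so the data-processing route is the one to keep.
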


\vspace{-0.2cm}

\section{Open directions} \label{sec:openquestions}

\vspace{-0.2cm}


Several questions remain open. In the sparse upper bound, the factor
\(\sigma^2+1\) suggests that our estimator does not fully separate the noiseless
design cost from the stochastic noise contribution. It would be interesting to
know if sparse Fourier-type arguments for Walsh--Hadamard designs can yield a sharper
low-noise guarantee with a purely \(\sigma^2\)-dependent stochastic term. For
low-degree polynomials, the optimal dependence on the degree also remains
unclear, particularly the role of the Boolean Bohnenblust--Hille constant in the
lower bound. Finally, the sparse theorem is most meaningful when \(s\lesssim n\).
Extending the characterization to larger sparsity, such as \(s=\omega(n)\),
requires new ideas. While the upper bound extends without issues, the current lower bound \(ns^2\) scaling cannot persist for $s \geq n$, as can be seen for $s=2^n$. More broadly, it remains open to identify natural polynomial classes
for which adaptive querying can improve the minimax rate.

\section*{Acknowledgements}

Jasper van Doornmalen thanks the support of ANID FONDECYT Postdoctorado Nr.\@~3250132.
Mathieu Molina thanks the support of the European Research Council (ERC) under the European Union's Horizon Europe Program (FACT, grant agreement No.~101170373).
José Verschae thanks the support of ANID FONDECYT Regular Nr.\@~1260640 and the Center for Mathematical Modeling (CMM) Basal fund~FB210005.
Victor Verdugo thanks the support of ANID FONDECYT Regular Nr.\@~1241846.

{
\small
\bibliographystyle{plainnat}
\bibliography{bibliography}
}

\newpage 

\appendix

\section{Tools for upper and lower-bounds}


\subsection{Walsh polynomials}

\begin{lemma}[Moments of signed Walsh sums]\label{lem:signed-moments}
Let $\cA\subseteq 2^{[n]}$ satisfy $|\cA|=k$, and let $g=\sum_{S\in \cA}\eta_S\chi_S \in \cG_s$. 
If $X\sim\Unif(\cuben)$, then
\begin{equation*}
    \norm{g}_{L_\infty}\le k,
    \qquad
    \expectation[g(X)^2]=k,
    \qquad
    \expectation[g(X)^4]\le k^3 .
\end{equation*}
\end{lemma}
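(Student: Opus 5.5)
The three bounds follow from orthonormality of the Walsh basis and the group structure $\chi_S\chi_T=\chi_{S\symmetricdifference T}$; here the sets in $\cA$ are distinct and $\eta_S\in\{-1,1\}$. I will prove them in order.

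\textbf{Sup norm and second moment.} For the sup norm, apply the triangle inequality pointwise. Since $|\chi_S(x)|=1$, we get $|g(x)|\le\sum_{S\in\cA}|\eta_S|=k$. For the second moment, expand $g^2$ and use $\expectation[\chi_S(X)\chi_T(X)]=\mathds{1}\{S=T\}$. This gives $\expectation[g(X)^2]=\sum_{S\in\cA}\eta_S^2=k$, i.e.\ Parseval.

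\textbf{Fourth moment.} The quickest route is the interpolation
\[
\expectation[g(X)^4]\le\norm{g}_{L_\infty}^2\,\expectation[g(X)^2]\le k^2\cdot k=k^3,
\]
using the two bounds just established. This already suffices for the statement.

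\textbf{Alternative combinatorial check.} One could instead write
\[
\expectation[g^4]=\sum_{S_1,S_2,S_3,S_4\in\cA}\eta_{S_1}\eta_{S_2}\eta_{S_3}\eta_{S_4}\,\mathds{1}\{S_1\symmetricdifference S_2\symmetricdifference S_3\symmetricdifference S_4=\emptyset\}.
\]
The condition forces $S_4=S_1\symmetricdifference S_2\symmetricdifference S_3$, so at most $k^3$ of the $k^4$ tuples contribute. Each contributing term has absolute value at most $1$, which again gives the bound $k^3$.

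There is no real obstacle. The only point needing care is that the sets in $\cA$ are distinct, which is what makes the cross terms vanish in the $L_2$ computation.
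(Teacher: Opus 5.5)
Your proof is correct. The sup-norm and second-moment parts match the paper. Your ``alternative combinatorial check'' is exactly the paper's fourth-moment argument: expand $g^4$, observe that $\expectation[\chi_{S_1}\chi_{S_2}\chi_{S_3}\chi_{S_4}]$ vanishes unless $S_1\symmetricdifference S_2\symmetricdifference S_3\symmetricdifference S_4=\emptyset$, and count at most $k^3$ surviving quadruples.

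Your primary route, $\expectation[g(X)^4]\le\norm{g}_{L_\infty}^2\,\expectation[g(X)^2]$, is genuinely different and shorter. It uses no character structure at all, only the two bounds you have already proven. It would therefore hold for any function with $\norm{g}_{L_\infty}\le k$ and $\expectation[g^2]\le k$.

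The paper's counting argument gives finer information. It bounds $\expectation[g^4]$ by the number of quadruples in $\cA$ whose symmetric difference is empty (the additive energy of $\cA$). This can be much smaller than $k^3$; for instance it equals $3k^2-2k$ when no four distinct elements of $\cA$ have empty symmetric difference.

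Both bounds are attained, however, when $\cA$ is a subgroup under $\symmetricdifference$ and all signs are $+1$. In that case $g$ equals $k$ on a set of measure $1/k$ and $0$ elsewhere, so $\expectation[g^4]=k^3$. Hence, for the worst-case bound used in the Bernstein step of the sparse upper bound, neither route gives anything more than the other.
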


\begin{proof}
For every $x\in\cuben$, since each Walsh function takes values in $\cube$,
\begin{equation*}
    |g(x)|
    \le
    \sum_{S\in \cA} |\eta_S\chi_S(x)|
    =
    k .
\end{equation*}
Thus, $\norm{g}_{L_\infty}\le k$.

For the second moment, by orthogonality of the Walsh basis,
\begin{align*}
    \expectation[g(X)^2]
    &=
    \sum_{S,T\in \cA}
    \eta_S\eta_T\expectation[\chi_S(X)\chi_T(X)]  \\
    &=
    \sum_{S,T\in \cA}
    \eta_S\eta_T\indicator\{S=T\}
    =
    \sum_{S\in \cA}\eta_S^2
    =
    k .
\end{align*}

For the fourth moment, expanding gives
\begin{equation*}
    \expectation[g(X)^4]
    =
    \sum_{S_1,S_2,S_3,S_4\in \cA}
    \eta_{S_1}\eta_{S_2}\eta_{S_3}\eta_{S_4}
    \expectation[
        \chi_{S_1}(X)\chi_{S_2}(X)\chi_{S_3}(X)\chi_{S_4}(X)
    ] .
\end{equation*}
The expectation is nonzero only when the product Walsh function is constant,
i.e., when $S_1 \symmetricdifference S_2 \symmetricdifference S_3 \symmetricdifference S_4=\emptyset$.
Equivalently, $S_4=S_1\symmetricdifference S_2\symmetricdifference S_3$, so $S_4$ is determined by
$S_1,S_2,S_3$. Hence, at most $k^3$ quadruples contribute to the sum, and each
contributing term has absolute value at most $1$. Therefore, $    \expectation[g(X)^4]\le k^3 $.
\end{proof}

\subsection{Some properties of the Gaussian distribution}

\begin{lemma}[Gaussian self-conjugacy]
\label{lem:posterior-cov}
Let $\theta\sim\mathcal{N}(0,\Sigma_0)$ in $\reals^b$, and suppose that,
conditionally on adaptively chosen query vectors $a_t\in\reals^b$,  $Y_t=\langle a_t,\theta\rangle+\xi_t$, $\xi_t\sim\mathcal{N}(0,\sigma^2)$ independently across $t$. Conditional on the realized query vectors and
observations, the posterior law of $\theta$ is Gaussian with covariance
\begin{equation*}
    \Sigma_m
    =
    \left(
        \Sigma_0^{-1}
        +
        \frac{1}{\sigma^2}\sum_{t=1}^m a_ta_t^\top
    \right)^{-1}.
\end{equation*}
\end{lemma}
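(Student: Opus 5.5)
The plan is to prove the statement by induction on the number of rounds $m$, using the chain rule for densities and the fact that the query rule is a measurable function of the past transcript. The case $m=0$ is the prior $\mathcal N(0,\Sigma_0)$.

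For the inductive step, suppose that conditional on $\mathcal H_{t-1}$ the law of $\theta$ is $\mathcal N(\mu_{t-1},\Sigma_{t-1})$. The precision matrix is $\Sigma_{t-1}^{-1}=\Sigma_0^{-1}+\sigma^{-2}\sum_{s<t}a_sa_s^\top$, and the mean $\mu_{t-1}$ is some function of the transcript. The key observation is that $a_t$ is $\mathcal H_{t-1}$-measurable, possibly after extra independent randomization. Any internal randomness of the learner can be included in $\mathcal H_{t-1}$, and it is independent of $\theta$ and of the noise, so it does not change the posterior. Hence, conditional on $\mathcal H_{t-1}$, the vector $a_t$ is fixed. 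The likelihood of the new observation is then
\[
p(Y_t\mid\theta,\mathcal H_{t-1},a_t)\propto\exp\bigl(-(Y_t-\langle a_t,\theta\rangle)^2/(2\sigma^2)\bigr),
\]
because $\xi_t$ is independent of $(\theta,\mathcal H_{t-1})$.

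Multiplying this likelihood by the Gaussian density of $\theta$ given $\mathcal H_{t-1}$, we complete the square in $\theta$. The quadratic form in $\theta$ has matrix $\Sigma_{t-1}^{-1}+\sigma^{-2}a_ta_t^\top$, and the linear term is $\Sigma_{t-1}^{-1}\mu_{t-1}+\sigma^{-2}Y_ta_t$. The posterior is therefore Gaussian with precision $\Sigma_t^{-1}=\Sigma_{t-1}^{-1}+\sigma^{-2}a_ta_t^\top$. This matrix is positive definite, hence invertible, since it is a positive definite matrix plus a PSD one. The mean is $\mu_t=\Sigma_t(\Sigma_{t-1}^{-1}\mu_{t-1}+\sigma^{-2}Y_ta_t)$. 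Unrolling the recursion gives the stated formula for $\Sigma_m$.

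The covariance depends only on the realized query vectors and not on the $Y_t$. This is why the statement can hold for every realized transcript. The only delicate point is the adaptive conditioning. One has to check that the joint density of $(\theta,\mathcal H_m)$ factorizes as
\[
p(\theta)\prod_t p(a_t\mid\mathcal H_{t-1})\,\phi_\sigma\bigl(Y_t-\langle a_t,\theta\rangle\bigr).
\]
The query-selection kernels $p(a_t\mid\mathcal H_{t-1})$ do not depend on $\theta$, so they cancel in Bayes' formula. With this factorization the computation is identical to the fixed-design case, and I expect this factorization to be the main (though mild) obstacle.
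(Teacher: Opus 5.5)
Your proposal is correct and is essentially the paper's argument. The paper fixes the realized transcript and completes the square once, in the product of the prior density and all $m$ Gaussian likelihoods; you do the same computation one round at a time and unroll the recursion. Your explicit factorization, with query kernels $p(a_t\mid\mathcal H_{t-1})$ that do not depend on $\theta$, is the justification for adaptivity that the paper leaves implicit when it treats the realized query vectors as fixed.
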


\begin{proof}
Fix realized query vectors $a_1,\ldots,a_m\in\reals^b$ and observations
$y_1,\ldots,y_m$. The prior density of $\theta$ is proportional to
\begin{equation*}
    \exp\left(
        -\frac{1}{2}\theta^\top\Sigma_0^{-1}\theta
    \right).
\end{equation*}
Conditionally on $\theta$, the observation density is proportional to
\begin{equation*}
    \exp\left(
        -\frac{1}{2\sigma^2}
        \sum_{t=1}^m
        (y_t-\langle a_t,\theta\rangle)^2
    \right).
\end{equation*}
Thus, by Bayes' rule, the posterior density is proportional to
\begin{align*}
    \exp\bigg(
        -\frac{1}{2}\theta^\top\Sigma_0^{-1}\theta
        -
        \frac{1}{2\sigma^2}
        \sum_{t=1}^m
        (y_t-\langle a_t,\theta\rangle)^2
    \bigg).
\end{align*}
Expanding the quadratic term gives
\begin{align*}
    \sum_{t=1}^m (y_t-\langle a_t,\theta\rangle)^2
    &=
    \sum_{t=1}^m y_t^2
    -
    2\sum_{t=1}^m y_t a_t^\top\theta
    +
    \theta^\top
    \left(\sum_{t=1}^m a_ta_t^\top\right)
    \theta .
\end{align*}
The terms independent of $\theta$ are absorbed into the normalizing constant.
Therefore, the posterior density is proportional in $\theta$ to
\begin{align*}
    \exp\bigg(
        &-\frac{1}{2}
        \theta^\top
        \left(
            \Sigma_0^{-1}
            +
            \frac{1}{\sigma^2}\sum_{t=1}^m a_ta_t^\top
        \right)
        \theta
        +
        \left(
            \frac{1}{\sigma^2}\sum_{t=1}^m y_ta_t
        \right)^\top
        \theta
    \bigg).
\end{align*}
This is a Gaussian density. Completing the square shows that the posterior covariance matrix is as claimed:
\begin{equation*}
    \Sigma_0^{-1}
    +
    \frac{1}{\sigma^2}\sum_{t=1}^m a_ta_t^\top.\qedhere
\end{equation*}
\end{proof}

\begin{lemma}[Gaussian logarithmic tail]\label{lem:gauss-log-tail}
Fix $K>0$ and $L_0\ge 1$ such that $Ke^{-L_0}<1$. Then there exists
$a_0=a_0(K,L_0)>0$ such that, for every $L\ge L_0$ and every
$Z\sim\mathcal{N}(0,1)$,
\begin{equation*}
    \prob(|Z|\ge a_0\sqrt L)\ge Ke^{-L}.
\end{equation*}
\end{lemma}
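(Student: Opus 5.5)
Since $Z$ is symmetric, I would write $\prob(|Z|\ge t)=2\bar\Phi(t)$ with $\bar\Phi(t)=\prob(Z\ge t)$, and compare this tail to $Ke^{-L}$ using a standard Mills-ratio lower bound. The plan is to fix $a_0$ small enough that the Gaussian exponent $a_0^2L/2$ is a fixed fraction of $L$, say $a_0^2/2\le 1/2$. Then the Gaussian tail decays like $e^{-L/2}$ up to polynomial factors. This beats $Ke^{-L}$ for large $L$, and the remaining bounded range of $L$ is handled by compactness and monotonicity.

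\textbf{Step 1 (tail lower bound).} For $t>0$ I would use the classical inequality
\[
\bar\Phi(t)\ge \frac{t}{1+t^2}\,\frac{e^{-t^2/2}}{\sqrt{2\pi}}.
\]
It follows by differentiating $\frac{t}{1+t^2}\phi(t)-\bar\Phi(t)$, or it can be quoted. Hence, for $t\ge 1$, we get $\prob(|Z|\ge t)\ge \frac{1}{t\sqrt{2\pi}}e^{-t^2/2}$.

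\textbf{Step 2 (choice of $a_0$ and the regime $t\ge1$).} Choose $a_0\in(0,1]$ and set $t=a_0\sqrt L$. When $t\ge 1$, Step 1 gives
\[
\prob(|Z|\ge a_0\sqrt L)\ge \frac{e^{-a_0^2L/2}}{a_0\sqrt{2\pi L}} .
\]
It therefore suffices to show that
\[
\frac{e^{(1-a_0^2/2)L}}{a_0\sqrt{2\pi L}}\ge K .
\]
Since $a_0\le1$, the exponent is at least $L/2$, and the map $L\mapsto e^{(1-a_0^2/2)L}/\sqrt L$ is increasing for $L\ge1$. So the condition only needs to be checked at $L=L_0$. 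To keep the regimes clean, I would rather handle everything uniformly in Step 3.

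\textbf{Step 3 (uniform argument, the delicate point).} The subtle case is when $L$ is close to $L_0$ and $Ke^{-L_0}$ is close to $1$. Here the hypothesis $Ke^{-L_0}<1$ is exactly what is needed. As $a_0\to0$, $\prob(|Z|\ge a_0\sqrt{L_0})\to 1>Ke^{-L_0}$. By continuity, I can pick $a_0$ so small that $\prob(|Z|\ge a_0\sqrt{L_0})\ge Ke^{-L_0}$. I then need this to persist for all $L\ge L_0$, and I would do so by studying the ratio
\[
h(L)=\frac{\prob(|Z|\ge a_0\sqrt L)}{Ke^{-L}}
\]
and showing it is nondecreasing once $a_0$ is small. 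Writing $r(L)=\log \prob(|Z|\ge a_0\sqrt L)$, its derivative is
\[
r'(L)=-\frac{a_0}{2\sqrt L}\cdot\frac{\phi(a_0\sqrt L)}{\bar\Phi(a_0\sqrt L)} .
\]
By the Mills-ratio bound, $\phi(t)/\bar\Phi(t)\le t+1$ for all $t\ge0$ (using $\phi(t)/\bar\Phi(t)\le (t+\sqrt{t^2+4})/2$). This gives
\[
|r'(L)|\le \frac{a_0^2}{2}+\frac{a_0}{2\sqrt{L}}\le \frac{a_0^2+a_0}{2},
\]
since $L\ge L_0\ge1$. Taking $a_0\le 1/2$ makes $|r'|<1$, so $\log h$ has derivative $r'(L)+1>0$ and $h$ is increasing on $[L_0,\infty)$. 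Combined with $h(L_0)\ge1$, this yields the claim for all $L\ge L_0$.

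The final choice is $a_0=\min\{1/2,a_1\}$, where $a_1$ comes from the continuity step at $L_0$. The Step 1 bound is then not even needed, apart from justifying the Mills-ratio hazard bound. The main obstacle is precisely this hazard-rate estimate $\phi/\bar\Phi\le t+1$, which I would cite as a standard Gaussian inequality.
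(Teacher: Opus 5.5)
Your argument is correct, but it takes a different route from the paper.

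The paper splits $[L_0,\infty)$ into two regimes. For a fixed $b<\sqrt2$, the Mills-ratio bound gives $e^{L}\prob(|Z|\ge b\sqrt L)\to\infty$, so there is a non-explicit $L_1$ beyond which $b$ already works. On the compact range $[L_0,L_1]$, the paper takes $a_0\le b$ small enough that $\prob(|Z|\ge a_0\sqrt{L_1})\ge Ke^{-L_0}$. It then concludes using monotonicity of the tail in $L$ and the crude bound $Ke^{-L}\le Ke^{-L_0}$.

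You instead show that $L\mapsto e^{L}\prob(|Z|\ge a_0\sqrt L)$ is increasing on $[1,\infty)$ once $a_0\le 1/2$, so only the single point $L=L_0$ has to be checked. This removes the auxiliary $L_1$ and makes the constant explicit: $a_0$ is fixed by the condition at $L_0$ alone, capped at $1/2$. The paper's version avoids any derivative or hazard-rate computation.

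You do not actually need the Birnbaum-type hazard bound you plan to cite. The Mills inequality from your Step~1 holds for all $t>0$ and gives $\phi(t)/\bar\Phi(t)\le t+1/t$. Hence $|r'(L)|\le a_0^2/2+1/(2L)\le a_0^2/2+1/2<1$ for $a_0<1$ and $L\ge 1$; the hypothesis $L_0\ge1$ does the remaining work. Step~2 is superfluous and can be dropped.
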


\begin{proof}
Let $p(u)\coloneqq\prob(|Z|\ge u)$. Choose any $b\in(0,\sqrt 2)$. By Mill's ratio lower bound (see e.g. Proposition 2.1.2 of \cite{Vershynin_2018}), we have  $p(u)\ge\frac{1}{\sqrt{2\pi}}\frac{u}{1+u^2}e^{-u^2/2}$ for $u\ge 0$. So there exists a universal constant $c>0$
such that $p(u)\ge c u^{-1}e^{-u^2/2}$ when $ u\ge 1 $. Hence,
\begin{equation*}
    e^L p(b\sqrt L) \geq \frac{c}{b\sqrt L} e^{(1-b^2/2)L}\xrightarrow[L \to \infty ]{} \infty,
\end{equation*}
because $b^2/2<1$. Therefore, there exists $L_1\ge L_0$ such that
\begin{equation*}
    p(b\sqrt L)\ge Ke^{-L}
    \qquad\text{for all } L\ge L_1 .
\end{equation*}

It remains to consider $L\in[L_0,L_1]$. Since $Ke^{-L_0}<1=p(0)$ and $p$ is
continuous at $0$, and because $K e^{-L_0}<1$, we can choose $a_0\in(0,b]$ small enough so that
\begin{equation*}
    p(a_0\sqrt{L_1})\ge Ke^{-L_0}.
\end{equation*}
Then, for every $L\in[L_0,L_1]$, monotonicity of $p$ gives
\begin{equation*}
    p(a_0\sqrt L)
    \ge
    p(a_0\sqrt{L_1})
    \ge
    Ke^{-L_0}
    \ge
    Ke^{-L}.
\end{equation*}
For $L\ge L_1$, since $a_0\le b$ and $p$ is decreasing,
\begin{equation*}
    p(a_0\sqrt L)
    \ge
    p(b\sqrt L)
    \ge
    Ke^{-L}.
\end{equation*}
This proves the claim.
\end{proof}

\subsection{Information theoretic lower bounds}

The following Lemma is the standard divergence decomposition for adaptive experiments, see,
for example, the arguments of Lemma 15.1 of \citet{lattimore2020bandit}. 

\begin{lemma}[KL divergence under adaptive queries]\label{lem:adaptive-kl}
Let $f$ and $g$ be two candidate regression functions, and suppose that the
noise is Gaussian with law $\mathcal{N}(0,\sigma^2)$. Under any adaptive design
rule,
\begin{equation*}
    \operatorname{KL}(P_f^{(m)}\|P_g^{(m)})
    =
    \frac{1}{2\sigma^2}
    \sum_{t=1}^m
    \expectation_f\big[(f(X_t)-g(X_t))^2\big]
    \le
    \frac{m}{2\sigma^2}\norm{f-g}_{L_\infty}^2 .
\end{equation*}
\end{lemma}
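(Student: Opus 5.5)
The plan is the chain rule for KL under adaptive sampling, as in Lemma 15.1 of \citet{lattimore2020bandit}. Fix an adaptive design rule, i.e.\ measurable maps $\pi_t$ taking the history $\mathcal H_t$ (plus possibly independent internal randomness $U$, whose law is the same under $f$ and $g$ and so contributes zero divergence) to a query $X_t$. The law $P_f^{(m)}$ of the transcript $(X_1,Y_1,\ldots,X_m,Y_m)$ then factorizes as a product of kernels: $X_t\mid\mathcal H_t$ is distributed by $\pi_t(\cdot\mid\mathcal H_t)$, the same kernel under $f$ and $g$, and $Y_t\mid(\mathcal H_t,X_t)\sim\mathcal N(f(X_t),\sigma^2)$ under $P_f$, respectively $\mathcal N(g(X_t),\sigma^2)$ under $P_g$.

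First, I apply the chain rule for KL divergence to this sequential factorization. Each query step contributes
\[
\expectation_f\big[\operatorname{KL}\big(\pi_t(\cdot\mid\mathcal H_t)\,\big\|\,\pi_t(\cdot\mid\mathcal H_t)\big)\big]=0,
\]
since the design rule does not depend on the target function. Each observation step contributes
\[
\expectation_f\big[\operatorname{KL}\big(\mathcal N(f(X_t),\sigma^2)\,\big\|\,\mathcal N(g(X_t),\sigma^2)\big)\big].
\]
Concretely, the log-likelihood ratio is $\log\frac{dP_f}{dP_g}=\sum_t\log\frac{\phi_\sigma(Y_t-f(X_t))}{\phi_\sigma(Y_t-g(X_t))}$, because the design factors cancel. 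Taking the $P_f$-expectation term by term, I condition on $(\mathcal H_t,X_t)$ using the tower property.

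Second, I use the Gaussian identity $\operatorname{KL}(\mathcal N(a,\sigma^2)\|\mathcal N(b,\sigma^2))=(a-b)^2/(2\sigma^2)$, which follows by direct computation: with $Y=a+\xi$, the expectation of $\frac{(Y-b)^2-(Y-a)^2}{2\sigma^2}$ equals $\frac{(a-b)^2}{2\sigma^2}$. Summing over $t$ gives the claimed equality.

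Finally, I bound each summand pointwise by $(f(X_t)-g(X_t))^2\le\norm{f-g}_{L_\infty}^2$, which yields the inequality after summing $m$ terms.

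The only delicate point is measure-theoretic. The transcript lives on $(\cuben\times\R)^m$, and the queries are discrete while the observations are continuous, so I would write densities with respect to counting measure times Lebesgue measure. I also need absolute continuity, which holds because Gaussians with equal variance are mutually absolutely continuous. Integrability of the log-likelihood ratio is immediate, since $f$ and $g$ are bounded on the finite cube.
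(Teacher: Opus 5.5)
Your proposal is correct and follows essentially the same argument as the paper. Both use the chain rule for relative entropy over the sequential factorization of the transcript, with the query kernels contributing zero because they do not depend on the target, then the equal-variance Gaussian identity $\operatorname{KL}(\mathcal{N}(a,\sigma^2)\|\mathcal{N}(b,\sigma^2))=(a-b)^2/(2\sigma^2)$ for each observation step, and finally the pointwise bound $(f(X_t)-g(X_t))^2\le\norm{f-g}_{L_\infty}^2$; your extra remarks on internal randomness, dominating measures, and absolute continuity are sound refinements of details the paper leaves implicit.
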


\begin{proof}
Recall that $\mathcal H_t=(X_1,Y_1,\ldots,X_{t-1},Y_{t-1})$ denotes the
observations available before choosing query $X_t$. The adaptive design rule
specifies, for each $t$, a conditional distribution for $X_t$ given
$\mathcal H_t$. This conditional distribution is the same under $P_f$ and
$P_g$ and only the conditional law of $Y_t$ given $(\mathcal H_t,X_t)$ changes.

By the chain rule for relative entropy,
\begin{align*}
    \operatorname{KL}(P_f^{(m)}\|P_g^{(m)})
    =
    \sum_{t=1}^m
    \expectation_f\bigg[
        \operatorname{KL}\big(
            P_f(X_t,Y_t\mid \mathcal H_t)
            \,\big\|\,
            P_g(X_t,Y_t\mid \mathcal H_t)
        \big)
    \bigg].
\end{align*}
For each realization of $\mathcal H_t$, the conditional law of $X_t$ is the
same under $f$ and $g$. Therefore, the contribution of $X_t$ to the conditional
KL is zero, and the previous display reduces to
\begin{align*}
    \operatorname{KL}(P_f^{(m)}\|P_g^{(m)})
    =
    \sum_{t=1}^m
    \expectation_f\bigg[
        \operatorname{KL}\big(
            P_f(Y_t\mid \mathcal H_t,X_t)
            \,\big\|\,
            P_g(Y_t\mid \mathcal H_t,X_t)
        \big)
    \bigg].
\end{align*}
Given $\mathcal H_t$ and $X_t$, the two conditional laws are
$\mathcal{N}(f(X_t),\sigma^2)$ and $\mathcal{N}(g(X_t),\sigma^2)$. Since
\begin{equation*}
    \operatorname{KL}\big(
        \mathcal{N}(a,\sigma^2)
        \,\big\|\,
        \mathcal{N}(b,\sigma^2)
    \big)
    =
    \frac{(a-b)^2}{2\sigma^2},
\end{equation*}
we obtain
\begin{equation*}
    \operatorname{KL}(P_f^{(m)}\|P_g^{(m)})
    =
    \frac{1}{2\sigma^2}
    \sum_{t=1}^m
    \expectation_f\big[(f(X_t)-g(X_t))^2\big].
\end{equation*}
Finally, $(f(X_t)-g(X_t))^2\le\norm{f-g}_{L_\infty}^2$ almost surely for every
$t$, which gives
\begin{equation*}
    \operatorname{KL}(P_f^{(m)}\|P_g^{(m)})
    \le
    \frac{m}{2\sigma^2}\norm{f-g}_{L_\infty}^2 . \qedhere
\end{equation*}
\end{proof}

We recall Fano's inequality, see Theorem 4.10 of \citet{rigollet2023hds}.

\begin{lemma}[Fano's inequality]\label{lem:fano}
Let $F$ be uniformly distributed on a finite set $\cF$ with
$|\cF|=M\ge 2$, and let $Z$ be an observation whose law under
$F=f$ is $P_f$. Then every estimator $\smash[t]{\widehat \Theta}(Z)$ satisfies 
\begin{equation*}
    \prob(\smash[t]{\widehat f}\neq f)
    \ge
    1-\frac{\frac{1}{M^2}
    \sum_{f,f'\in\cF}
    \operatorname{KL}(P_f\|P_{f'})+\log 2}{\log M}.
\end{equation*}
\end{lemma}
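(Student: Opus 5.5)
The plan is the classical two-step route: first prove Fano's inequality in its mutual-information form, then bound the mutual information by the average pairwise KL divergence. Write $\smash[t]{\widehat f}=\smash[t]{\widehat\Theta}(Z)$ for the estimator and $p_e\coloneqq\prob(\smash[t]{\widehat f}\neq F)$ for its error probability. The target intermediate inequality is
\begin{equation*}
  p_e \ge 1-\frac{I(F;Z)+\log 2}{\log M}.
\end{equation*}

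For the first step I would argue with entropies in nats.
\begin{enumerate*}[label={\itshape(\roman*)}]
  \item Since $F$ is uniform on $\cF$, we have $H(F)=\log M$.
  \item Let $E\coloneqq\indicator\{\smash[t]{\widehat f}\neq F\}$. Expanding $H(E,F\mid\smash[t]{\widehat f})$ in two ways gives Fano's conditional-entropy bound $H(F\mid\smash[t]{\widehat f})\le h(p_e)+p_e\log(M-1)$, where $h$ is the binary entropy. Indeed, $H(E\mid F,\smash[t]{\widehat f})=0$, and given $E=1$ the variable $F$ ranges over at most $M-1$ values.
  \item Bounding $h(p_e)\le\log 2$ and $\log(M-1)\le\log M$ yields $H(F\mid\smash[t]{\widehat f})\le\log 2+p_e\log M$.
  \item The map $F\to Z\to\smash[t]{\widehat f}$ is a Markov chain. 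If the estimator is randomized, I would include the internal randomness in $Z$. The data-processing inequality then gives $I(F;\smash[t]{\widehat f})\le I(F;Z)$.
\end{enumerate*}
Combining these, $\log M-I(F;Z)\le H(F\mid\smash[t]{\widehat f})\le\log 2+p_e\log M$. Rearranging gives the intermediate inequality.

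For the second step, let $\bar P\coloneqq\frac1M\sum_{f'\in\cF}P_{f'}$ be the marginal law of $Z$. Then
\begin{equation*}
  I(F;Z)=\frac1M\sum_{f\in\cF}\operatorname{KL}(P_f\|\bar P).
\end{equation*}
KL divergence is convex in its second argument, so $\operatorname{KL}(P_f\|\bar P)\le\frac1M\sum_{f'\in\cF}\operatorname{KL}(P_f\|P_{f'})$. Averaging over $f$ gives $I(F;Z)\le\frac1{M^2}\sum_{f,f'\in\cF}\operatorname{KL}(P_f\|P_{f'})$. Substituting this into the intermediate inequality yields the statement.

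The main point of care is step \textit{(ii)}, the conditional-entropy bound, together with making sure that randomized or adaptive estimators are covered by data processing. Everything else is routine manipulation of information measures. If some divergence is infinite, the bound is vacuous and holds trivially.
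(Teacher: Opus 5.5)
Your proof is correct. Fano's conditional-entropy bound, data processing $I(F;\widehat f)\le I(F;Z)$, the identity $I(F;Z)=\frac1M\sum_{f}\operatorname{KL}(P_f\|\bar P)$ and convexity of $\operatorname{KL}$ in its second argument combine exactly as you describe. One small point: the claim that $F$ takes at most $M-1$ values on $\{E=1\}$ presumes $\widehat f\in\cF$, but step \emph{(iii)} weakens to $\log M$ anyway, so nothing depends on it.

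The paper gives no proof of its own; it simply cites Theorem~4.10 of \citet{rigollet2023hds}, whose argument avoids entropy and mutual information altogether. Write $p_f=P_f(\widehat f=f)$, $q_f=\frac1M\sum_{f'}P_{f'}(\widehat f=f)$ and $\bar p=\frac1M\sum_f p_f$ for the average success probability. For an $\cF$-valued estimator the $q_f$ average to $1/M$. The elementary bound $\bar p\log\bar p+(1-\bar p)\log(1-\bar p)\ge-\log 2$ then gives $\bar p\log M\le\mathrm{kl}(\bar p,1/M)+\log 2$, where $\mathrm{kl}$ is the Bernoulli divergence. Joint convexity of $\mathrm{kl}$, together with the one-event data-processing inequality $\mathrm{kl}(P_f(A),P_{f'}(A))\le\operatorname{KL}(P_f\|P_{f'})$ for $A=\{\widehat f=f\}$ (a single Jensen step), bounds $\mathrm{kl}(\bar p,1/M)$ by the average pairwise divergence.

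The two routes are equally sharp. In fact $\mathrm{kl}(1-p_e,1/M)=\log M-h(p_e)-p_e\log(M-1)$, which is exactly the quantity your conditional-entropy bound produces. The difference is one of method. Yours is the information-theoretic route and exhibits the mutual-information form $p_e\ge 1-(I(F;Z)+\log 2)/\log M$ along the way. The price is Shannon-entropy machinery: conditional entropy, a Markov chain, data processing for mutual information, and absorbing any internal randomness into $Z$. The cited proof needs only convexity of KL and Jensen applied to a single event, so it is more elementary and self-contained.
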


We state the following Lemma which allows us to condition on the prior:

\begin{lemma}[Conditioning on boundedness]\label{lem:prior-conditioning}
Let $\pi$ be a prior on functions and let $B=\{\norm{f}_{L_\infty}\le 1\}$ satisfy $\pi(B)\ge 1-\eta$. Let $E=\{\norm{\smash[t]{\widehat f}-f}_{L_\infty}>\eps\}$. If $\prob_{f\sim\pi}(E)\ge \rho$, then, under the conditional prior $\pi(\cdot\mid B)$, $\prob_{f\sim\pi(\cdot\mid B)}(E)\ge \rho-\eta$.
Consequently, if $\pi(\cdot\mid B)$ is supported on $\mathcal F$, then
\begin{equation*}
    \sup_{f\in\mathcal F}
    \prob_f(\norm{\smash[t]{\widehat f}-f}_{L_\infty}>\eps)
    \ge
    \rho-\eta .
\end{equation*}
\end{lemma}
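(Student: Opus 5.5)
Write $P=\pi(\cdot\mid B)$ for the conditional prior. The plan is to split the unconditional probability of $E$ according to whether $B$ occurs, and then pass from the Bayes average under $P$ to a supremum over the class.

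First I make the probability space explicit. Draw $f\sim\pi$, run the fixed (possibly adaptive) learner on data generated from $f$, and obtain $\smash[t]{\widehat f}$. Let $\mathbb{Q}$ be the joint law of $(f,\mathcal H_m)$. Conditioning the prior on $B$ gives exactly the joint law $\mathbb{Q}(\cdot\mid B)$, because $B$ depends only on $f$ and the conditional law of the transcript given $f$ is unchanged. Hence $\prob_{f\sim\pi(\cdot\mid B)}(E)=\mathbb{Q}(E\mid B)$.

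Next comes the inequality itself:
\begin{equation*}
\rho\le\mathbb{Q}(E)=\mathbb{Q}(E\cap B)+\mathbb{Q}(E\cap B^c)\le \mathbb{Q}(E\mid B)\,\mathbb{Q}(B)+\mathbb{Q}(B^c)\le \mathbb{Q}(E\mid B)+\eta .
\end{equation*}
The last step uses $\mathbb{Q}(B)\le1$ and $\mathbb{Q}(B^c)=1-\pi(B)\le\eta$. Rearranging gives $\mathbb{Q}(E\mid B)\ge\rho-\eta$. I assume $\pi(B)>0$, so the conditional is defined; this holds since $\eta<1$ in all applications, and if $\rho-\eta\le0$ the claim is trivial anyway.

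For the consequence, a Bayes average is at most a supremum:
\begin{equation*}
\rho-\eta\le\mathbb{Q}(E\mid B)=\int \prob_f(E)\,P(df)\le\sup_{f\in\mathcal F}\prob_f(\norm{\smash[t]{\widehat f}-f}_{L_\infty}>\eps),
\end{equation*}
since $P$ is supported on $\mathcal F$. The only delicate point is measurability of $f\mapsto\prob_f(E)$, which is needed for the disintegration. This is harmless here: the priors used in the paper live on finite-dimensional coefficient spaces, $\prob_f(E)$ is a Borel function of the coefficients, and regular conditional distributions exist. Otherwise the argument is immediate.
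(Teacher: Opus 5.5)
Your proposal is correct and follows the paper's own argument: split $\prob_{f\sim\pi}(E)$ on $B$ and $B^c$, bound the $B^c$ part by $\pi(B^c)\le\eta$, then use that the Bayes risk under $\pi(\cdot\mid B)$ is at most the worst-case risk over $\mathcal F$. Your explicit joint law of $(f,\mathcal H_m)$, the $\pi(B)>0$ caveat and the measurability remark are sound additions that the paper leaves implicit.
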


\begin{proof}
By the law of total probability,
\begin{align*}
    \prob_{f\sim\pi}(E)
    &=
    \prob_{f\sim\pi}(E\mid B)\pi(B)
    +
    \prob_{f\sim\pi}(E\mid B^c)\pi(B^c)  \\
    &\le
    \prob_{f\sim\pi}(E\mid B)+\pi(B^c)  \\
    &\le
    \prob_{f\sim\pi(\cdot\mid B)}(E)+\eta .
\end{align*}
Thus, $\prob_{f\sim\pi(\cdot\mid B)}(E)\ge \rho-\eta$.

If $\pi(\cdot\mid B)$ is supported on $\mathcal F$, then the average risk under
$\pi(\cdot\mid B)$ is at most the worst-case risk over $\mathcal F$. Therefore,
\begin{equation*}
    \sup_{f\in\mathcal F}
    \prob_f(\norm{\smash[t]{\widehat f}-f}_{L_\infty}>\eps)
    \ge
    \prob_{f\sim\pi(\cdot\mid B)}(E)
    \ge
    \rho-\eta . \qedhere
\end{equation*}
\end{proof}

\subsection{Concentration Inequalities}

We recall the main concentration inequalities that will be used later on. All of these concentration inequalities can be found in \citet{Vershynin_2018}. 

\begin{lemma}[Bernstein, Theorem 2.9.1 of \cite{Vershynin_2018}]
\label{lem:bernstein}
If $Z_1,\dots,Z_m$ are independent, centered, $|Z_t|\le B$ almost surely, and $\sum_t \expectation Z_t^2\le v$, then for every $r>0$,
\begin{equation*}
\prob\left(\left|\sum_{t=1}^m Z_t\right|>r\right)
\le 2\exp\left(-c\min\left\{\frac{r^2}{v},\frac{r}{B}\right\}\right)
\end{equation*}
for a universal constant $c>0$.
\end{lemma}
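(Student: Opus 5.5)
The plan is the classical Chernoff (exponential-moment) argument. It uses only independence, centering, the almost-sure bound $|Z_t|\le B$ and the variance proxy $v$; no earlier result of the paper is needed. Because the variables $-Z_1,\dots,-Z_m$ satisfy the same hypotheses, it suffices to bound the upper tail $\prob(\sum_t Z_t>r)$ by $\exp(-c\min\{r^2/v,r/B\})$ and then add the symmetric bound, which produces the factor $2$.

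\textbf{Step 1: one-variable moment generating function.} The function $\phi(u)\coloneqq(e^u-1-u)/u^2$ (with $\phi(0)=1/2$) is nondecreasing on $\reals$. Hence for $\lambda>0$ and $|Z_t|\le B$ we have $\lambda Z_t\le\lambda B$, and pointwise
\begin{equation*}
e^{\lambda Z_t}\le 1+\lambda Z_t+\lambda^2 Z_t^2\,\phi(\lambda B).
\end{equation*}
Taking expectations, using $\expectation Z_t=0$ and $1+y\le e^y$, gives
\begin{equation*}
\expectation e^{\lambda Z_t}\le\exp\bigl(\lambda^2\phi(\lambda B)\,\expectation Z_t^2\bigr).
\end{equation*}
Restricting to $0<\lambda\le 1/B$, we get $\phi(\lambda B)\le\phi(1)=e-2<1$, so $\expectation e^{\lambda Z_t}\le\exp(\lambda^2\expectation Z_t^2)$.

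\textbf{Step 2: tensorize and apply Markov.} By independence, $\expectation e^{\lambda\sum_t Z_t}\le\exp(\lambda^2 v)$ for $0<\lambda\le 1/B$. Markov's inequality then gives
\begin{equation*}
\prob\Bigl(\sum_t Z_t>r\Bigr)\le\exp(-\lambda r+\lambda^2 v).
\end{equation*}

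\textbf{Step 3: choice of $\lambda$ (the only delicate point).} Take $\lambda=\min\{r/(2v),\,1/B\}$. There are two cases.
\begin{itemize}
\item If $r/(2v)\le 1/B$, then $\lambda=r/(2v)$ and the exponent equals $-r^2/(4v)$.
\item Otherwise $\lambda=1/B$ and $v<rB/2$. Then $\lambda^2 v=v/B^2< r/(2B)$, so the exponent is at most $-r/(2B)$.
\end{itemize}
In both cases the exponent is at most $-\tfrac14\min\{r^2/v,\,r/B\}$.

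Combining this with the symmetric bound for $-\sum_t Z_t$ gives the claim with $c=1/4$. The main obstacle is only ensuring the case split is consistent, i.e.\ that restricting $\lambda\le 1/B$ keeps $\phi(\lambda B)$ bounded by a constant; the monotonicity of $\phi$ in Step 1 handles this.
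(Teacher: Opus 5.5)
Your argument is correct. Monotonicity of $\phi(u)=(e^u-1-u)/u^2$ gives $\expectation e^{\lambda Z_t}\le\exp(\lambda^2\expectation Z_t^2)$ for $0<\lambda\le 1/B$, and the choice $\lambda=\min\{r/(2v),1/B\}$ then yields the lemma with the explicit constant $c=1/4$. The degenerate case $v=0$ falls into your second branch, where $\lambda=1/B$.

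The paper gives no proof of its own and simply imports the result from Vershynin. The standard proof of that cited bounded-variable Bernstein inequality is exactly this exponential-moment (Chernoff) argument, so your proposal is a self-contained version of the same route.
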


\begin{lemma}[Hoeffding for subgaussian weighted sums, Theorem 2.7.3 of \cite{Vershynin_2018}]
\label{lem:subg-weighted}
If $\xi_1,\dots,\xi_m$ are independent centered $\sigma$-subgaussian random variables, then for every deterministic $a\in\R^m$ and every $r>0$,
\begin{equation*}
\prob\left(\left|\sum_{t=1}^m a_t\xi_t\right|>r\right)
\le 2\exp\left(-\frac{r^2}{2\sigma^2\sum_t a_t^2}\right).
\end{equation*}
\end{lemma}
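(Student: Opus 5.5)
The plan is the standard Chernoff (exponential moment) argument. I will use the moment generating function form of subgaussianity: $\expectation[e^{\lambda\xi_t}]\le e^{\lambda^2\sigma^2/2}$ for all $\lambda\in\reals$. For a centered variable this MGF bound implies the tail condition $\prob(|\xi_t|\ge u)\le 2e^{-u^2/2\sigma^2}$ used in the model. The converse, going from the tail condition back to the MGF bound, only holds with $\sigma$ replaced by $C\sigma$ for a universal constant $C$ (this is the usual equivalence of subgaussian characterizations in \citet{Vershynin_2018}). So if only the tail form is assumed, the stated inequality holds after replacing $2\sigma^2$ by $2C^2\sigma^2$. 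That change only affects universal constants in the places where the lemma is applied.

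The steps are as follows. First, set $S=\sum_{t=1}^m a_t\xi_t$ and fix $\lambda>0$. By independence of the $\xi_t$ and the MGF bound applied with parameter $\lambda a_t$,
\begin{equation*}
\expectation[e^{\lambda S}]=\prod_{t=1}^m\expectation[e^{\lambda a_t\xi_t}]\le \exp\Big(\frac{\lambda^2\sigma^2}{2}\sum\nolimits_{t}a_t^2\Big).
\end{equation*}
Second, Markov's inequality applied to $e^{\lambda S}$ gives
\begin{equation*}
\prob(S>r)\le \exp\Big(-\lambda r+\frac{\lambda^2\sigma^2}{2}\norm{a}_2^2\Big).
\end{equation*}
Optimizing over $\lambda$ with the choice $\lambda=r/(\sigma^2\norm{a}_2^2)$ yields $\prob(S>r)\le\exp(-r^2/(2\sigma^2\norm{a}_2^2))$. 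Third, the same argument applied to $-S$ works, because each $-\xi_t$ is also centered and satisfies the same MGF bound. A union bound over the two one-sided events produces the factor $2$. If $a=0$, the claim is trivial since $S=0$.

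The only step that needs care is the first one, namely making sure the MGF bound actually holds. If one starts from the tail definition, one first has to establish $\expectation[e^{\lambda\xi}]\le e^{C\lambda^2\sigma^2}$ for centered $\xi$. The route I would take is: bound the moments by integrating the tail, $\expectation|\xi|^p\le p\,(C'\sigma)^p\,\Gamma(p/2)$; then expand the exponential as a Taylor series, where the linear term vanishes because $\xi$ is centered; and sum the resulting series. Everything after that step is routine.
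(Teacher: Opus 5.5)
Your Chernoff argument is correct, and it is the standard proof behind the result the paper cites. The paper gives no proof of its own, only a reference to Vershynin's general Hoeffding inequality, which is also proved by bounding the MGF of the independent sum and then applying Markov's inequality. Your caveat about the constant is also well founded. Under the paper's tail-based definition the exact factor $2\sigma^2$ can genuinely fail: a symmetric $\xi$ with $\prob(|\xi|\ge u)=\min\{1,2e^{-u^2/2\sigma^2}\}$ satisfies the tail condition but has variance $(2+2\log 2)\sigma^2>\sigma^2$. By the central limit theorem, the normalized sums $m^{-1/2}\sum_t\xi_t$ then violate the bound for large $m$ and moderate $r$ (e.g.\ $r=3\sigma$). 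Vershynin's theorem only asserts the inequality up to a universal constant, and that is all the paper's applications need.
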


\begin{lemma}[Matrix Bernstein, Theorem 5.4.1. of \cite{Vershynin_2018}] \label{lem:matrix_bernstein}
Let $Z_1,\ldots,Z_m$ be independent, symmetric, mean-zero $D\times D$
random matrices. We denote by $\norm{Z_t}_{\mathrm{op}}$ the operator norm of $Z_t$. Suppose that $\norm{Z_t}_{\mathrm{op}}\le K$ almost surely for all $t$. Define the variance proxy
\begin{equation*}
    \sigma^2
    =
    \norm{
        \sum_{t=1}^m \expectation[Z_t^2]
    }_{\mathrm{op}} .
\end{equation*}
Then, for every $u\ge 0$,
\begin{equation*}
    \prob\left(
        \norm{\sum_{t=1}^m Z_t}_{\mathrm{op}}\ge u
    \right)
    \le
    2D
    \exp\left(
        -c\min\left\{
            \frac{u^2}{\sigma^2},
            \frac{u}{K}
        \right\}
    \right),
\end{equation*}
where $c>0$ is a universal constant.
\end{lemma}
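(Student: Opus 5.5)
The statement to prove is the last one displayed: the Matrix Bernstein inequality (\Cref{lem:matrix_bernstein}). It is quoted from Vershynin, so the honest plan is to cite Theorem~5.4.1 of \cite{Vershynin_2018}. If I had to reprove it, I would use the standard Lieb/Tropp matrix Laplace-transform method.

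\textbf{Step 1 (reduction to the maximal eigenvalue).} Put $S=\sum_t Z_t$. Since $S$ is symmetric, $\norm{S}_{\mathrm{op}}=\max\{\lambda_{\max}(S),\lambda_{\max}(-S)\}$. The family $(-Z_t)$ satisfies the same hypotheses, so it is enough to show $\prob(\lambda_{\max}(S)\ge u)\le D\exp(-c\min\{u^2/\sigma^2,u/K\})$ and then double the bound.

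\textbf{Step 2 (Laplace transform and subadditivity).} For $\lambda>0$, Markov's inequality gives
\[
\prob(\lambda_{\max}(S)\ge u)\le e^{-\lambda u}\,\expectation\,\tr \exp(\lambda S),
\]
using $e^{\lambda\lambda_{\max}(S)}\le \tr e^{\lambda S}$. Next, apply Lieb's concavity theorem: $H\mapsto \tr\exp(H+\log A)$ is concave on positive definite $A$. Combined with Jensen and iterated over the independent summands, this yields
\[
\expectation\,\tr\exp(\lambda S)\le \tr\exp\Big(\sum_t \log \expectation e^{\lambda Z_t}\Big).
\]
This is the main obstacle: scalar MGFs multiply, but matrix exponentials do not factor, and Lieb's theorem is exactly what circumvents this. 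I would take it as known.

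\textbf{Step 3 (bounding each MGF).} Fix $0<\lambda<3/K$. From the scalar inequality $e^x\le 1+x+\frac{x^2/2}{1-|x|/3}$ for $|x|<3$, together with $\norm{Z_t}_{\mathrm{op}}\le K$ and $\expectation Z_t=0$, the spectral mapping gives
\[
\expectation e^{\lambda Z_t}\preceq I+g(\lambda)\,\expectation Z_t^2\preceq \exp\big(g(\lambda)\expectation Z_t^2\big),\qquad g(\lambda)=\frac{\lambda^2/2}{1-\lambda K/3}.
\]
Operator monotonicity of $\log$ then gives $\log\expectation e^{\lambda Z_t}\preceq g(\lambda)\expectation Z_t^2$. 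Monotonicity of $\tr\exp$ yields
\[
\expectation\,\tr e^{\lambda S}\le \tr\exp\big(g(\lambda)\textstyle\sum_t\expectation Z_t^2\big)\le D\,e^{g(\lambda)\sigma^2}.
\]

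\textbf{Step 4 (optimizing $\lambda$).} Choose $\lambda=u/(\sigma^2+Ku/3)$. This gives the exponent $-\frac{u^2/2}{\sigma^2+Ku/3}$, which is at most $-c\min\{u^2/\sigma^2,u/K\}$, for instance with $c=1/4$ after splitting into the cases $\sigma^2\ge Ku/3$ and $\sigma^2<Ku/3$. Combining with Step 1 gives the factor $2D$.
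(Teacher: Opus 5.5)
Your proposal is correct and takes the same approach as the paper, which simply cites Theorem~5.4.1 of Vershynin; your reproof (Lieb's inequality, then the MGF bound via $e^x\le 1+x+\frac{x^2/2}{1-|x|/3}$, then $\lambda=u/(\sigma^2+Ku/3)$ giving the exponent $\frac{u^2/2}{\sigma^2+Ku/3}$ and $c=1/4$) is the standard argument behind that reference. One small slip: Lieb's theorem says that $A\mapsto\tr\exp(H+\log A)$ is concave on positive definite $A$ for fixed symmetric $H$; it is not a statement about the map in $H$.
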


\section{Proofs for the noiseless recovery case}

\subsection{Noiseless low degree} \label{app:noiseless-recovery-low}

\begin{proposition}
Let $\delta,\eps \in (0,1)$. For deterministic adaptive algorithms, the sample complexity of noiseless approximate high probability $L_\infty$ recovery is exactly $D$. For $0<2\delta + \eps< 1/30$, the sample complexity for adaptive randomized algorithms is $D/2 \leq m^{(0)}(\cP_{n,d},\eps,\delta)\leq D$.
\end{proposition}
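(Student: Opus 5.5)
The upper bound $m^{(0)}\le D$ holds for every algorithm class, and I would prove it by exact interpolation. The $D$ vectors $\chi_{\le d}(x)$, $x\in\cuben$, span $\R^D$, because the restricted Walsh characters are linearly independent as functions on the cube. So there exist $D$ points $x_1,\ldots,x_D$ whose matrix $A=(\chi_{\le d}(x_t))_{t\le D}$ is invertible. Querying these points without noise gives $A\theta$, and $\theta=A^{-1}(A\theta)$ recovers $f$ exactly. The error is then $0\le\eps$ deterministically.

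For the deterministic lower bound, suppose a deterministic adaptive algorithm makes $m<D$ queries. I would run it on $f\equiv 0$. Its transcript is then fixed: query points $x_1,\ldots,x_m$ with all answers $0$. Since $m<D$, some nonzero $\theta\in\R^D$ satisfies $\langle\theta,\chi_{\le d}(x_t)\rangle=0$ for all $t$. Rescale it so that $g=f_\theta$ has $\|g\|_{L_\infty}=1$; then $g\in\cP_{n,d}$. Running the algorithm on $g$ produces exactly the same transcript, because adaptivity only ever sees zeros, so the output $\widehat f$ is the same. The triangle inequality gives $\max(\|\widehat f\|_{L_\infty},\|\widehat f-g\|_{L_\infty})\ge 1/2>\eps$ whenever $\eps<1/2$, and the failure probability is $1>\delta$. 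This shows the deterministic complexity is exactly $D$.

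For randomized algorithms I would use Yao's principle with a uniform prior over a symmetric, well-spread family, and the constant $1/30$ suggests a concrete choice. Take the target $f=\tau f_\theta$, where $\theta$ is uniform on a finite set, or Gaussian on the degree-$d$ coefficient space conditioned on boundedness. The plan is to argue as follows. Fix the algorithm's random seed; the algorithm is then deterministic. After $m\le D/2$ noiseless queries the posterior on $\theta$ is the prior restricted to an affine subspace $\{\theta:\langle\theta,\chi_{\le d}(x_t)\rangle=y_t\}$ of codimension at most $D/2$. Since the prior is centrally symmetric around its posterior mean within that subspace, two independent posterior draws $f,f'$ satisfy $\|f-f'\|_{L_\infty}>2\eps$ with probability bounded below. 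Hence every estimator fails with probability at least a constant $p_0$ over the prior. If $2\delta+\eps<1/30$ forces $\delta<p_0$, averaging over the seed shows some $f$ has failure probability greater than $\delta$.

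The main obstacle is this quantitative separation step. We need a prior supported in $\cP_{n,d}$ such that every linear slice of codimension at most $D/2$ still has $L_\infty$-diameter of order $1$ with constant posterior probability. The clean first attempt is a prior on coefficients uniform on a symmetric convex body, taking $\theta$ uniform on the convex set $\{\theta:\|f_\theta\|_{L_\infty}\le 1\}$. In a slice, the posterior is uniform on a centrally symmetric convex section around a point $\theta_0$. The algorithm must output some $\widehat f$, and I would use the symmetric pairing $\theta_0+u\leftrightarrow\theta_0-u$: at least one of $\pm u$ is at distance at least $\|f_u\|_{L_\infty}$ from $\widehat f$. It then remains to lower-bound the probability that $\|f_u\|_{L_\infty}>\eps$ under the uniform law on the section. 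I would bound this through the volume of the shrunken section $\{\|f_u\|\le\eps\}$: its measure is at most $(\eps/r)^{D/2}$ relative to the full section when the section has inradius proportion $r$ around its centre. Making this rigorous for sections not centred at $0$ is the delicate part, and I would try Brunn--Minkowski or Grünbaum-type estimates to obtain the explicit constant.
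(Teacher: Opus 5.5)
Your upper bound (choose $D$ cube points whose rows $\chi_{\le d}(x_t)$ form an invertible matrix, then interpolate) is a self-contained version of the exact-recovery result the paper cites. Your deterministic lower bound is the paper's nullspace argument, with one slip. Comparing $0$ with $g$ gives separation only $1$, so it covers only $\eps<1/2$, whereas the statement allows every $\eps<1$. Use $g$ and $-g$ instead: both produce the all-zero transcript, and $\norm{g-(-g)}_{L_\infty}=2$.

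The randomized lower bound, which is the substantive part, is not established, and two of your proposed steps fail as stated. First, take the symmetric body $K=\{\theta:\norm{f_\theta}_{L_\infty}\le1\}$. Its affine section $K\cap L_y$ by a flat not through the origin is in general not centrally symmetric about any point; a plane cutting off a corner of a cube gives a triangle. So the pairing $\theta_0+u\leftrightarrow\theta_0-u$ does not preserve the posterior and yields no probability bound. Second, no bound on the relative volume of the $\eps$-neighbourhood of $\widehat f$ inside the section can hold transcript by transcript. For $d=1$ and $n\ge3$, observing $f(x)\ge1-\eta$ and $f(-x)\ge1-\eta$ forces $\theta_\emptyset\ge1-\eta$. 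Since $\norm{f}_{L_\infty}=|\theta_\emptyset|+\sum_i|\theta_i|$ for affine $f$, this gives $\sum_i|\theta_i|\le\eta$ and so $\norm{f-1}_{L_\infty}\le2\eta$. After two queries the section can therefore have arbitrarily small $L_\infty$-diameter, and on such transcripts the learner genuinely succeeds. You must average over transcripts weighted by their probabilities. You must also handle the fact that the direction $V_y$ of the section depends on the data through adaptivity. The proposal addresses neither point.

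The paper sidesteps all of this with three steps. It invokes the Bernstein-number bound of Krieg et al., $e^{\mathrm{ran}}_m(\id,\cP_{n,d})\ge\frac1{30}b_{2m-1}(\id,\cP_{n,d})$, which holds for adaptive randomized algorithms with arbitrary linear information. It notes $b_k(\id,\cP_{n,d})=1$ for $k\le D-1$, because $\cP_{n,d}$ is the unit ball of a $D$-dimensional normed space. It then clips $\widehat f$ to $[-1,1]$ to convert high probability to expectation, giving $\expectation_f\norm{f-\widetilde f}_{L_\infty}\le\eps+2\delta<\frac1{30}$; this is exactly where the hypothesis $2\delta+\eps<1/30$ comes from.

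Your uniform-prior route can also be completed, but only by bounding the Bayes success probability in aggregate. Fix $\theta_0$ with $\norm{f_{\theta_0}-\widehat f}_{L_\infty}\le\eps$. On a fiber $L_y$ of dimension $k\ge D/2$, the success set lies in $(\theta_0+2\eps K)\cap L_y$. By Brunn--Minkowski for parallel sections of a symmetric body, its $k$-volume is at most $(2\eps)^k\,\mathrm{vol}_k(K\cap V_y)$. The posterior normalisation $\mathrm{vol}_k(K\cap L_y)$ cancels against the transcript density. Finally, $\theta_y+(K\cap V_y)\subseteq 2K\cap L_y$ for any $\theta_y\in K\cap L_y$. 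Compare with the same adaptive algorithm run on $\Unif(2K)$, whose transcript density carries the same query-dependent Jacobian. This bounds the success probability by $2^D(2\eps)^{D/2}$, which is below $1/3$ for $\eps<1/30$ and $D\ge2$.
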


\begin{proof}
For the upper bound, by Theorem $10$ of \cite{eskenazis2024metricentropy} the exact recovery can be done in $D$ queries. \medskip

First, we prove a lower bound for deterministic algorithms, and then leverage the recent work of \cite{Krieg2025} to lift this lower bound to randomized algorithms.

\paragraph{Deterministic lower bound.} We prove the lower bound by indistinguishability. Fix a deterministic exact-query learner using
$m<D$ queries, and run it on the zero function. This determines queried points
$x_1,\ldots,x_m\in\cuben$, all with observed value zero. Since these queries impose only $m$
homogeneous linear constraints on the $D$-dimensional space of degree-at-most-$d$ Walsh
polynomials, there exists a nonzero polynomial $h$ of degree at most $d$ such that
$h(x_t)=0$ for all $t\in[m]$. Rescaling, assume $\norm{h}_{L_\infty}=1$.

Let $f_+=h$ and $f_-=-h$. Then $f_+,f_-\in\cP_{n,d}$, and both functions give the same
answers to all queried points:
\begin{equation*}
    f_+(x_t)=f_-(x_t)=0
    \qquad \text{for all } t\in[m].
\end{equation*}
Thus, the learner follows the same transcript and outputs the same hypothesis $\widehat f$ on
both targets. But
\begin{equation*}
    \norm{f_+-f_-}_{L_\infty}=2.
\end{equation*}
Hence, no single $\widehat f$ can be within $L_\infty$-error $\eps<1$ of both $f_+$ and $f_-$, since otherwise
\begin{equation*}
    2
    =
    \norm{f_+-f_-}_{L_\infty}
    \le
    \norm{\widehat f-f_+}_{L_\infty}
    +
    \norm{\widehat f-f_-}_{L_\infty}
    \le
    2\eps
    <
    2.
\end{equation*}
Therefore, any deterministic learner achieving error $\eps<1$ on all functions in $\cP_{n,d}$
must use at least $D$ exact queries.

\paragraph{Randomized lower bound via Bernstein numbers.}
The preceding lower bound proves that deterministic exact-query learners require $D$ queries. It also applies to randomized learners with success probability one, by fixing a successful realization of the internal randomness. For randomized learners with failure probability $\delta>0$, however, the same nullspace argument does not immediately imply a $D$-query lower bound for the bounded class $\cP_{n,d}$: the hard pair $f^+,f^-$ may depend on the realized transcript, whereas the guarantee only needs to hold with high probability over the learner's randomness.
 
For Banach spaces $(X,\lVert\cdot\rVert_X)$ and $(Y,\lVert\cdot\rVert_Y)$, a continuous linear operator \(S\in\mathcal L(X,Y)\), and
a convex input class \(F\subseteq X\)define the Bernstein number as follows 
\begin{align*}
\label{eq:bernstein-definition}
    b_k(S,F) 
    \coloneqq \!\!\!\!\!\!\!\!\!\!
    \sup_{\substack{V\subseteq X:\\ \dim(V)=k+1,\\ S\text{ injective on }V}}\!\!\!\!\!\!\!\!\!\!
    \left\{
        r>0:
        g+B\subseteq F
        \text{ for some }g\in F
        \text{ and some ball }B\text{ of radius }r
        \text{ in }(V,\norm{\cdot}_S)
    \right\},
\end{align*}
where $\norm{h}_S\coloneqq\norm{Sh}_Y$. Let \(e_m^{\mathrm{ran}}(S,F)\) denotes the minimal worst-case expected error in $L_{\infty}$
over adaptive randomized algorithms using \(m\) arbitrary linear measurements. Here, linear measurements correspond to point evaluations.

Let $\cF_{n,d}$ be $\cP_{n,d}$ but without the boundedness condition, so that $\cP_{n,d} \subset \cF_{n,d}$ is a convex subset. In our setting, we take $X=\cF_{n,d}$ equipped with the $L_\infty$ norm $\norm{f}_{X}\coloneqq \norm{f}_{L_\infty}$, $S=\id:\cF_{n,d}\to L_\infty(\cuben)$, and $Y=L_\infty(\cuben)$. With this choice, $\mathrm{dim}(\cF_{n,d})=D$ and the class $\cP_{n,d}$ is exactly the unit ball of $X$. Moreover, the norm induced by $S$ is the same norm, $\Vert h \Vert_S=\Vert Sh\Vert_Y=\Vert h \Vert_{L_\infty}=\Vert h \Vert_{X}$. 

We now compute the Bernstein numbers in this setting.  For every
\(0\le k\le D-1\),
\begin{equation}
\label{eq:bernstein-low-degree-value}
    b_k(\id,\cP_{n,d})=1.
\end{equation}
Indeed, let \(V\subseteq X\) be any subspace of dimension \(k+1\).  Since
\(k+1\le D\), such subspaces exist, and \(\id\) is injective on \(V\).  The unit
ball of \(V\), measured in the norm \(\norm{\cdot}_S\), is
\begin{equation*}
    B_V
    =
    \{h\in V:\norm{h}_S\le1\}
    =
    \{h\in V:\norm{h}_{L_\infty}\le1\}.
\end{equation*}
Since \(\cP_{n,d}\) is the unit ball of \(\cF_{n,d}\), we have $0+B_V\subseteq \cP_{n,d}$. So all balls of radius $r=1$ are properly included, and therefore \(b_k(\id,\cP_{n,d})\ge1\).
Conversely, no ball of radius strictly larger than \(1\) can be contained in
\(\cP_{n,d}\).  Indeed, if \(g+B\subseteq \cP_{n,d}\), where \(B\) is a ball of
radius \(r\) in \((V,\norm{\cdot}_S)\), then for any nonzero \(h\in V\) with
\(\norm{h}_S=1\), both \(g+rh\) and \(g-rh\) belong to \(\cP_{n,d}\). Hence,
\begin{equation*}
    2r
    =
    \norm{(g+rh)-(g-rh)}_{L_\infty}
    \le
    \norm{g+rh}_{L_\infty}
    +
    \norm{g-rh}_{L_\infty}
    \le
    2,
\end{equation*}
so \(r\le1\).  This proves \eqref{eq:bernstein-low-degree-value}.

Now suppose that a randomized adaptive noiseless learner uses \(m\le D/2\)
queries.  Then \(2m-1\le D-1\), and the Bernstein lower bound of
\citet[Section~4, Eq.~(6)]{Krieg2025} gives
\begin{equation}
\label{eq:bernstein-expected-error-low-degree}
    e_m^{\mathrm{ran}}(\id,\cP_{n,d})
    \ge
    \frac1{30} b_{2m-1}(\id,\cP_{n,d})
    =
    \frac1{30}.
\end{equation}
This lower bound is for the stronger model where the learner may use arbitrary
linear measurements.  It therefore also applies to point-query learners, because
for every \(x\in\cuben\), the map \(f\mapsto f(x)\) is a continuous linear
functional on \(\cF_{n,d}\): $|f(x)|
    \le
    \norm{f}_{L_\infty}
    =
    \norm{f}_{X}$.

It remains to convert the expected-error lower bound into our high-probability
formulation.  Suppose, for contradiction, that there exists an \(m\)-query
randomized noiseless learner with \(m\le D/2\) such that, for every
\(f\in\cP_{n,d}\),
\begin{equation*}
    \prob_f\left(
        \norm{f-\smash[t]{\widehat f}}_{L_\infty}\le \eps
    \right)
    \ge
    1-\delta .
\end{equation*}
Clip the output pointwise to \([-1,1]\), and denote the clipped estimator by
\(\smash[t]{\widetilde f}\).  Since every \(f\in\cP_{n,d}\) takes values in
\([-1,1]\), clipping cannot increase the pointwise error. Thus,
\begin{equation*}
    \prob_f\left(
        \norm{f-\smash[t]{\widetilde f}}_{L_\infty}\le \eps
    \right)
    \ge
    1-\delta .
\end{equation*}
Moreover, $\norm{f-\smash[t]{\widetilde f}}_{L_\infty}\le2$ 
deterministically.  Therefore, bounding the error on the good and bad event, for every \(f\in\cP_{n,d}\),
\begin{equation*}
    \expectation_f
    \norm{f-\smash[t]{\widetilde f}}_{L_\infty}
    \le
    (1-\delta)\eps+2\delta \leq \eps +2\delta  .
\end{equation*}
If $\eps+2\delta<\frac1{30}$,
this contradicts \eqref{eq:bernstein-expected-error-low-degree}.  Consequently, $m^{(0)}(\cP_{n,d},\eps,\delta)
    >\frac{D}{2}$ whenever the condition on $\eps$ and $\delta$ holds. \qedhere

\end{proof}

\subsection{Noiseless sparse recovery} \label{app:noiseless-recovery-sparse}

\begin{proposition}
There exist universal constants $C,c,\eps_0,\delta_0>0$ such that, for every
$1\le s\le n$, $0<\eps\le\eps_0<1 $, and $0<\delta\le\delta_0$,
\begin{equation*}
c\frac{sn}{\log(2s)} \leq     m^{(0)}(\cS_{n,s},\eps,\delta)
    \leq C sn \log^2(2s).
\end{equation*}
Therefore, $m^{(0)}(\cS_{n,s},\eps,\delta) = \operatorname{ \widetilde  \Theta}(sn)$ for $1\le s\le n$, $0<\eps\le\eps_0$, and $0<\delta\le\delta_0$.
\end{proposition}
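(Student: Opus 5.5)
We prove the two bounds separately. The upper bound reduces to noiseless sparse recovery via compressed sensing with a random Walsh design. The lower bound is a counting/information argument combined with the Bernstein-number machinery already used in the low-degree case.

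\textbf{Upper bound.} We draw $m = C s n \log^2(2s)$ uniform points $X_t \sim \Unif(\cuben)$. The rows $\chi(X_t)/\sqrt{m}$ form a random subsampled bounded orthonormal system in dimension $N = 2^n$.

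We then invoke the restricted isometry property for bounded orthonormal systems (Rudelson--Vershynin, Haviv--Regev, Bourgain). It guarantees RIP of order $2s$ with constant $1/2$ with high probability once
\[
m \gtrsim s \log^2(s)\log N = s n \log^2(s),
\]
where the failure probability is $\delta_0$. The improved bounds of this type remove the extra $\log N$ factor.

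Under RIP, exact noiseless recovery of any $s$-sparse $\theta$ from $Y_t = f(X_t)$ is possible. We can use $\ell_1$ minimization, or simply brute-force search for the unique $s$-sparse vector consistent with the data, since computation is not charged. This gives $\widehat f = f$, so the $L_\infty$ error is $0 \le \eps$ with probability at least $1 - \delta_0$.

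If the required $\delta$ is smaller than the constant $\delta_0$, note that the statement restricts to $\delta \le \delta_0$ with $\delta_0$ a constant. We can also amplify: repeat independent designs $O(\log(1/\delta))$ times and use the fact that a consistent $s$-sparse solution certified by a RIP design is correct. Since the theorem only needs constant $\delta_0$, one design suffices.

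\textbf{Lower bound.} We take a simple subfamily like the structured family of \Cref{lem:systematic-family-main}: $k = s$ blocks, amplitude $a = 1/s$, and $f_M = aF_M$, which lies in $\cS_{n,s}$. By \Cref{lem:systematic-family-main}, an $\eps$-accurate output with $\eps < a/2 \cdot(\text{const})$ identifies $M$ up to Hamming distance below $1$. More generally, for $\eps \le \eps_0$ it identifies $M$ up to Hamming distance $O(\eps s)$, hence up to a constant fraction of $s$. With $\eps_0$ small, this means recovering most of the $s$ hidden frequencies $S_j \subseteq [n-s]$.

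Each noiseless query returns a value of $f_M$ in the set $\{a\cdot j : |j| \le s\}$, which has $2s + 1$ possible values. So $m$ adaptive queries yield at most $m \log_2(2s+1)$ bits of information. Meanwhile, the number of tuples that must be distinguished, up to Hamming balls of radius $cs$, is
\[
2^{(n-s)s(1-o(1))} \Big/ \binom{s}{cs} 2^{(n-s)cs}.
\]
This has logarithm $\Omega(sn)$ for $s \le n/2$; for $s$ close to $n$ we apply the construction with $s/2$.

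Fano's inequality (\Cref{lem:fano}), with the mutual information bounded by the entropy of the transcript, at most $m\log(2s+1)$, then forces
\[
m \ge c\, s n / \log(2s)
\]
for success probability $1 - \delta_0$. Randomization is handled automatically because Fano averages over the uniform prior on $M$.

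The main obstacle is the upper bound. We must cite an RIP result for Walsh subsampling with only $\log^2 s$ overhead. This requires the refined bounds; the classical $s \log^4 N$ results would give the weaker $s n^4$. We also need to convert RIP into exact recovery under adaptive or noiseless queries within the constants claimed.
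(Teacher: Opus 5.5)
The gap is in the upper bound, in how you handle $\delta$. The statement claims $m^{(0)}(\cS_{n,s},\eps,\delta)\le Csn\log^2(2s)$ for \emph{every} $\delta\in(0,\delta_0]$, with a bound that does not depend on $\delta$. Restricting to $\delta\le\delta_0$ makes an upper bound harder, not easier, so ``since the theorem only needs constant $\delta_0$, one design suffices'' misreads the quantifiers.

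A freshly drawn random Walsh design fails to be RIP with some fixed positive probability: a constant, or $2^{-\Omega(n\log(2s))}$ in the Haviv--Regev theorem. That cannot be pushed below an arbitrary $\delta$. Your amplification by $O(\log(1/\delta))$ independent designs costs an extra $\log(1/\delta)$ factor, which the claimed bound does not allow.

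The missing step is derandomization. The RIP/injectivity event depends only on the design points, which are chosen independently of $f$. Since it has positive probability, there is a \emph{fixed} set of $m\le Csn\log^2(2s)$ cube points whose Walsh rows are injective on $2s$-sparse vectors. With that deterministic design, the unique $s$-sparse vector consistent with the answers is $\theta(f)$ for every $f\in\cS_{n,s}$. The learner is then exact with probability one, for all $\delta$ simultaneously. This is exactly how the paper applies the Haviv--Regev theorem.

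Your lower bound is essentially the paper's argument. It uses the same structured family with $a=1/k$, the same observation that each noiseless answer takes at most $2k+1$ values, and an information count against $\log\bigl(q^k/|\text{Hamming ball}|\bigr)\gtrsim k(n-k)$. The paper makes this fully elementary in two steps. It extracts a greedy code $\mathcal C\subseteq[q]^k$ with minimum distance $k/4$ and $\log|\mathcal C|\ge ck(n-k)$, so $\eps<1/8$ forces exact identification. It then counts the at most $(2k+1)^m$ leaves of the decision tree obtained by fixing the internal randomness.

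If you keep the Fano route, note that the KL form of Fano in the paper is unusable here, because noiseless transcript laws for different $M$ are typically mutually singular. You need the mutual-information form, with $I(M;\mathcal H_m,R)=I(M;\mathcal H_m\mid R)\le H(Y_1,\ldots,Y_m\mid R)\le m\log(2k+1)$. This uses that $R$ is independent of $M$ and that, given $R$, the queries are determined by past answers. You also need the approximate-recovery version of Fano (or a packing). The ball size should be bounded by $\sum_{\ell\le r}\binom{k}{\ell}(q-1)^\ell\le 2^kq^{r}$, not $\binom{s}{cs}2^{(n-s)cs}$.
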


\begin{proof}
We first prove the upper bound. Let $N=2^n$. A noiseless query at
$x\in\cuben$ is a linear measurement of the Fourier coefficient vector
$\theta\in\reals^N$, with measurement row $(\chi_S(x))_{S\subseteq[n]}$.
Thus, using $m$ noiseless queries amounts to selecting $m$ rows of the
$N\times N$ Walsh--Hadamard matrix.

By the restricted-isometry theorem for subsampled Fourier matrices of
\citet[Theorem 2.1]{Haviv2016}, applied to the Fourier matrix of
$(\integers_2)^n$ at sparsity level $2s$, there exists a universal constant
$C>0$ and a choice of
\begin{equation*}
    m\le Cs n\log^2(2s)
\end{equation*}
rows such that the normalized sampling matrix is injective on $2s$-sparse
vectors. Indeed, if the image of the sampled matrix $A$ at a point $z$ is $\Vert Az \Vert=0$, the RIP condition yields $(1-\eta) \Vert z \Vert \leq \Vert A z \Vert=0$, implying that $z=0$ whenever $\eta<1$, where a smaller $\eta$ makes $C$ larger. Taking $\eta=1/2$ is sufficient. Consequently, if two $s$-sparse coefficient vectors $\theta,\theta'$
give the same observations, then $\theta-\theta'$ is a $2s$-sparse vector in
the kernel, and hence $\theta=\theta'$. The learner may therefore output the
unique $s$-sparse coefficient vector consistent with the observations. This
recovers $f$ exactly, so
\begin{equation*}
    m^{(0)}(\cS_{n,s},\eps,\delta)
    \le
    Cs n\log^2(2s).
\end{equation*}

We now prove the lower bound. The case $s=1$ follows by considering the family
$\{\chi_S:S\subseteq[n]\}$: it has size $2^n$, is $2$-separated in
$L_\infty$, and each exact query has only two possible answers. Thus,
$m\ge c n$ for $\eps<1$, and $\delta\le\delta_0$. We henceforth assume
$s\ge2$.

Set $k\coloneqq \left\lfloor \frac{s}{2}\right\rfloor $, and $a\coloneqq \frac1k$.
Since $s\le n$, we have $k\le n/2$. Apply
\Cref{lem:systematic-family-main} with this value of $k$. For every parameter
$M=(S_1,\ldots,S_k)$, the function $f_M$ is $k$-sparse and satisfies
\begin{equation*}
    \norm{f_M}_{L_\infty}\le ak=1.
\end{equation*}
Thus, $f_M\in\cS_{n,s}$.

We restrict the parameter set to a large Hamming code. The full parameter set is
$[q]^k$ with alphabet size $q=2^{n-k}$. By a standard greedy packing argument,
there exists a subset $\mathcal C\subseteq [q]^k$ such that, for all distinct
$M,M'\in\mathcal C$,
\begin{equation*}
    d_{\mathrm H}(M,M')\ge \frac{k}{4}, \qquad  \mbox{and}\quad  \log|\mathcal C|\ge c k(n-k),
\end{equation*}
for a universal constant $c>0$.

Let \(q=2^{n-k}\), so the set of all tuples \(M=(S_1,\dots,S_k)\) is identified with the \(q\)-ary Hamming cube \([q]^k\). We construct \(\mathcal C\) greedily: repeatedly choose an unused point \(M\in[q]^k\), add it to \(\mathcal C\), and delete all points \(T\) with \(d_H(M,T)<k/4\). By construction, any two distinct \(M,T\in\mathcal C\) satisfy \(d_H(M,T)\ge k/4\). It remains to lower bound \(|\mathcal C|\). Each chosen codeword deletes at most one Hamming ball of radius \(k/4\), whose size is bounded by
\[
\sum_{\ell\le k/4}{k\choose \ell}(q-1)^\ell
\le
\sum_{\ell\le k/4}{k\choose \ell}q^\ell
\le
2^k q^{k/4}.
\]
Since the whole space has size \(q^k\), the greedy procedure must select at least
\[
|\mathcal C|
\ge
\frac{q^k}{2^kq^{k/4}}
=
2^{-k}q^{3k/4}
\]
codewords. Therefore, using \(q=2^{n-k}\),
\[
\log|\mathcal C|
\ge
\frac{3k}{4}\log q-k\log 2
=
\left(\frac{3}{4}k(n-k)-k\right)\log 2.
\]
Since \(k\le n/2\), this is at least \(c\,k(n-k)\) for a universal constant \(c>0\).

For $M\ne M'$ in $\mathcal C$, \Cref{lem:systematic-family-main} gives
\begin{equation*}
    \norm{f_M-f_{M'}}_{L_\infty}
    \ge
    a\,d_{\mathrm H}(M,M')
    \ge
    \frac14 .
\end{equation*}
Hence, if $\eps_0<1/8$, any hypothesis with $L_\infty$-error at most $\eps\le
\eps_0$ identifies the element of $\mathcal C$ uniquely.

For any query $(u,v)\in\{-1,1\}^k\times\{-1,1\}^{n-k}$, $f_M(u,v)
    =
    \frac1k\sum_{j=1}^k \pm 1$,
so each exact answer belongs to a set of size at most $2k+1$. Thus, a
deterministic adaptive decision tree of depth $m$ has at most $(2k+1)^m$
possible transcripts, and each transcript can be correct for at most one
parameter in $\mathcal C$.

We now extend the counting argument to randomized learners. Let \(\mathcal C\subseteq[q]^k\) be the code constructed above, and let \(M\sim \mathrm{Unif}(\mathcal C)\). Consider any randomized \(m\)-query learner \(A\), and denote its internal randomness by \(R\). After conditioning on \(R=r\), the learner becomes a deterministic adaptive decision tree \(A_r\) of depth \(m\). Since every noiseless query value \(f_M(u,v)\) belongs to a set of cardinality at most \(2k+1\), this decision tree has at most \((2k+1)^m\) possible leaves.

Fix \(r\). For a leaf \(\ell\), the deterministic learner \(A_r\) outputs some function \(\widehat f_\ell\). This leaf can be successful for at most one \(M\in\mathcal C\). Indeed, if it were successful for two distinct \(M,T\in\mathcal C\), then
\[
\|f_M-\widehat f_\ell\|_{L_\infty}\le \varepsilon
\qquad\text{and}\qquad
\|f_T-\widehat f_\ell\|_{L_\infty}\le \varepsilon,
\]
and therefore
\[
\|f_M-f_T\|_{L_\infty}\le 2\varepsilon,
\]
contradicting the code separation \(\|f_M-f_T\|_{L_\infty}\ge 1/4\) whenever \(\varepsilon<1/8\). Hence, for each fixed \(r\), the deterministic learner can be correct on at most one codeword per leaf, and so on at most \((2k+1)^m\) codewords in total. Consequently,
\[
\mathbb P_{M\sim \mathrm{Unif}(\mathcal C)}
\bigl(\|A_r(f_M)-f_M\|_{L_\infty}\le \varepsilon\bigr)
\le
\frac{(2k+1)^m}{|\mathcal C|}.
\]

Averaging over \(R\), we get $\mathbb P_{M,R}
\bigl(\|A_R(f_M)-f_M\|_{L_\infty}\le \varepsilon\bigr)
\le
\frac{(2k+1)^m}{|\mathcal C|}$.
On the other hand, if \(A\) succeeded uniformly with failure probability at most \(\delta\), then for every \(M\in\mathcal C\), $\mathbb P_{M,R}
\bigl(\|A_R(f_M)-f_M\|_{L_\infty}\le \varepsilon\bigr)
\ge 1-\delta$.
Therefore, it is necessary that
\[
(2k+1)^m\ge (1-\delta)|\mathcal C|.
\]
Taking logarithms gives
\[
m\ge
\frac{\log|\mathcal C|+\log(1-\delta)}{\log(2k+1)}.
\]
Using \(\log|\mathcal C|\ge c k(n-k)\), \(k=\lfloor s/2\rfloor\), and \(s\le n\), this yields $m\ge c'\frac{sn}{\log(2s)}$
for another universal constant \(c'>0\), provided \(\delta\) is bounded away from \(1\). \qedhere

\end{proof}

\section{Omitted proofs of \Cref{sec:low-degree}}

\subsection{Proof of \Cref{lem:bounded-prior}} \label{app:bounded-prior}

\begin{proof}
Fix $x\in\cuben$. Since $G\sim\mathcal{N}(0,I_b)$ and
$\norm{\chi_{\cB}(x)}_2^2=q^d$, we have $\langle G,\chi_{\cB}(x)\rangle
    \sim 
    \mathcal{N}(0,q^d)$ as this linear Gaussian combination has variance
    \begin{equation*}
    \mathrm{Var}(\langle G,\chi_\cB(x) \rangle =\chi_\cB(x)^\top \mathrm{Cov}(G) \chi_\cB(x)=\chi_\cB(x)^\top I_b \chi_\cB(x)=\Vert \chi_\cB(x)\Vert_2^2=q^d.
    \end{equation*}
    Hence, for every $t>0$,
\begin{equation*}
    \prob\left(
        |\langle G,\chi_{\cB}(x)\rangle|>t
    \right)
    \le
    2\exp\left(
        -\frac{t^2}{2q^d}
    \right).
\end{equation*}

Only the first $dq$ coordinates are active, so there are $2^{dq}$ relevant
block sign tuples. 
Set $K_d\coloneqq 4\sqrt{d+1}$ and take $t
    =
    K_d q^{d/2}\sqrt{q+\log(1/\delta)}$.
A union bound gives
\begin{align*}
    \prob\left(
        \sup_{x\in\cuben}
        |\langle G,\chi_{\cB}(x)\rangle|
        >
        K_d q^{d/2}\sqrt{q+\log(1/\delta)}
    \right)
    &\le
    2^{dq+1}
    \exp\left(
        -\frac{K_d^2}{2}
        \bigl(q+\log(1/\delta)\bigr)
    \right)  \\
    &\le
    \frac{\delta}{16},
\end{align*}
for every $q\ge 1$ and every $0<\delta\le 1/4$, by the choice of $K_d$.

On the complementary event, $\norm{f_{\mu G}}_{L_\infty}
    \le
    \mu K_d q^{d/2}\sqrt{q+\log(1/\delta)} $.
Using the definition of $\mu$ in \Cref{eq:gaussian-prior}, this becomes $\norm{f_{\mu G}}_{L_\infty}
    \le
    C_0K_dB_d\eps $.
Thus, if
\begin{equation*}
    \eps
    \le
    \frac{1}{C_0K_dB_d},
\end{equation*}
then $\norm{f_{\mu G}}_{L_\infty}\le 1$ on this event. Equivalently, since
$K_d=4\sqrt{d+1}$, this condition is implied by $\eps
    \le
    \frac{c_b}{B_d\sqrt d}$
for a sufficiently small universal constant $c_b>0$, after adjusting constants.
Therefore,
\begin{equation*}
    \prob\left(
        \norm{f_{\mu G}}_{L_\infty}\le 1
    \right)
    \ge
    1-\frac{\delta}{16}.\qedhere
\end{equation*}
\end{proof}

\subsection{Proof of \Cref{lem:walsh-design}} \label{app:walsh-design}

\begin{proof}
Let $X\sim\Unif(\cuben)$ and write $\varphi(X)=\chi_{\le d}(X)\in\reals^D$.
By orthogonality of the Walsh basis,
$\expectation[\varphi(X)\varphi(X)^\transpose]=I_D$. Moreover, since each
Walsh function takes values in $\cube$, $\norm{\varphi(X)}_2^2=D$ almost surely.
Let $X_1,\ldots,X_m$ be independent copies of $X$ and set
\begin{equation*}
    G=\frac{1}{m}\sum_{t=1}^m
    \varphi(X_t)\varphi(X_t)^\transpose .
\end{equation*}
Hence, $\expectation[G]=I_D$.

We apply \Cref{lem:matrix_bernstein} to the
centered matrices $Z_t=\varphi(X_t)\varphi(X_t)^\transpose-I_D$. These matrices
are independent, symmetric, and have a mean of zero. Write
$A_t=\varphi(X_t)\varphi(X_t)^\transpose$. Since $A_t$ is rank one and
$\norm{\varphi(X_t)}_2^2=D$, its eigenvalues are $D,0,\ldots,0$. Hence, the
eigenvalues of $Z_t=A_t-I_D$ are $D-1,-1,\ldots,-1$, and therefore
$\norm{Z_t}_{\mathrm{op}}\le D$.

Furthermore, $A_t^2=\norm{\varphi(X_t)}_2^2 A_t=D A_t$. Since
$\expectation[A_t]=I_D$, it follows that
\begin{equation*}
    \expectation[Z_t^2]
    =
    \expectation[(A_t-I_D)^2]
    =
    \expectation[A_t^2]-2\expectation[A_t]+I_D
    =
    (D-1)I_D .
\end{equation*}
Thus, the Bernstein variance proxy satisfies
\begin{equation*}
    \sigma^2
    =
    \norm{\sum_{t=1}^m\expectation[Z_t^2]}_{\mathrm{op}}
    =
    m(D-1)
    \le mD.
\end{equation*}
Since $G-I_D=m^{-1}\sum_{t=1}^m Z_t$, matrix Bernstein gives, for a universal
constant $c>0$,
\begin{align*}
    \prob\left(
        \norm{G-I_D}_{\mathrm{op}}>\frac{1}{2}
    \right)
    &=
    \prob\left(
        \norm{\sum_{t=1}^m Z_t}_{\mathrm{op}}>\frac{m}{2}
    \right) \\
    &\le
    2D\exp\left(
        -c\min\left\{
            \frac{m^2}{mD},
            \frac{m}{D}
        \right\}
    \right) \\
    &\le
    2D\exp\left(-c\frac{m}{D}\right).
\end{align*}

Choosing $m\ge C D\log(2D)$ with $C>0$ sufficiently large makes the right-hand side strictly smaller than $1$. Hence, with positive probability,
\begin{equation*}
    \norm{G-I_D}_{\mathrm{op}}\le \frac{1}{2}.
\end{equation*}
Choosing one realization in this event gives cube points
$x_1,\ldots,x_m\in\cuben$ such that
\begin{equation*}
    \frac12 I_D
    \preceq
    \frac1m\sum_{t=1}^m
    \chi_{\le d}(x_t)\chi_{\le d}(x_t)^\transpose
    \preceq
    \frac32 I_D .
\end{equation*}
In particular, the lower bound in the statement holds.
\end{proof}

\subsection{Proof of the upper bound in \Cref{thm:low-degree-main}} \label{app:low-degree-main-upperbound}

\begin{proof}
Fix a design satisfying \Cref{lem:walsh-design}, and let
$\Phi\in\R^{m\times D}$ be the matrix whose $t$-th row is
$\chi_{\le d}(x_t)^\top$. By the choice of the design,
\begin{equation*}
    \frac1m\Phi^\top\Phi
    =
    \frac1m\sum_{t=1}^m
    \chi_{\le d}(x_t)\chi_{\le d}(x_t)^\top
    \succeq \frac12 I_D .
\end{equation*}
In particular, $\Phi^\top\Phi$ is invertible, and $\left(\frac1m\Phi^\top\Phi\right)^{-1}\preceq 2I_D$.

We query $Y_t=f(x_t)+\xi_t$ for $t \in [m]$, and compute the least-squares estimator
\begin{equation*}
    \widehat\theta
    =
    (\Phi^\top\Phi)^{-1}\Phi^\top Y.
\end{equation*}
Writing $\theta=\theta(f)$, we have $Y=\Phi\theta+\xi$, where
$\xi=(\xi_1,\ldots,\xi_m)^\top$. Therefore,
\begin{equation*}
    \widehat\theta-\theta
    =
    (\Phi^\top\Phi)^{-1}\Phi^\top(\Phi\theta+\xi)-\theta 
    =
    (\Phi^\top\Phi)^{-1}\Phi^\top\xi .
\end{equation*}
Thus, for every fixed $x\in\cuben$,
\begin{equation*}
    \widehat f(x)-f(x)
    =
    \chi_{\le d}(x)^\top(\widehat\theta-\theta)  
    =
    \chi_{\le d}(x)^\top
    (\Phi^\top\Phi)^{-1}\Phi^\top\xi .
\end{equation*}
Define $a(x)
    \coloneqq
    \Phi(\Phi^\top\Phi)^{-1}\chi_{\le d}(x)
    \in\R^m$ so that
\begin{equation*}
    \widehat f(x)-f(x)
    =
    a(x)^\top \xi
    =
    \sum_{t=1}^m a_t(x)\xi_t .
\end{equation*}

We now bound the Euclidean norm of $a(x)$. Since $a(x)
    =
    \Phi(\Phi^\top\Phi)^{-1}\chi_{\le d}(x)$,
we get
\begin{align*}
    \norm{a(x)}_2^2
    &=
    \chi_{\le d}(x)^\top
    (\Phi^\top\Phi)^{-1}
    \Phi^\top\Phi
    (\Phi^\top\Phi)^{-1}
    \chi_{\le d}(x)  \\
    &=
    \chi_{\le d}(x)^\top
    (\Phi^\top\Phi)^{-1}
    \chi_{\le d}(x)  \\
    &=
    \frac1m
    \chi_{\le d}(x)^\top
    \left(\frac1m\Phi^\top\Phi\right)^{-1}
    \chi_{\le d}(x).
\end{align*}
Using $\left(\frac1m\Phi^\top\Phi\right)^{-1}\preceq 2I_D$ and the orthonormality of the Walsh basis, we obtain
\begin{align*}
    \norm{a(x)}_2^2
    &\le
    \frac2m\norm{\chi_{\le d}(x)}_2^2=\frac{2D}{m} .
\end{align*}

Since the noises $\xi_1,\ldots,\xi_m$ are independent and
$\sigma$-subgaussian, the linear combination $\sum_{t=1}^m a_t(x)\xi_t$
 is $\sigma\norm{a(x)}_2$-subgaussian. Consequently, for every fixed
$x\in\cuben$, using the previous inequality on $\Vert a(x) \Vert_2^2$, this gives
\begin{equation*}
    \prob_f\left(
        |\widehat f(x)-f(x)|>\eps
    \right)
    \le
    2\exp\left(
        -\frac{\eps^2}{2\sigma^2\norm{a(x)}_2^2}
    \right) \le
    2\exp\left(
        -\frac{m\eps^2}{4\sigma^2D}
    \right).
\end{equation*}

Taking a union bound over $\cuben$, 
\begin{equation*}
    \prob_f\left(
        \norm{\widehat f-f}_{L_\infty}>\eps
    \right)
    \le
    2^{n+1}
    \exp\left(
        -\frac{m\eps^2}{4\sigma^2D}
    \right).
\end{equation*}
Therefore, the right-hand side is at most $\delta$ as soon as $2^{n+1}
    \exp\left(
        -\frac{m\eps^2}{4\sigma^2D}
    \right)
    \le \delta$, or equivalently it is sufficient that  $\frac{m\eps^2}{4\sigma^2D}
    \ge
    (n+1)\log 2+\log(1/\delta)$.
Thus, for a sufficiently large universal constant $C>0$, it suffices to take $m
    \ge
    C\frac{\sigma^2D}{\eps^2}
    \left(n+\log(1/\delta)\right)$.
Combining this with the sample size condition for a well conditioned design proves the desired upper bound.
\end{proof}


\subsection{Proof of \Cref{lem:posterior-trace-main}} \label{app:posterior-trace-main}

\begin{proof}
The formula for the covariance after conditioning on the realized adaptive design comes from the self-conjugacy of the Gaussian distribution, see \Cref{lem:posterior-cov} for details.  Since $\Sigma_m^{-1}=I+
  \frac{\mu^2}{\sigma^2}
  \sum_{t=1}^m
  \chi_{\cB}(X_t)\chi_{\cB}(X_t)^\top\succeq I$, we have $0\preceq\Sigma_m\preceq I$. Also,
$\norm{\chi_{\cB}(X_t)}_2^2=b=q^d$ for every query, hence
\[
  \tr(\Sigma_m^{-1})
  =
  b+
  \frac{\mu^2}{\sigma^2}
  \sum\nolimits_{t=1}^m\norm{\chi_{\cB}(X_t)}_2^2
  =
  b\big(1+\frac{\mu^2m}{\sigma^2}\big) \leq 2 b,
\]
where the last inequality comes from the constraint on $m$. 
Now for $\lambda_1,\dots,\lambda_{b}$ the eigenvalues of $\Sigma_m$, we have by Cauchy--Schwarz
\begin{equation*}
\tr(\Sigma_m) \cdot \tr(\Sigma_m^{-1})=  \sum\nolimits_{i=1}^b \lambda_i \cdot \sum\nolimits_{i=1}^b 1/\lambda_i \geq b^2.
\end{equation*}
Hence, $\tr(\Sigma_m) \geq b^2/\tr(\Sigma_m^{-1}) \geq b/2=q^{d}/2$.
\end{proof}

\subsection{Proof of \Cref{lem:width-main}} \label{app:width-main}

\begin{proof}[Proof of \Cref{lem:width-main}]
Recall that \(b=q^d\), and identify the tensor \(W\) with a vector in \(\R^b\). Let $A=\{j:\operatorname{Var}(W_j)\ge 1/4\}\subset [b]$ be the set of coordinates whose variances are at least \(1/4\). Since
\(0\preceq \Sigma\preceq 2I\), we have \(0\le \operatorname{Var}(W_j)\le 2\) for all \(j\), and hence
\begin{equation*}
b \leq \tr(\Sigma)= \sum_{j \in [b]} \operatorname{Var}(W_j)
= \sum_{j \in A} \operatorname{Var}(W_j) + \sum_{j \not \in A} \operatorname{Var}(W_j)
\leq 2 \vert A \vert + \frac{1}{4}(b-\vert A\vert )
= \frac{7}{4}\vert A \vert + \frac{1}{4}b .
\end{equation*}
Rearranging yields \(\vert A \vert \geq 3b/7\). Let \(r=2d/(d+1)\). 

Applying Hölder's inequality to the restriction of \(W\) to
the index set \(A\), we get $\Vert W_A \Vert_1 \leq \vert A \vert^{1-\frac{1}{r}}\Vert W_A \Vert_r$. Since removing coefficients can only decrease the \(\| \cdot \|_r\)-norm, 
\begin{equation*}
 \Vert W \Vert_r \geq \Vert W_A \Vert_r
  \geq \vert A \vert^{\frac{1}{r}-1}\Vert W_A \Vert_1 .
\end{equation*}
The expectation of the absolute value of a centered Gaussian random variable
with variance \(\operatorname{Var}(W_j)\) is $\expectation[\vert W_j\vert]=\sqrt{\operatorname{Var}(W_j)}\sqrt{2/\pi}$.
Therefore, by linearity of expectation and the definition of \(A\),
\begin{equation*}
  \expectation[\Vert W_A \Vert_1]
  =\sum\nolimits_{j\in A}\expectation[\vert W_j\vert]
  \geq \vert A\vert \cdot \frac{1}{2}\sqrt{2/\pi}
  = \frac{1}{\sqrt{2\pi}}\vert A\vert .
\end{equation*}
Combining the previous inequalities, and using that $(3/7)^{\frac1r}\geq 3/7$ as $r\geq 1$, there is a universal constant $c_0>0$ such that
\begin{equation*}
\expectation[\Vert W \Vert_r]
\geq
\vert A \vert^{\frac1r-1}\expectation[\Vert W_A \Vert_1]
\geq
\frac{1}{\sqrt{2\pi}}\vert A\vert^{\frac1r}
\geq
\frac{1}{\sqrt{2\pi}}\big(\frac{3}{7}b\big)^{\frac1r}
\geq c_0 b^{\frac1r}.
\end{equation*}

Write \(W=\Sigma^{\frac{1}{2}}g\) for \(g\sim\mathcal{N}(0,I_b)\). Recall that for
\(1\le r\le 2\), we have the norm comparison inequality $\Vert z \Vert_r \leq b^{\frac1r-\frac{1}{2}}\Vert z \Vert_2$ for all $z \in \reals^b$.  Indeed, For \(z \in \reals^b\), by Hölder,
\[
\sum_{i=1}^b |z_i|^r
\le
\left(\sum_{i=1}^b |z_i|^{2}\right)^{r/2}
\left(\sum_{i=1}^b 1\right)^{1-r/2}
=
\norm{z}_2^r b^{1-r/2}.
\]
Taking \(r\)-th roots gives the claim.
Thus, for any \(g,g'\in\reals^b\),
\begin{equation*}
\Vert \Sigma^{\frac12}(g-g')\Vert_r
\leq
b^{\frac1r-\frac12}\Vert \Sigma^{\frac12}(g-g')\Vert_2
\leq
\sqrt{2}\,b^{\frac1r-\frac12}\Vert g-g'\Vert_2,
\end{equation*}
where the last inequality uses \(\Sigma\preceq 2I_b\). Hence,
\(g\mapsto \Vert \Sigma^{\frac12}g\Vert_r\) is
\(\sqrt{2}\,b^{\frac1r-\frac12}\)-Lipschitz.  Gaussian concentration for Lipschitz
functions gives, for every \(t>0\),
\[
\prob\left(
  \Vert W \Vert_r
  \leq
  \expectation[\Vert W \Vert_r]-t
\right)
\leq
\exp\left(
  -\frac{t^2}{4b^{\frac2r-1}}
\right).
\]
Taking \(t=\expectation[\Vert W \Vert_r]/2\) and using
\(\expectation[\Vert W \Vert_r]\geq c_0b^{\frac1r}\), we obtain $\prob\left(
  \Vert W \Vert_r
  \leq
  \frac12\expectation[\Vert W \Vert_r]
\right)\leq \exp(-c_1 b)$ for another universal constant \(c_1>0\). For $b$ large enough, this will yield a probability of less than $1/4$. Decreasing the universal constant if
necessary to handle the finitely many small values of \(b\), which remains strictly positive as a minimum over a finite set of strictly positive values, this implies that
there exists a universal constant \(c>0\) such that $\prob\left(\Vert W \Vert_r \geq c b^{\frac1r}\right)\geq \frac34$.

Finally, by the degree-$d$ Bohnenblust--Hille inequality applied to the block-multilinear
polynomial associated with \(W\), $\Vert W \Vert_r \leq B_d \Vert f_W \Vert_{L_{\infty}}
  = B_d \Vert W \Vert_{\mathrm{inj}}$. Therefore,
  \begin{equation*}
\prob(
    B_d\Vert W \Vert_{\mathrm{inj}} \geq c b^{\frac1r}
  )
  \geq
  \prob(\Vert W \Vert_r \geq c b^{\frac1r})
  \geq \frac34.
  \end{equation*}
  Since \(b^{\frac1r}=(q^d)^{(d+1)/(2d)}=q^{(d+1)/2}\), the claim follows. \qedhere

\end{proof}


\subsection{Proof of \Cref{prop:diameter-main}} \label{app:diameter-main}

\begin{proof}

Conditionally on $\mathcal{H}_m$, draw $G'$ independently from the same
posterior distribution as $G$. Conditionally on the observations, $G-G'$ is a
centered Gaussian with covariance $2\Sigma_m$. By
\Cref{lem:posterior-trace-main}, we have $0\preceq 2\Sigma_m\preceq 2I$ and
$\tr(2\Sigma_m)\ge q^d$. Applying the previous lemma conditionally on
$\mathcal{H}_m$ gives
\begin{equation*}
\prob(\Vert G-G' \Vert_{\mathrm{inj}}\ge c q^{(d+1)/2}/B_d
\,\mid\,\mathcal{H}_m)\ge 3/4.    
\end{equation*}

This coefficient-tensor separation is exactly a scaled $L_\infty$ separation of the corresponding functions $\Vert f_{\mu G}-f_{\mu G'} \Vert_{L_\infty}=\mu \Vert G-G' \Vert_{\mathrm{inj}}$. 
Using the definition of $\mu$ and the assumption $\log(1/\delta)\le q$,
\begin{equation}
  \mu q^{(d+1)/2}
  =
  C_0 B_d\eps\sqrt{\frac{q}{q+\log(1/\delta)}}
  \ge
  \frac{C_0B_d}{\sqrt2}\eps.
\end{equation}
Therefore, on the large-injective-norm event, $\norm{f_{\mu G}-f_{\mu G'}}_{L_\infty}
  =
  \mu\norm{G-G'}_{\mathrm{inj}}
  \ge
  cC_0\eps/\sqrt2$.
Thus, choosing $C_0>0$ sufficiently large, two independent posterior draws are separated by more
than $2\eps$ in uniform norm with conditional probability at least $3/4$.  On
that event, a fixed estimator $\smash[t]{\widehat f}$ cannot be within $\eps$ of both
posterior draws. Whenever $\norm{f_{\mu G}-f_{\mu G'}}_{L_\infty}>2\eps$, at least one of the two inequalities $\Vert f_{\mu G}- \widehat f \Vert_{L_\infty}>\eps$ and $\Vert f_{\mu G'}- \widehat f \Vert_{L_\infty}>\eps$ must hold, as otherwise by triangle inequality we would have 
\begin{equation*}
2\eps <\norm{f_{\mu G}-f_{\mu G'}}_{L_\infty} \leq \Vert f_{\mu G}- \widehat f \Vert_{L_\infty}+ \Vert f_{\mu G'}- \widehat f \Vert_{L_\infty} \leq 2 \eps    
\end{equation*}
Taking conditional posterior probabilities and using that $f_{\mu G}$ and $f_{\mu G'}$ have the same conditional law, gives by symmetry
\begin{align*}
\frac{3}{4} &\leq \prob(\Vert f_{\mu G} -f_{\mu G'}\Vert_{L_\infty}>2\eps\mid\mathcal{H}_m)\\
  &\le
  \prob(\Vert f_{\mu G} -\widehat f\Vert_{L_\infty}>\eps\mid\mathcal{H}_m)
  +
\prob(\Vert f_{\mu G'} -\widehat f\Vert_{L_\infty}>\eps\mid\mathcal{H}_m)
  \\
  &=2\prob(\Vert f_{\mu G}-\widehat  f\Vert_{L_\infty}>\eps\mid\mathcal{H}_m).     \qedhere
\end{align*}


\end{proof}

\subsection{Proof of \Cref{prop:one-point-main}} \label{app:one-point-main}

\begin{proof}
By \Cref{lem:posterior-trace-main}, $\tr(\Sigma_m)\ge q^d/2$.  Averaging the
posterior variance of the prediction over the active cube reveals a point where
this variance is still large.  Indeed, the block-multilinear Walsh features are
orthonormal under the uniform measure on the active cube, so
$\expectation_X[\chi_{\cB}(X)\chi_{\cB}(X)^\top]=I$. Let $X \sim \cuben$. Using that the trace of a scalar is equal to the scalar, we obtain that
\begin{align*}
\expectation[\chi_{\cB}(X)^\top\Sigma_m \chi_{\cB}(X)]
&=
\expectation[\tr\!\left(\chi_{\cB}(X)^\top\Sigma_m \chi_{\cB}(X)\right) ] \\
&=
\expectation[\tr\!\left(\Sigma_m\chi_{\cB}(X)\chi_{\cB}(X)^\top\right) ]
\\
&=
\tr\!\left(\Sigma_m\expectation[\chi_{\cB}(X)\chi_{\cB}(X)^\top]\right) \\
&=
\tr(\Sigma_m) \geq q^d/2.
\end{align*}
Thus, there exists at least one $x^\star$ such that
$\chi_{\cB}(x^\star)^\top\Sigma_m \chi_{\cB}(x^\star)\ge q^d/2$.
Conditional on the observations, the scalar $f_{G}(x^\star)=\langle G, \chi_{\cB}(x^\star) \rangle$ is Gaussian.
Its posterior variance is at least
\begin{equation*}
   \mu^2 \chi_{\cB}(x^\star)^\top\Sigma_m \chi_{\cB}(x^\star)
  \ge
  \mu^2 q^d/2
  =
  \frac{C_0^2B_d^2\eps^2}{2(q+\log(1/\delta))}
  \ge
  \frac{C_0^2B_d^2\eps^2}{4\log(1/\delta)},   
\end{equation*}
where the last inequality uses $\log(1/\delta)>q$.

Let $s_\star^2$ denote this variance.  The estimator $\smash[t]{\widehat f}$ is fixed after
the transcript is fixed.  Among all intervals of radius $\eps$, the Gaussian
posterior mass is maximized by centering the interval at the posterior mean.
Therefore, regardless of the value $\widehat f(x^\star)$, for some $Z\sim\N(0,1)$:
\begin{equation*}
    \prob\big (
  |f_{\mu G}(x^\star)-\widehat f(x^\star)|>\eps
  \,\mid \,\mathcal{H}_m
  \big)
  \ge
  \prob\big(|Z|>\frac{\eps}{s_\star}\big)
  \ge
  \prob\big(|Z|>\frac{2\sqrt{\log(1/\delta)}}{C_0B_d}\big),  
\end{equation*}
Choosing $C_0$ large enough and applying
\Cref{lem:gauss-log-tail} gives the lower bound $2e^{-\log(1/\delta)}=2\delta$.  Since an
error at the single point $x^\star$ is smaller than the  $L_\infty$ error, the
same lower bound holds for $\Vert f_{G}-\widehat f \Vert_{L_\infty}$. \qedhere
\end{proof}

\subsection{Proof of the lower bound of \Cref{thm:low-degree-main}} \label{app:low-degree-main} 

\begin{proof}
Choose $C_0>0$ sufficiently large so that \Cref{prop:diameter-main,prop:one-point-main} 
apply. By the assumptions, for $c>0$ small enough,
$m \le c\sigma^2q^d(q+\log(1/\delta))/(B_d^2\eps^2)$, so
\Cref{lem:posterior-trace-main} applies for every transcript $\mathcal H_m$. If $\log(1/\delta)\le q$, \Cref{prop:diameter-main} gives conditional posterior
error probability at least $3/8$, otherwise, \Cref{prop:one-point-main} gives
conditional posterior error probability at least $2\delta$. Hence, under the
unconditioned Gaussian prior,
$\prob(\Vert f_{\mu G}-\smash[t]{\widehat f}\Vert_{L_\infty}>\eps)
\ge \min(3/8, 2\delta)$.

Conditioning on the bounded event loses at most $\delta/16$ in Bayes risk by
\Cref{lem:prior-conditioning}. Since $\delta\le1/4$, the conditioned prior has Bayes
error probability at least $\min(3/8, 2\delta)-\delta/16 \ge \delta$. Since this
prior is supported on $\cP_{n,d}$, the minimax risk over $\cP_{n,d}$ is at least
$\delta$. Finally, $q=\lfloor n/d\rfloor\ge n/(2d)$ gives $q^d\ge D/(2e)^d$, and
$q+\log(1/\delta)\ge(n+\log(1/\delta))/(2d)$. This concludes the proof.
\end{proof}

\section{Omitted proofs of \Cref{sec:sparse}}

\subsection{Proof of \Cref{lem:count-gs}} \label{app:count-gs}

\begin{proof}
There are exactly \(2^n\) Walsh basis functions on \(\cuben\). To construct a function
\(g\in\mathcal G_s\), we first choose a support of size \(k\le s\), and then choose
one sign in \(\cube\) for each selected character. Hence,
\begin{equation*}
    |\mathcal G_s|
    \le
    \sum_{k=0}^{s} \binom{2^n}{k}2^k .
\end{equation*}
Using the standard estimate \(\binom{N}{k}\le (eN/k)^k\), with \(N=2^n\), we get
\begin{equation*}
    \binom{2^n}{k}2^k
    \le
    \left(\frac{2e\,2^n}{k}\right)^k .
\end{equation*}
Therefore,
\begin{equation*}
    |\mathcal G_s|
    \le
    (s+1)\left(\frac{2e\,2^n}{s}\right)^s .
\end{equation*}
Taking logarithms yields
\begin{equation*}
    \log |\mathcal G_s|
    \le
    \log(s+1)+s\log(2e)+sn\log 2-s\log s .
\end{equation*}
Since \(s\le n\), the positive terms are bounded by \(Csn\) for a universal
constant \(C>0\). Thus, $|\mathcal G_s|\le \exp(Csn)$ as claimed. 
\end{proof}

\subsection{Proof of \Cref{lem:top-s-thresholding}} \label{app:top-s-thresholding}

\begin{proof}
Let $T=\supp(\theta)$ and $\widehat T=\supp(\smash[t]{\widehat \theta})$. Since
$\theta$ is $s$-sparse and $\smash[t]{\widehat \theta}$ keeps at most $s$
coordinates, we have $|T|\le s$ and $|\widehat T|\le s$. We decompose
\begin{align*}
    \norm{\smash[t]{\widehat \theta}-\theta}_1
    &=
    \sum_{S\in T\cap \widehat T}
    |\widehat \theta_S-\theta_S|
    +
    \sum_{S\in \widehat T\setminus T}
    |\widehat \theta_S-\theta_S|
    +
    \sum_{S\in T\setminus \widehat T}
    |\widehat \theta_S-\theta_S| .
\end{align*}
Writing $e=\smash{\widetilde \theta}-\theta$, we have
$\widehat\theta_S=\widetilde\theta_S$ on $\widehat T$. Hence,
\begin{equation*}
    \norm{\smash[t]{\widehat \theta}-\theta}_1
    =
    \sum_{S\in T\cap \widehat T}|e_S|
    +
    \sum_{S\in \widehat T\setminus T}|e_S|
    +
    \sum_{S\in T\setminus \widehat T}|\theta_S| .
\end{equation*}

Since $\widehat T$ contains the $s$ largest empirical magnitudes and
$|T|\le s$, every omitted index in $T\setminus \widehat T$ can be matched
injectively to an index in $\widehat T\setminus T$ with at least as large
empirical magnitude, as $\vert T \setminus \widehat T \vert =\vert \widehat  T \setminus T \vert$. Therefore,
\begin{equation*}
    \sum_{S\in T\setminus \widehat T}
    |\widetilde \theta_S|
    \le
    \sum_{S\in \widehat T\setminus T}
    |\widetilde \theta_S| .
\end{equation*}
On $\widehat T\setminus T$, we have $\theta_S=0$, and thus
$\widetilde\theta_S=e_S$. Moreover, for $S\in T\setminus\widehat T$,
$|\theta_S|\le |\widetilde\theta_S|+|e_S|$. Hence,
\begin{equation*}
    \sum_{S\in T\setminus \widehat T}|\theta_S|
    \le
    \sum_{S\in T\setminus \widehat T}
    |\widetilde \theta_S|
    +
    \sum_{S\in T\setminus \widehat T}|e_S|  
    \le
    \sum_{S\in \widehat T\setminus T}|e_S|
    +
    \sum_{S\in T}|e_S| .
\end{equation*}
Substituting this bound into the previous decomposition gives
\begin{equation*}
    \norm{\smash[t]{\widehat \theta}-\theta}_1
    \le
    \sum_{S\in T\cap \widehat T}|e_S|
    +
    2\sum_{S\in \widehat T\setminus T}|e_S|
    +
    \sum_{S\in T}|e_S| .
\end{equation*}
Each of the sets appearing in these sums has cardinality at most $s$, so each
sum is bounded by $R_s(e)$. Therefore,
\begin{equation*}
    \norm{\smash[t]{\widehat \theta}-\theta}_1
    \le
    4 \Vert \widetilde \theta -\theta \Vert_{(s)}. \qedhere
\end{equation*}
\end{proof}

\subsection{Proof of upper bound of \Cref{thm:sparse-noisy}} 
\label{app:sparse-noisy-upperbound}

\begin{proof}
Let $e\coloneqq \smash{\widetilde\theta}-\theta$. Fix a signed sparse Walsh polynomial $g =\sum_{S\in \cA}\eta_S\chi_S \in \cG_s $ with $\qquad|\cA|\le s$. 
Then
\begin{align*}
    \sum_{S\in \cA}\eta_S e_S
    =
    \sum_{S\in \cA}\eta_S
    \left(
        \frac1m\sum_{t=1}^m Y_t\chi_S(X_t)-\theta_S
    \right)  
    =
    \frac1m\sum_{t=1}^m Y_t g(X_t)
    -
    \sum_{S\in \cA}\eta_S\theta_S .
\end{align*}
By orthogonality of the Walsh basis, $\sum_{S\in \cA}\eta_S\theta_S
=\expectation[f(X)g(X)]$. Hence,
$\sum_{S\in \cA}\eta_S e_S=R_{\mathrm{des}}+R_{\mathrm{noise}}$, where
\begin{equation*}
    R_{\mathrm{des}}
    \coloneqq
    \frac1m\sum_{t=1}^m
    \left(
        f(X_t)g(X_t)-\expectation[f(X)g(X)]
    \right),
    \qquad
    R_{\mathrm{noise}}
    \coloneqq
    \frac1m\sum_{t=1}^m \xi_t g(X_t).
\end{equation*}
We control these two terms uniformly over $\cG_s$.

First consider the design term. Set
$Z_t\coloneqq f(X_t)g(X_t)-\expectation[f(X)g(X)]$. Since
$\norm{f}_{L_\infty}\le1$, and since \Cref{lem:signed-moments} gives
$\expectation[g(X)^2]\le s$ and $\norm{g}_{L_\infty}\le s$, we have
$\expectation[Z_t^2]\le \expectation[f(X_t)^2g(X_t)^2]\le s$. Moreover,
\begin{equation*}
    |Z_t|
    \le
    |g(X_t)|+\expectation[|g(X)|]
    \le
    s+\sqrt{\expectation[g(X)^2]}
    \le
    2s .
\end{equation*}
Bernstein's inequality therefore gives, for $0<r\le1$,
\begin{align*}
    \prob(|R_{\mathrm{des}}|>r)
    \le
    2\exp\left(
        -c\min\left\{
            \frac{m^2r^2}{ms},
            \frac{mr}{2s}
        \right\}
    \right)  
    \le
    2\exp\left(
        -c\frac{mr^2}{s}
    \right).
\end{align*}
By a union bound over $\cG_s$ and \Cref{lem:count-gs}, we have
$\sup_{g\in\cG_s}|R_{\mathrm{des}}|\le \eps/16$ with probability at least $1-\delta/3$,
provided
\begin{equation*}
    m
    \ge
    C\frac{s}{\eps^2}
    \left(
        sn+\log\left(\frac1\delta\right)
    \right).
\end{equation*}

We next control the empirical second moments. For fixed $g\in\cG_s$, let
$W_t\coloneqq g(X_t)^2-\expectation[g(X)^2]$. By \Cref{lem:signed-moments},
$\expectation[W_t^2]\le \expectation[g(X_t)^4]\le s^3$, and
\begin{equation*}
    |W_t|
    \le
    |g(X_t)|^2+\expectation[g(X)^2]
    \le
    s^2+s
    \le
    2s^2 .
\end{equation*}
Another application of Bernstein's inequality gives
\begin{align*}
    \prob\left(
        \left|
            \frac1m\sum_{t=1}^m W_t
        \right|>2s
    \right)
    \le
    2\exp\left(
        -c\min\left\{
            \frac{4s^2m^2}{ms^3},
            \frac{2sm}{2s^2}
        \right\}
    \right)  
    \le
    2e^{-cm/s}.
\end{align*}
Thus, by a union bound over $\cG_s$, with probability at least $1-\delta/3$,
\begin{equation*}
    \frac1m\sum_{t=1}^m g(X_t)^2
    \le
    \expectation[g(X)^2]+2s
    \le
    3s
    \qquad
    \text{for all } g\in\cG_s,
\end{equation*}
provided $m
    \ge
    Cs\left(
        sn+\log\left(\frac1\delta\right)
    \right)$.
Since $\eps\le1$ and $\sigma^2+1\ge1$, this condition is absorbed by the final
sample-size requirement.

It remains to control the noise term. Conditional on $X_1,\ldots,X_m$,
\Cref{lem:subg-weighted} gives, for every fixed $g\in\cG_s$,
\begin{equation*}
    \prob\left(
        |R_{\mathrm{noise}}|>r
        \mid X_1,\ldots,X_m
    \right)
    \le
    2\exp\left(
        -
        \frac{m^2r^2}
        {2\sigma^2\sum_{t=1}^m g(X_t)^2}
    \right).
\end{equation*}
On the empirical second-moment event above,
$\sum_{t=1}^m g(X_t)^2\le 3ms$ for all $g\in\cG_s$. Hence, conditionally on
this event,
\begin{equation*}
    \prob\left(
        |R_{\mathrm{noise}}|>r
        \mid X_1,\ldots,X_m
    \right)
    \le
    2\exp\left(
        -c\frac{mr^2}{\sigma^2s}
    \right).
\end{equation*}
A union bound over $\cG_s$ gives $\sup_{g\in\cG_s}|R_{\mathrm{noise}}|\le \eps/16$ with
conditional probability at least $1-\delta/3$, provided
\begin{equation*}
    m
    \ge
    C\frac{\sigma^2s}{\eps^2}
    \left(
        sn+\log\left(\frac1\delta\right)
    \right).
\end{equation*}

On the intersection of the three good events, for every
$\cA\subseteq2^{[n]}$ with $|\cA|\le s$ and every sign vector
$(\eta_S)_{S\in\cA}\in\cube^{\cA}$,
\begin{equation*}
    \left|
        \sum_{S\in\cA}\eta_S e_S
    \right|
    =
    |R_{\mathrm{des}}+R_{\mathrm{noise}}|
    \le
    |R_{\mathrm{des}}|+|R_{\mathrm{noise}}|
    \le
    \frac{\eps}{8}.
\end{equation*}
Taking the supremum over $\cA$ and signs gives
$\norm{e}_{(s)}\le \eps/8\le \eps/4$. The total failure probability is at most
$\delta$.

Let $\smash[t]{\widehat\theta}$ be obtained by keeping the $s$ largest entries
of $\smash{\widetilde\theta}$ in absolute value and setting all other entries to
zero, and let $\smash[t]{\widehat f}$ be the Walsh polynomial associated with
$\smash[t]{\widehat\theta}$. On the event $\norm{e}_{(s)}\le\eps/4$,
\Cref{lem:top-s-thresholding} gives
\begin{align*}
    \norm{\smash[t]{\widehat f}-f}_{L_\infty}
    \le
    \sum_{S\subseteq[n]}
    |\widehat\theta_S-\theta_S|\norm{\chi_S}_{L_\infty}  
    \le
    \norm{\smash[t]{\widehat\theta}-\theta}_1
    \le
    4\norm{e}_{(s)}
    \le
    \eps .
\end{align*}
Combining all conditions on $m$, it suffices to take $m
    \ge
    C\frac{(\sigma^2+1)s}{\eps^2}
    \left(
        sn+\log\left(\frac1\delta\right)
    \right)$,
which is the desired upper bound.
\end{proof}

\subsection{Proof of \Cref{lem:systematic-family-main}} \label{app:systematic-family-main}

\begin{proof}
Each term $u_j\chi_{S_j}(v)$ is a Walsh function on the full cube, namely
$u_j\prod_{\ell\in S_j}v_\ell$. These $k$ Walsh functions are distinct because
they contain different first-block coordinates $u_j$. Hence, $F_M$ has at most
$k$ non-zero Walsh coefficients, and so $f_M=aF_M$ is $k$-sparse.

Moreover, since each Walsh function has magnitude $1$, we have
\begin{equation*}
    |F_M(u,v)|
    \le
    \sum_{j=1}^k |u_j\chi_{S_j}(v)|
    =
    k
\end{equation*}
for every $(u,v)\in\cube^k\times\cube^{n-k}$. Therefore,
$\norm{f_M}_{L_\infty}\le ak$.

We now prove the separation bound. Let
$J\coloneqq\{j\in[k]:S_j\ne T_j\}$, so that $|J|=d_{\mathrm H}(M,T)$. For
$j\in J$, the Walsh function $\chi_{S_j}\chi_{T_j}=\chi_{S_j\symmetricdifference T_j}$ is
nonconstant, since $S_j\symmetricdifference T_j\ne\emptyset$. Hence, if
$V\sim\Unif(\cube^{n-k})$, then $\chi_{S_j}(V)\chi_{T_j}(V)$ is uniform on
$\cube$, and in particular
\begin{equation*}
    \prob\left(\chi_{S_j}(V)\ne \chi_{T_j}(V)\right)
    =
    \frac12 .
\end{equation*}
It follows by linearity of expectation  that
\begin{equation*}
    \expectation_V\left[
        \sum_{j\in J}
        \indicator\{\chi_{S_j}(V)\ne \chi_{T_j}(V)\}
    \right] = \sum_{j \in J} \prob\left(\chi_{S_j}(V)\ne \chi_{T_j}(V)\right)
    =
    \frac{|J|}{2}.
\end{equation*}
Therefore, there exists $v^\star\in\cube^{n-k}$ such that
\begin{equation*}
    \sum_{j\in J}
    \indicator\{\chi_{S_j}(v^\star)\ne \chi_{T_j}(v^\star)\}
    \ge
    \frac{|J|}{2}.
\end{equation*}
Choose $u^\star\in\cube^k$ by setting
$u_j^\star\coloneqq \chi_{S_j}(v^\star)$ for every $j\in[k]$. Then
$u_j^\star\chi_{S_j}(v^\star)=1$ for every $j$, and hence
$F_M(u^\star,v^\star)=k$. On the other hand, the $j$-th term of $F_T$ at
$(u^\star,v^\star)$ is
$u_j^\star\chi_{T_j}(v^\star)
=
\chi_{S_j}(v^\star)\chi_{T_j}(v^\star)$, which is equal to $-1$ whenever
$\chi_{S_j}(v^\star)\ne\chi_{T_j}(v^\star)$. Thus,
\begin{align*}
    F_M(u^\star,v^\star)-F_T(u^\star,v^\star)
    =
    \sum_{j=1}^k
    \left(
        1-\chi_{S_j}(v^\star)\chi_{T_j}(v^\star)
    \right)  
    \ge
    2\sum_{j\in J}
    \indicator\{\chi_{S_j}(v^\star)\ne \chi_{T_j}(v^\star)\}  
    \ge
    |J|.
\end{align*}
Consequently,
$\norm{F_M-F_T}_{L_\infty}\ge |J|=d_{\mathrm H}(M,T)$. Multiplying by $a$
gives $\norm{f_M-f_T}_{L_\infty}\ge a d_{\mathrm H}(M,T)$.
\end{proof}

\subsection{Proof of \Cref{prop:support-search-main}} \label{app:support-search-main}

\begin{proof}
We use the family from \Cref{lem:systematic-family-main}. Let $k\le s$ be as
in the construction, and set $a\coloneqq 4\eps/k$. Then
$\norm{f_M}_{L_\infty}\le ak=4\eps\le 1$, provided $\eps\le 1/4$, and each
$f_M$ is $k$-sparse. Hence, $f_M\in\cS_{n,s}$.

Let $M=(S_1,\ldots,S_k)$ be uniformly distributed over the parameter set, and
let $\smash[t]{\widehat f}$ be any estimator. Define the projected estimator
\begin{equation*}
    \widehat M
    \in
    \argmin_N
    \norm{\smash[t]{\widehat f}-f_N}_{L_\infty}.
\end{equation*}
We will use Fano's lower bound to show that $\widehat M$ cannot be well estimated, which will prove the same for $\widehat f$. Fix $j\in[k]$ and condition on $M_{-j}=(S_\ell)_{\ell\ne j}$. The remaining
coordinate $S_j$ ranges over an alphabet of size $Q=2^{n-k}$. Take two possible
values $S,S'\subseteq[n-k]$ for this coordinate $j$, and let
\begin{equation*}
    M^S
    \coloneqq
    (S_1,\ldots,S_{j-1},S,S_{j+1},\ldots,S_k),
    \qquad
    M^{S'}
    \coloneqq
    (S_1,\ldots,S_{j-1},S',S_{j+1},\ldots,S_k).
\end{equation*}
Then $f_{M^S}$ and $f_{M^{S'}}$ differ only in their $j$-th Walsh term. More
precisely, for every $(u,v)\in\cube^k\times\cube^{n-k}$,
\begin{equation*}
    f_{M^S}(u,v)-f_{M^{S'}}(u,v)
    =
    a u_j\bigl(\chi_S(v)-\chi_{S'}(v)\bigr).
\end{equation*}
Since $u_j\in\cube$ and $\chi_S(v),\chi_{S'}(v)\in\cube$, this gives
$\norm{f_{M^S}-f_{M^{S'}}}_{L_\infty}\le 2a$. By \Cref{lem:adaptive-kl},
\begin{equation*}
    \operatorname{KL}(P_{M^S}^{(m)}\Vert P_{M^{S'}}^{(m)})
    \le
    \frac{m(2a)^2}{2\sigma^2}
    =
    \frac{2ma^2}{\sigma^2}.
\end{equation*}
Under the sample-size assumption $m\le c\sigma^2 k^2(n-k)/\eps^2$, and since
$a=4\eps/k$, this gives
\begin{equation*}
    \frac{2ma^2}{\sigma^2}
    \le
    32c(n-k).
\end{equation*}
Choosing $c>0$ sufficiently small, we can ensure that
$32c(n-k)\le (\log Q)/16=(n-k) \log(2)/16$. Thus, the average pairwise KL divergence, conditional
on $M_{-j}$, is at most $(\log Q)/16$. By \Cref{lem:fano},
\begin{equation*}
    \prob(\widehat S_j\ne S_j\mid M_{-j})
    \ge
    1-\frac{\frac{1}{16}\log Q+\log 2}{\log Q},
\end{equation*}
where $\widehat S_j$ denotes the $j$-th coordinate of $\widehat M$. For $n-k$
larger than a universal constant, the right-hand side is at least $3/4$. The
remaining finitely many cases are absorbed by decreasing $c$ if necessary.
Hence, for every $j\in[k]$,
\begin{equation*}
    \prob(\widehat S_j\ne S_j\mid M_{-j})
    \ge
    \frac34 .
\end{equation*}
Averaging over $M_{-j}$ and summing over $j\in[k]$, we obtain
\begin{equation*}
    \expectation[d_{\mathrm H}(M,\widehat M)]
    =
    \sum_{j=1}^k
    \prob(\widehat S_j\ne S_j)
    \ge
    \frac{3k}{4}.
\end{equation*}
Since $0\le d_{\mathrm H}(M,\widehat M)\le k$, we also have
\begin{align*}
    \expectation[d_{\mathrm H}(M,\widehat M)]
    &\le
    \frac{k}{2}\prob(d_{\mathrm H}(M,\widehat M)\le k/2)
    +
    k\prob(d_{\mathrm H}(M,\widehat M)> k/2)  \\
    &\le
    \frac{k}{2}
    +
    \frac{k}{2}\prob(d_{\mathrm H}(M,\widehat M)> k/2).
\end{align*}
Combining the two bounds yields
\begin{equation*}
    \prob(d_{\mathrm H}(M,\widehat M)> k/2)
    \ge
    \frac12 .
\end{equation*}

It remains to relate Hamming error to $L_\infty$ error. Suppose that
$\norm{\smash[t]{\widehat f}-f_M}_{L_\infty}\le\eps$. By the definition of
$\widehat M$, we have
$\norm{\smash[t]{\widehat f}-f_{\widehat M}}_{L_\infty}
\le
\norm{\smash[t]{\widehat f}-f_M}_{L_\infty}\le\eps$. Hence, by the triangle
inequality,
\begin{equation*}
    \norm{f_M-f_{\widehat M}}_{L_\infty}
    \le
    \norm{f_M-\smash[t]{\widehat f}}_{L_\infty}
    +
    \norm{\smash[t]{\widehat f}-f_{\widehat M}}_{L_\infty}
    \le
    2\eps
    =
    \frac{ak}{2}.
\end{equation*}
On the other hand, \Cref{lem:systematic-family-main} gives
$\norm{f_M-f_{\widehat M}}_{L_\infty}\ge a d_{\mathrm H}(M,\widehat M)$.
Therefore, the event $\norm{\smash[t]{\widehat f}-f_M}_{L_\infty}\le\eps$
implies $d_{\mathrm H}(M,\widehat M)\le k/2$. Equivalently,
\begin{equation*}
    \{d_{\mathrm H}(M,\widehat M)>k/2\}
    \subseteq
    \{\norm{\smash[t]{\widehat f}-f_M}_{L_\infty}>\eps\}.
\end{equation*}
Since the left-hand event has probability at least $1/2$, the same holds for
the right-hand event. Thus, the Bayes risk over this finite subfamily is at least
$1/2$, and consequently
\begin{equation*}
    \sup_{f\in\cS_{n,s}}
    \prob_f\left(
        \norm{\smash[t]{\widehat f}-f}_{L_\infty}>\eps
    \right)
    \ge
    \frac12 .
\end{equation*}
This proves the proposition.
\end{proof}

\subsection{Proof of \Cref{prop:sparse-highconf-main}} \label{app:sparse-highconf-main}

\begin{proof}
Consider the fixed-support subclass $ \mathcal H_s
    \coloneqq
    \{
        f_\theta(x)=\sum_{j=1}^s \theta_jx_j:
        \norm{\theta}_1\le1
    \}
    \subseteq
    \cS_{n,s}$.
On this subclass, $\norm{f_\theta-f_{\theta'}}_{L_\infty}
=\norm{\theta-\theta'}_1$, since the cube realizes every sign pattern on the first $s$ coordinates.

Remark that we also have $\mathcal{H}_s \subset \cP_{n,1}$. In fact, the proof is the same one-point posterior-variance argument as in
\Cref{prop:one-point-main}. Let $L\coloneqq\log(1/\delta)$, and put a Gaussian
prior $\theta=\mu G$, where $G\sim\N(0,I_s)$ and
\begin{equation*}
    \mu
    \coloneqq
    C_0\frac{\eps}{\sqrt{sL}},
\end{equation*}
for a sufficiently large universal constant $C_0>0$. We first work with the
unconditioned prior. Conditional on any realized transcript $\mathcal H_m$, the
posterior law of $G$ is Gaussian with covariance
\begin{equation*}
    \Sigma_m
    =
    \left(
        I_s
        +
        \frac{\mu^2}{\sigma^2}
        \sum_{t=1}^m
        X_{t,\le s}X_{t,\le s}^\top
    \right)^{-1},
\end{equation*}
where $X_{t,\le s}=(X_{t,1},\ldots,X_{t,s})$. Since
$\norm{X_{t,\le s}}_2^2=s$, if
$m\le c\sigma^2sL/\eps^2$, then, after taking $c>0$ sufficiently small,
the same trace argument as in \Cref{lem:posterior-trace-main} gives
$\tr(\Sigma_m)\ge s/2$.

Averaging over $\eta\sim\Unif(\cube^s)$ gives
$\expectation_\eta[\eta^\top\Sigma_m\eta]=\tr(\Sigma_m)$. Hence, there exists
$\eta^\star\in\cube^s$ such that
$\eta^{\star\top}\Sigma_m\eta^\star\ge s/2$. Therefore, conditional on
$\mathcal H_m$, the posterior variance of $f_{\mu G}(\eta^\star)$ is at least
$\mu^2s/2$. By the Gaussian tail bound used in \Cref{prop:one-point-main}, and
by choosing $C_0$ large enough, every estimator $\smash[t]{\widehat f}$ satisfies
\begin{equation*}
    \prob\left(
        \norm{f_{\mu G}-\smash[t]{\widehat f}}_{L_\infty}>\eps
        \mid \mathcal H_m
    \right)
    \ge
    2\delta .
\end{equation*}
Thus, the unconditioned Gaussian prior has Bayes risk at least $2\delta$.

It remains only to condition the prior so that it is supported on
$\mathcal H_s$. Let $\mathcal E\coloneqq\{\norm{\mu G}_1\le1\}$. Since
$L\ge s$, a standard chi-square tail bound, together with
$\norm{G}_1\le\sqrt{s}\norm{G}_2$, gives
$\prob(\mathcal E^c)\le\delta/2$ whenever $\eps\le\eps_0$, for a sufficiently
small universal constant $\eps_0>0$. Conditioning on $\mathcal E$ therefore
loses at most $\delta/2$ in Bayes risk. The conditioned prior is supported on
$\mathcal H_s\subseteq\cS_{n,s}$, and its Bayes risk is still at least $\delta$.
Consequently,
\begin{equation*}
    \sup_{\xi }\sup_{f\in\cS_{n,s}}
    \prob_f\left(
        \norm{f-\smash[t]{\widehat f}}_{L_\infty}>\eps
    \right)
    \ge
    \delta.\qedhere
\end{equation*}
\end{proof}

\end{document}